\documentclass{article}
\usepackage{iclr2027_conference,times}

\usepackage{amsmath,amsfonts,bm}

\def\eqref#1{equation~\ref{#1}}
\def\Eqref#1{Equation~\ref{#1}}
\def\1{\bm{1}}

\DeclareMathAlphabet{\mathsfit}{\encodingdefault}{\sfdefault}{m}{sl}
\SetMathAlphabet{\mathsfit}{bold}{\encodingdefault}{\sfdefault}{bx}{n}

\newcommand{\E}{\mathbb{E}}

\newcommand{\R}{\mathbb{R}}

\usepackage{amsmath,amssymb,amsthm,mathtools,bm}
\usepackage{booktabs}
\usepackage{tabularx}
\usepackage{graphicx}
\usepackage{wrapfig}
\usepackage{flafter}
\usepackage{float}
\usepackage{caption}
\usepackage{xcolor}
\usepackage{mdframed}
\usepackage{hyperref}
\usepackage{url}

\definecolor{theoryblue}{HTML}{315E7D}
\definecolor{theoryorange}{HTML}{C56B35}
\definecolor{theoryink}{HTML}{26323B}
\definecolor{theorygray}{HTML}{F2F3F5}
\definecolor{theoryline}{HTML}{D9DDE2}

\hypersetup{
  pdftitle={The Composition Gap in Dataset Distillation},
  colorlinks=true,
  linkcolor={red!80!black},
  citecolor={blue!70!black},
  urlcolor={blue!70!black}
}

\newtheoremstyle{plainupright}{}{}{\upshape}{}{\bfseries}{.}{ }{}
\theoremstyle{plainupright}
\newtheorem{theorem}{Theorem}[section]
\newtheorem{lemma}[theorem]{Lemma}
\newtheorem{proposition}[theorem]{Proposition}
\newtheorem{corollary}[theorem]{Corollary}
\theoremstyle{definition}
\newtheorem{definition}[theorem]{Definition}
\newtheorem{example}[theorem]{Example}
\theoremstyle{remark}

\theoremstyle{definition}

\definecolor{boxgray}{HTML}{F2F2F2}
\mdfdefinestyle{resultbox}{
  linewidth=0pt, topline=false, bottomline=false, rightline=false, leftline=false,
  leftmargin=0pt, rightmargin=0pt,
  skipabove=2.5pt, skipbelow=2.5pt,
  innertopmargin=2pt, innerbottommargin=2pt,
  innerleftmargin=5pt, innerrightmargin=5pt,
  backgroundcolor=boxgray
}
\mdfdefinestyle{resultboxnb}{style=resultbox, nobreak=true}
\surroundwithmdframed[style=resultboxnb]{theorem}
\surroundwithmdframed[style=resultbox]{corollary}
\surroundwithmdframed[style=resultbox]{proposition}
\surroundwithmdframed[style=resultbox]{lemma}
\surroundwithmdframed[style=resultbox]{definition}
\surroundwithmdframed[style=resultbox]{example}
\newmdenv[style=resultbox]{interpretationbox}
\newmdenv[style=resultboxnb]{assumptionbox}

\usepackage{titletoc}
\titlecontents{section}[1.4em]{\small\bfseries\vspace{2pt}}{\contentslabel{1.4em}}{\hspace*{-1.4em}}{\titlerule*[0.6pc]{.}\contentspage}
\titlecontents{subsection}[3.2em]{\footnotesize}{\contentslabel{1.8em}}{\hspace*{-1.8em}}{\titlerule*[0.6pc]{.}\contentspage}
\usepackage{siunitx}
\usepackage{placeins}
\usepackage{needspace}
\usepackage{tikz}
\usetikzlibrary{arrows.meta,positioning}
\newcommand{\TrainMap}{\Phi}

\newcommand{\Distill}{\mathcal{A}}

\newcolumntype{L}[1]{>{\raggedright\arraybackslash}p{#1}}
\newcolumntype{Y}{>{\raggedright\arraybackslash}X}

\title{The Composition Gap in Dataset Distillation}

\author{
\begin{minipage}{\textwidth}
Guang Li,
Takahiro Ogawa
\& Miki Haseyama\\
{\normalfont\small
Hokkaido University \quad
\texttt{\{guang,ogawa,mhaseyama\}@lmd.ist.hokudai.ac.jp}}
\end{minipage}
}

\iclrfinalcopy

\begin{document}

\maketitle

\begin{abstract}
Dataset distillation compresses a training set into a small synthetic set, usually evaluated one at a time. In federated and data-governance settings, several parties distill their own data and a user trains on their union. We ask whether the union of separately distilled sets reproduces training on the union of the real data (\emph{composability}) and show that it can fail even when every source is distilled exactly and the total budget admits an exact joint distillate. Compressing a training trajectory into fewer steps transforms the source statistics nonlinearly, so averaging compressed sources differs from compressing their average. For quadratic objectives we derive the exact composition error for two-to-one step compression in terms of the source-Hessian variance and the linear terms of the losses, and on a smooth network at small step sizes this prediction captures the local endpoint discrepancy in magnitude and direction. For learned synthetic sets, however, the composed error decomposes exactly into this local discrepancy and an aggregate source residual. Under endpoint matching the residual exceeds the structural term by more than an order of magnitude, and under distribution matching the two terms partly cancel. Joint distillation also retains an accuracy advantage when both sets are distilled from the same dataset, where the local discrepancy is exactly zero. Training fidelity and downstream accuracy are therefore distinct requirements, neither established by evaluating each set on its own.
\end{abstract}

\section{Introduction}

Dataset distillation (DD), i.e., the compression of a large training set into a small synthetic set that supports effective model training \citep{wang2018dataset,zhao2021dataset,cazenavette2022dataset}, is usually evaluated one distillate at a time. In a distributed application, several data holders instead release independently distilled sets, and a downstream user combines them for training, as in a consortium of hospitals that share only distilled versions of their images \citep{li2022compressed,holland2024collaborative}. The union of the synthetic sets is meant to stand in for the union of the real data. This raises a natural question: \emph{Can independently distilled datasets be combined without loss of training fidelity?} Evaluating each set separately does not answer it (Figure \ref{fig:overview}).

\begin{figure}[t]
\centering
\definecolor{pblue}{HTML}{E3E9F3}\definecolor{pblued}{HTML}{4C72B0}
\definecolor{pcoral}{HTML}{F8E6DC}\definecolor{pcorald}{HTML}{DD8452}
\definecolor{pgray}{HTML}{EAEAF2}\definecolor{pgrayd}{HTML}{6B6B6B}
\begin{tikzpicture}[
  font=\small,
  src/.style={draw=pgrayd, fill=pgray, line width=0.5pt, rounded corners=1pt, inner sep=5pt, minimum height=20pt, minimum width=105pt, align=center},
  ind/.style={draw=pcorald, fill=pcoral, line width=0.5pt, rounded corners=1pt, inner sep=5pt, minimum height=20pt, minimum width=105pt, align=center},
  jnt/.style={draw=pblued, fill=pblue, line width=0.5pt, rounded corners=1pt, inner sep=5pt, minimum height=20pt, minimum width=105pt, align=center},
  arr/.style={-{Latex[length=2mm]}, line width=0.7pt},
  lab/.style={font=\scriptsize\itshape, inner sep=2pt, fill=white}]
\node[src] at (0,0) (S) {sources $D_1,\dots,D_m$\\ \scriptsize Hessians $H_1,\dots,H_m$};
\node[ind] at (10.25,0) (C) {compressed sources\\ \scriptsize $f_p(H_1),\dots,f_p(H_m)$};
\node[jnt] at (0,-1.9) (U) {real union\\ \scriptsize $\bar H=\sum_i\alpha_iH_i$};
\node[jnt] at (5.05,-1.9) (J) {compress the union\\ \scriptsize $f_p(\bar H)\ \rightarrow\ \Phi_{\rm joint}$};
\node[ind] at (10.25,-1.9) (I) {mix the compressed\\ \scriptsize $\sum_i\alpha_i f_p(H_i)\ \rightarrow\ \Phi_{\rm ind}$};
\draw[arr, pcorald] (S) -- node[lab, text=pcorald]{compress each source} (C);
\draw[arr, pcorald] (C) -- node[lab, text=pcorald]{mix} (I);
\draw[arr, pblued] (S) -- node[lab, text=pblued]{mix} (U);
\draw[arr, pblued] (U) -- node[font=\scriptsize\itshape, above=1pt, text=pblued, inner sep=1pt]{compress} (J);
\draw[{Latex[length=1.8mm]}-{Latex[length=1.8mm]}, line width=0.8pt, black!75] (J) -- node[font=\scriptsize\itshape, above=14pt, text=black!75, inner sep=1pt]{composition discrepancy} (I);
\end{tikzpicture}
\caption{Compression and mixing do not commute. Distilling each source and then mixing the distillates (orange) applies the finite-step compression map \(f_p\) to each source Hessian before averaging. Distilling the union (blue) averages first and compresses once. The two paths can yield different training maps, and their difference is the composition discrepancy. The diagram is a schematic of exact quadratic compression at a general ratio \(p\) for sources with a shared optimum; the local endpoint discrepancy \(\Delta\Phi\) measured in Section \ref{sec:experiments} is its two-step instance on real data.}
\label{fig:overview}
\end{figure}

We answer this question through the training maps that gradient descent induces. A source distillate is faithful when a prescribed number of synthetic updates reproduces the endpoint of training on its real source, and composability asks whether the same fidelity holds after the sources and their distillates are combined with matching weights. Our central result is that exact source-wise fidelity does not guarantee composability, even when a joint solution exists within the same total budget.

The discrepancy arises from the order of compression and aggregation. Specifically, for quadratic regression the sufficient statistics defining the loss combine linearly across sources \citep{izzo2023theoretical,gupta2026algorithmic}. Matching two gradient steps with one synthetic step instead preserves a nonlinear transformation of the source curvature. Distilling sources separately applies this transformation before averaging, and distilling their union applies it after averaging (Figure \ref{fig:overview}), so the two orders can disagree even when the sources share a minimizer.

For two-to-one compression of quadratic objectives, we derive an exact endpoint discrepancy that scales with the squared step size (Theorem \ref{thm:structural}). It depends on the source-Hessian variance, which captures differences in both eigenvalues and eigenvectors, and on an affine term coupling the Hessians with the linear loss coefficients.

With distribution matching on CIFAR-10, SVHN and CIFAR-100 \citep{zhao2023distribution}, the excess endpoint error of the union is near zero under IID partitioning and positive under label skew, whereas the accuracy gain of joint distillation persists at every level of skew and even when both distillates are learned from the same dataset, so the curvature mechanism characterizes the training map and not the accuracy gain. Our contributions are summarized as follows:
\begin{itemize}
\item We show that exact source-wise fidelity does not guarantee composability, even when the total budget admits an exact joint distillate. For quadratic objectives the composition error has an exact form in the source-Hessian variance and an affine term that allows noncommuting source Hessians (Theorem \ref{thm:structural}), with a zero-error condition, an extension to other compression ratios, a certificate under approximate distillation (Corollary \ref{cor:detectability}) and a local statistic \(P_{\rm comp}\) computable on a network from source gradients and Hessian-vector products.
\item We decompose the observed error of a union of distillates exactly into the local endpoint discrepancy of the real sources and the aggregate source residual, and measure both on the network. The quadratic prediction is accurate at small step sizes, but for learned synthetic sets the source residual is the larger term: under endpoint matching it exceeds the structural term by more than an order of magnitude, and under distribution matching it is anti-aligned with the local discrepancy under label skew and partly cancels it.
\item We show that training fidelity and downstream accuracy are distinct requirements: the accuracy gain of joint distillation persists for two distillates of the same dataset, where the structural term is exactly zero, and reweighting the constituents after the fact narrows the excess endpoint error, removes at most about half of the squared composed error and gives no detectable accuracy change.
\end{itemize}

\section{Related Work}

Additional related work can be found in Appendix \ref{app:related-extended}.

\paragraph{Dataset distillation.}
DD constructs synthetic sets by unrolled training \citep{wang2018dataset,li2022awesome}, gradient matching \citep{zhao2021dataset,zhao2021siamese}, distribution matching \citep{zhao2023distribution}, kernel-ridge solutions \citep{nguyen2021kernel} or trajectory matching \citep{cazenavette2022dataset,li2024iadd}, among many refinements \citep{lei2024survey}. Theory characterizes what a distillate preserves: a solution at convergence \citep{izzo2023theoretical,nguyen2021kernel}, a loss over random regressors \citep{gupta2026algorithmic}, or a spectrum of the feature correlation matrix \citep{bo2026spectral}.

\paragraph{Distilled data across sources.}
Federated DD trains on synthetic data transmitted by clients \citep{zhou2020distilled,goetz2020synthetic,hu2022fedsynth,xiong2023feddm,liu2023meta,wang2024fedaf,holland2024collaborative,arazzi2025secure,shi2024hfldd}. We characterize the post-hoc composition of such sets under finite-step training. The closest analysis is the drift of local updates in federated optimization \citep{li2020fedavg}, whose endpoint-averaging identity our one-step case recovers, and poor composition of separately distilled sets has been observed empirically \citep{holland2024collaborative}. We add that the failure survives exact constituents and an equal-budget joint solution, and we characterize its size for noncommuting source Hessians and general compression ratios.

\section{Setup}
\label{sec:commutation}

\paragraph{Training maps and distillation.}
Let \(\TrainMap_D^{K,\eta}:\Theta\to\Theta\) denote the map induced by \(K\) full-batch gradient-descent steps of size \(\eta\) on the loss of a dataset \(D\). A DD algorithm returns \(S=\Distill(D)\) such that \(\TrainMap_S^{K,\eta}\) approximates the reference \(\TrainMap_D^{T,\eta}\) over initializations \(\theta_0\sim P_0\), typically with \(K<T\).

\begin{definition}[Endpoint error]\label{def:operator-error}
For two training maps \(\TrainMap_1,\TrainMap_2\), define \(\mathcal C_{P_0}(\TrainMap_1,\TrainMap_2):=\big(\E_{P_0}\|\TrainMap_1(\theta_0)-\TrainMap_2(\theta_0)\|_2^2\big)^{1/2}\) and \(\mathcal E_{P_0}(\TrainMap_1,\TrainMap_2):=\mathcal C_{P_0}(\TrainMap_1,\TrainMap_2)^2\).
\end{definition}

For sources \(D_1,D_2\) with distillates \(S_i=\Distill(D_i)\), the constituent error is \(\epsilon_i=\mathcal E_{P_0}(\TrainMap_{S_i}^{K,\eta},\TrainMap_{D_i}^{T,\eta})\). Write \(D_1\oplus_\alpha D_2\) for the weighted union with loss \(\alpha L_{D_1}+(1-\alpha)L_{D_2}\), where \(\alpha\) is the weight in the union loss rather than the share of stored samples.

\begin{definition}[Composition gap]\label{def:gap}
For synthetic weight \(\beta\) and total budget \(M\ge|S_1|+|S_2|\), with the infimum over weighted synthetic sets of at most \(M\) samples (Appendix \ref{app:framework}), define the independent composition error, the oracle joint error and the composition gap by
\begin{align*}
\mathcal E_{\rm ind}(\alpha,\beta)&:=\mathcal E_{P_0}\big(\TrainMap_{S_1\oplus_\beta S_2}^{K,\eta},\TrainMap_{D_1\oplus_\alpha D_2}^{T,\eta}\big),\\
\mathcal E_{\rm joint}^\star(M)&:=\inf_{|S|\le M}\mathcal E_{P_0}\big(\TrainMap_S^{K,\eta},\TrainMap_{D_1\oplus_\alpha D_2}^{T,\eta}\big),\\
\Gamma_{\rm joint}^\star&:=\mathcal E_{\rm ind}(\alpha,\alpha)-\mathcal E_{\rm joint}^\star(M)\ge0,
\end{align*}
and abbreviate \(\mathcal E_{\rm ind}:=\mathcal E_{\rm ind}(\alpha,\alpha)\).
\end{definition}

The gap is nonnegative since the composed set is an admissible joint candidate, and every exact joint candidate used below is realizable within the same budget (Appendix \ref{app:framework}). The empirical excess \(\widehat\Gamma_{\rm RMS}=\mathcal C_{\rm ind}-\mathcal C_{\rm joint}\) of Section \ref{sec:experiments} replaces the infimum by one joint distillation run and the difference of squared errors by the difference of RMS errors, and can therefore be negative.

\paragraph{Assumptions.}
Each theorem states which of the following assumptions it requires.

\begin{assumptionbox}
\textbf{(A1)} \emph{Quadratic, symmetric, positive branch:} full-batch fixed-step gradient descent on \(L_i(\theta)=\tfrac12\theta^\top H_i\theta-b_i^\top\theta+c_i\), with symmetric \(H_i\), \(\|H_i\|_2\le L\) and \(\eta L<1\).\par\noindent
\textbf{(A2)} \emph{Realizability:} \(H_i\succeq0\) and \(b_i\in\operatorname{range}(H_i)\), so that the transformed quadratics are realizable as finite squared-loss sample sets (Appendix \ref{app:framework}).\par\noindent
\textbf{(A3)} \emph{Common optimum, full coverage:} all sources share \(\theta^*\), and the second moment \(M_0=\E[(\theta_0-\theta^*)(\theta_0-\theta^*)^\top]\) is positive definite.\par\noindent
\textbf{(A4)} \emph{Slowly varying Hessian:} for a non-quadratic \(F\), \(\nabla^2F\) is \(\nu\)-Lipschitz on a ball containing the \(K\)-step trajectory.
\end{assumptionbox}

The two-to-one identity requires only symmetry. The positive-branch condition is required for the arbitrary-ratio and lower-bound results, (A2) ensures sample realizability, (A3) is invoked where stated and (A4) supports the local nonlinear analysis. Appendix \ref{app:optimizers} extends the two-to-one identity to heavy-ball momentum, where the compressed curvature becomes \((2+\beta)H-\eta H^2\) and the discrepancy is unchanged, and to the expectation over independent minibatches.

\paragraph{Universal composability requires affinity.}
A compression rule that composes for every source collection and every weighting must preserve convex averages. Under the positive-branch condition, compressing \(T\) real steps into \(K<T\) synthetic steps transforms a quadratic Hessian through \(H\mapsto f_p(H)=(I-(I-\eta H)^p)/\eta\) with \(p=T/K\), a matrix function \citep{higham2008functions} that satisfies \((I-\eta f_p(H))^K=(I-\eta H)^T\) and is not affine for \(p>1\).

\begin{figure}[t]
\centering
\includegraphics[width=\textwidth]{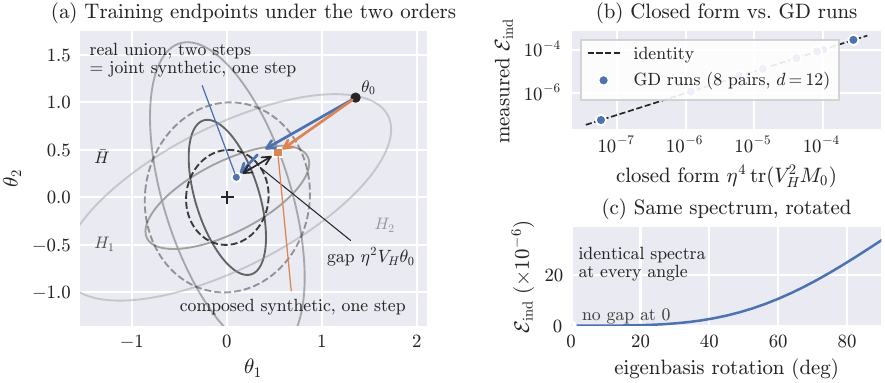}
\caption{Exact constituents compose with an error set by curvature disagreement (Theorem \ref{thm:structural}, quadratic models with a shared optimum at zero). (a) Two sources with different curvature ellipses. The real two-step and joint one-step endpoints coincide (blue), and the composed endpoint is displaced by \(\eta^2V_H\theta_0\) (orange). (b) Endpoint discrepancies of gradient-descent runs on random source pairs against the closed form. (c) Identical spectra with rotated eigenbases still yield a nonzero discrepancy, since \(V_H\) includes eigenvector alignment.}
\label{fig:mechanism}
\end{figure}

\begin{proposition}[Composability--affinity principle]
\label{thm:affinity}
Let \(\mathcal X\) be convex and let \(C:\mathcal X\to\mathcal Y\) be a continuous source-wise compression map into a normed space of synthetic statistics averaged under union. The following are equivalent: (i) every equal-mass pair composes, \(C((x+y)/2)=(C(x)+C(y))/2\); (ii) every finite weighted collection composes; (iii) \(C\) is affine. The same holds for training endpoints when the endpoint map is injective on the relevant convex hull of compressed statistics.
\end{proposition}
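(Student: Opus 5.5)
The plan is to prove the cycle (iii)\(\Rightarrow\)(ii)\(\Rightarrow\)(i)\(\Rightarrow\)(iii), and then transfer the result to endpoints. The first two implications are immediate. If \(C(x)=Ax+c\) with \(A\) linear, then for weights \(\alpha_i\ge0\) with \(\sum_i\alpha_i=1\) we have \(C(\sum_i\alpha_i x_i)=\sum_i\alpha_iA x_i+c=\sum_i\alpha_i C(x_i)\), because the weights sum to one and so \(c\) is reproduced exactly once. This gives (iii)\(\Rightarrow\)(ii). Specializing (ii) to two points with weights \(1/2,1/2\) gives (i).

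The substantive step is (i)\(\Rightarrow\)(iii), i.e.\ that Jensen's functional equation \(C((x+y)/2)=(C(x)+C(y))/2\) together with continuity forces affinity on a convex set. It proceeds in four steps.

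\begin{enumerate}
\item Fix \(x,y\in\mathcal X\) and set \(g(t)=C((1-t)x+ty)\) for \(t\in[0,1]\), which is well defined by convexity.
\item By induction on \(n\), the midpoint identity yields \(g(t)=(1-t)g(0)+t\,g(1)\) for every dyadic \(t=k/2^n\). The inductive step writes each new dyadic point as the midpoint of two adjacent dyadics of the previous level and applies (i) to them.
\item Dyadic rationals are dense in \([0,1]\) and \(g\) is continuous, so the identity extends to all \(t\in[0,1]\). Hence \(C\) preserves every two-point convex combination.
\item A map on a convex set that preserves all two-point convex combinations is affine in the sense of (iii). Finite convex combinations then follow by induction, writing \(\sum_{i=1}^n\alpha_ix_i=(1-\alpha_n)\big(\sum_{i<n}\tfrac{\alpha_i}{1-\alpha_n}x_i\big)+\alpha_nx_n\), which also gives (ii) directly.
\end{enumerate}

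If (iii) is meant as the restriction of an affine map \(x\mapsto Ax+c\) defined on the affine hull, we fix a base point \(x_0\in\mathcal X\) and define \(A\) on the cone \(\{\lambda(x-x_0)\}\) by homogeneity. We then extend additively to the linear span, and check well-definedness from the two-point identity.

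I expect the main obstacle to be this bookkeeping of extending from "preserves convex combinations" to a genuine affine map on the affine hull. It is where the assumptions that \(\mathcal Y\) is a vector space and that \(C\) is continuous along segments are actually used. If the paper takes (iii) to mean preservation of convex combinations, this step reduces to steps 1–4.

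For the endpoint statement, let \(E\) be the endpoint map sending compressed statistics to the synthetic-training endpoint. Composability at the endpoint level means \(E(C(\text{mix}))=E(\text{mix of }C)\). Both arguments lie in the convex hull of the compressed statistics, where \(E\) is injective by hypothesis. Injectivity therefore gives \(C(\text{mix})=\text{mix of }C\), so endpoint-level (i) and (ii) reduce to the statistic-level statements and the equivalences carry over. The converse direction holds trivially, since equal statistics give equal endpoints.
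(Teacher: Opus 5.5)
Your proposal is correct and follows the paper's proof: the trivial implications (iii)\(\Rightarrow\)(ii)\(\Rightarrow\)(i), dyadic midpoint iteration extended to all of \([0,1]\) by continuity for (i)\(\Rightarrow\)(iii), and injectivity of the endpoint map on the convex hull to pull endpoint-level composition back to the statistic-level midpoint identity. Your extension of convex-combination preservation to a genuine affine map on the affine hull is a detail the paper leaves implicit, since it simply concludes that \(C\) is affine on \(\mathcal X\).
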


This is the midpoint form of Jensen's functional equation \citep{aczel1966lectures}. The proof can be found in Appendix \ref{app:framework}.

\section{The Composition Law}
\label{sec:law}

\subsection{One dimension}

Let \(L_a(\theta)=\tfrac a2(\theta-u)^2\) with \(0<\eta a<1\). After \(T\) steps the centered parameter is multiplied by \((1-\eta a)^T\). For \(p=T/K>1\), training for \(K\) steps with the synthetic curvature \(g_p(a):=(1-(1-\eta a)^p)/\eta\) reproduces this factor exactly, so each source admits a zero-error distillate.

\begin{example}[Exact constituents need not compose]
\label{ex:1d}
Take two sources with distinct curvatures \(a_1\ne a_2\) and common optimum \(u\), distill each exactly, and mix the distillates with the correct mass \(\alpha\in(0,1)\). Since \(g_p''(a)=-p(p-1)\eta(1-\eta a)^{p-2}<0\), strict Jensen's inequality gives
\begin{equation}
\alpha g_p(a_1)+(1-\alpha)g_p(a_2)<g_p\!\left(\alpha a_1+(1-\alpha)a_2\right).
\label{eq:scalar-jensen}
\end{equation}
The composed synthetic curvature is smaller than the exact joint curvature \(g_p(\bar a)\), which a single example realizes (Appendix Figure \ref{fig:clock}). Hence \(\epsilon_1=\epsilon_2=0\) and \(\Gamma_{\rm joint}^\star>0\) whenever \(\E_{P_0}[(\theta_0-u)^2]>0\) (Theorem \ref{thm:1d}, Appendix \ref{app:oned}).
\end{example}

For \(a_1=1\), \(a_2=3\), \(\eta=0.1\) and \(p=2\), the exact synthetic curvatures \(1.9\) and \(5.1\) average to \(3.5\), short of the joint value \(g_2(2)=3.6\). For \(p=2\) the shortfall is exactly \(\eta\alpha(1-\alpha)(a_1-a_2)^2\), the scalar form of the curvature variance introduced below (Figure \ref{fig:mechanism}a shows the two-dimensional case). Both distillates are exact and a joint distillate of the same budget exists, so the failure cannot be attributed to poorly optimized source distillates or to an insufficient synthetic budget.

\subsection{Exact law for two-to-one compression}

The scalar result extends to an exact matrix identity without a shared eigenbasis. Let source \(i\) have objective \(L_i(\theta)=\tfrac12\theta^\top H_i\theta-b_i^\top\theta+c_i\) with symmetric \(H_i\). For \(T=2\) and \(K=1\), the individually exact synthetic statistics are
\begin{equation}
\widetilde H_i=2H_i-\eta H_i^2,\qquad\widetilde b_i=(2I-\eta H_i)b_i.
\label{eq:two-to-one-stats}
\end{equation}
For source masses \(\alpha_i>0\) with \(\sum_i\alpha_i=1\), let \(\bar H:=\sum_i\alpha_iH_i\), \(\bar b:=\sum_i\alpha_ib_i\) and
\begin{equation}
V_H:=\sum_i\alpha_iH_i^2-\bar H^2,\qquad w:=\sum_i\alpha_iH_ib_i-\bar H\bar b.
\label{eq:vh-w}
\end{equation}
\(V_H\) is the variance of the source Hessians under the mass weights and \(w\) the corresponding covariance between the Hessians and the linear coefficients; both vanish when the sources coincide.

\begin{theorem}[Structural composition law]
\label{thm:structural}
Let \(\TrainMap_{\rm ind}\) be the one-step map of the union of the exact distillates, with statistics \((\sum_i\alpha_i\widetilde H_i,\sum_i\alpha_i\widetilde b_i)\), and \(\TrainMap_R\) the two-step map of the real union, with statistics \((\bar H,\bar b)\). For symmetric source Hessians, without any commutativity assumption,
\begin{equation}
V_H=\sum_i\alpha_i(H_i-\bar H)^2=\frac12\sum_{i,j}\alpha_i\alpha_j(H_i-H_j)^2\succeq0.
\label{eq:vh}
\end{equation}
The independently composed one-step map and the real-union two-step map satisfy
\begin{equation}
\TrainMap_{\rm ind}(\theta_0)-\TrainMap_R(\theta_0)=\eta^2(V_H\theta_0-w).
\label{eq:affine-law}
\end{equation}
For an initialization distribution with mean \(\mu_0\) and covariance \(\Sigma_0\), the composition error is
\begin{equation}
\mathcal E_{\rm ind}=\eta^4\Big[\underbrace{\operatorname{tr}(V_H^2\Sigma_0)}_{\text{curvature variance}}+\underbrace{\|V_H\mu_0-w\|_2^2}_{\text{mean affine discrepancy}}\Big].
\label{eq:affine-error}
\end{equation}
\end{theorem}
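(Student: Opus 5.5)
The proof is a direct computation in three parts: the variance identity, the affine endpoint law, and the expectation.

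\emph{Variance identity.} We expand \(\sum_i\alpha_i(H_i-\bar H)^2=\sum_i\alpha_i\big(H_i^2-H_i\bar H-\bar H H_i+\bar H^2\big)\). Using \(\sum_i\alpha_i=1\) and \(\sum_i\alpha_iH_i=\bar H\), the two cross terms each sum to \(\bar H^2\), so the whole expression equals \(\sum_i\alpha_iH_i^2-\bar H^2=V_H\). This step never commutes \(H_i\) past \(\bar H\); it only uses linearity on each side.

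For the pairwise form, we expand \(\tfrac12\sum_{i,j}\alpha_i\alpha_j(H_i^2-H_iH_j-H_jH_i+H_j^2)\). The two square terms give \(\sum_i\alpha_iH_i^2\), and the two cross terms give \(\bar H^2\), again with no commutativity needed.

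Positive semidefiniteness follows because each \((H_i-\bar H)^2=(H_i-\bar H)^\top(H_i-\bar H)\succeq0\) when \(H_i\) is symmetric, and a nonnegative combination of such terms stays \(\succeq0\).

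\emph{Endpoint law.} One gradient step on a quadratic with statistics \((H,b)\) maps \(\theta\mapsto\theta-\eta(H\theta-b)=(I-\eta H)\theta+\eta b\).

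For the real union, applying this twice with \((\bar H,\bar b)\) gives
\[
\TrainMap_R(\theta_0)=(I-\eta\bar H)^2\theta_0+\eta(I-\eta\bar H)\bar b+\eta\bar b
=(I-2\eta\bar H+\eta^2\bar H^2)\theta_0+\eta(2I-\eta\bar H)\bar b.
\]

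For the composed distillate, we use \(\sum_i\alpha_i\widetilde H_i=2\bar H-\eta\sum_i\alpha_iH_i^2\) and \(\sum_i\alpha_i\widetilde b_i=2\bar b-\eta\sum_i\alpha_iH_ib_i\). One step then gives
\[
\TrainMap_{\rm ind}(\theta_0)=\Big(I-2\eta\bar H+\eta^2\sum_i\alpha_iH_i^2\Big)\theta_0+2\eta\bar b-\eta^2\sum_i\alpha_iH_ib_i.
\]

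Subtracting the two endpoints, the first-order terms cancel. What remains is \(\eta^2(\sum_i\alpha_iH_i^2-\bar H^2)\theta_0-\eta^2(\sum_i\alpha_iH_ib_i-\bar H\bar b)=\eta^2(V_H\theta_0-w)\). As a sanity check, each source distillate is exact by the same computation with a single source, where \(V_H=0\) and \(w=0\). This confirms the statistics in \eqref{eq:two-to-one-stats}.

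\emph{Error formula.} Write \(\theta_0=\mu_0+\xi\) with \(\E\xi=0\) and \(\operatorname{Cov}\xi=\Sigma_0\). Then the difference is \(\eta^2\big[(V_H\mu_0-w)+V_H\xi\big]\). Squaring its norm and taking expectations, the cross term vanishes because \(\E\xi=0\). The noise term gives \(\E\|V_H\xi\|^2=\operatorname{tr}(V_H^\top V_H\Sigma_0)=\operatorname{tr}(V_H^2\Sigma_0)\), using the symmetry of \(V_H\). Multiplying by the prefactor \(\eta^4\) yields \eqref{eq:affine-error}.

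The only point needing care is to keep the order of every matrix product in the noncommutative expansions, in particular \(H_i\bar H\) versus \(\bar HH_i\) and \(\bar H\bar b\) versus \(\sum_i\alpha_iH_ib_i\). Once that is done, the argument has no real obstacle.
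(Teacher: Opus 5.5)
Your proposal is correct and follows essentially the paper's proof. The paper likewise expands the variance identity and its pairwise form without commuting factors, obtains \(V_H\succeq0\) as a sum of squares of symmetric matrices, and derives the endpoint law by noting that the composed statistics differ from the exact joint one-step statistics \((2\bar H-\eta\bar H^2,(2I-\eta\bar H)\bar b)\) by \((-\eta V_H,-\eta w)\), which is the same computation as your direct expansion of the real two-step map, before using the same mean-plus-centered split of \(\theta_0\) for the error formula.
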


The full proof can be found in Appendix \ref{app:matrix}. In brief, expanding \eqref{eq:two-to-one-stats} shows that the composed statistics \(\sum_i\alpha_i(\widetilde H_i,\widetilde b_i)\) and the exact joint statistics \((2\bar H-\eta\bar H^2,(2I-\eta\bar H)\bar b)\) differ by exactly \((-\eta V_H,-\eta w)\). Substituting this difference into the one-step update yields \eqref{eq:affine-law}, and the expected squared norm gives \eqref{eq:affine-error}. The joint quadratic is realizable under (A2) and the positive branch (Lemma \ref{lem:realizability}).

Under a common optimum \(\theta^*\), \(w=V_H\theta^*\) and the error reduces to
\begin{equation}
\mathcal E_{\rm ind}=\eta^4\operatorname{tr}(V_H^2M_0),\qquad M_0:=\E[(\theta_0-\theta^*)(\theta_0-\theta^*)^\top].
\label{eq:common-optimum}
\end{equation}
Agreement on the optimum thus does not ensure agreement on the endpoint: the discrepancy \(\eta^2V_H(\theta_0-\theta^*)\) vanishes at the optimum and grows with the curvature differences along the directions the initialization occupies. \(V_H\) is the mass-weighted variance of the Hessians as matrices, so it records differences in eigenvalues and eigenvectors alike (Figure \ref{fig:mechanism}c), and \(M_0\) records the directions the initialization covers. The matrix law is not a restatement of the scalar Jensen gap: the Hessians need not commute, the affine term couples them with the linear coefficients, and the lower bound for other ratios (Theorem \ref{thm:defect}) holds without commutativity. The squared error grows as \(\eta^4\), the log--log learning-rate slope of four that Section \ref{sec:experiments} tests.

\begin{interpretationbox}
\textbf{Zero-error condition.} The composition error vanishes if and only if \(V_H\Sigma_0^{1/2}=0\) and \(V_H\mu_0=w\). Under a common optimum and \(M_0\succ0\), it vanishes if and only if all sources have the same Hessian.
\end{interpretationbox}

With \(M_0\succ0\) no direction of curvature disagreement is avoided by the initialization, which makes the second statement an equivalence. With \(\Sigma_0\succ0\) the first condition forces \(V_H=0\) and hence \(w=0\), so identical source Hessians are necessary and sufficient whatever the optima, and the affine condition is independent only for a rank-deficient initialization (Appendix \ref{app:matrix}).

\subsection{Other compression ratios}

Under a common optimum and the positive-branch condition, for any \(p=T/K>1\) and integer \(K\ge1\), the joint and the composed transformed Hessians differ by
\begin{equation}
J_p=\binom p2\eta V_H+R_p,\qquad \|R_p\|_F\le\eta^2Lc_p(\eta L)\sqrt{r_H}\,\|V_H\|_F,
\label{eq:jp}
\end{equation}
where \(c_p\) is the remainder series of Appendix \ref{app:general} and \(r_H=\operatorname{tr}(V_H)^2/\|V_H\|_F^2\) is the effective rank of \(V_H\ne0\). The \(K\)-step endpoint discrepancy admits a lower bound without commutativity (Theorem \ref{thm:defect}), positive for \(\eta Lc_p(\eta L)\sqrt{r_H}<\binom p2\) and unconditional for \(p=2\), so curvature variance governs the leading operator term for every ratio and step count.

\subsection{Approximate distillates}

Perturbing the composed statistics by \((\Delta H,\Delta b)\) changes the RMS error by at most a computable \(\Delta_{\rm aff}\) (Theorem \ref{thm:stability}, Appendix \ref{app:stability}). Writing \(\mathcal C=\sqrt{\mathcal E}\), \(\mathcal C_{\rm struct}\) for the composition error with exact constituents and \(\widehat{\mathcal C}_{\rm comp}\) for the error with the constituents actually produced, the bound yields a certificate.

\begin{corollary}[Sufficient detectability condition]
\label{cor:detectability}
Assume \(T=2\), \(K=1\), a common optimum \(\theta^*\), and that both the ideal and the actual source statistics fix \(\theta^*\). Let \(\mathcal U_{\rm src}=\sum_i\alpha_i\sqrt{\epsilon_i}\) and \(\mathcal C_{\rm struct}=\eta^2\|V_HM_0^{1/2}\|_F\). Then
\begin{equation}
\left|\widehat{\mathcal C}_{\rm comp}-\mathcal C_{\rm struct}\right|\le\mathcal U_{\rm src},\qquad\mathcal C_{\rm struct}>\mathcal U_{\rm src}\Longrightarrow\widehat{\mathcal C}_{\rm comp}\ge\mathcal C_{\rm struct}-\mathcal U_{\rm src}>0.
\label{eq:detectability-bound}
\end{equation}
\end{corollary}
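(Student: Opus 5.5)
The plan is to prove the bound with the triangle inequality in the Hilbert space of square-integrable random vectors, \(\|X\|:=(\E_{P_0}\|X\|_2^2)^{1/2}\). Let \(\TrainMap_{\rm ind}\) be the one-step map of the ideal composed statistics \(\sum_i\alpha_i(\widetilde H_i,\widetilde b_i)\) from Theorem \ref{thm:structural}. Let \(\widehat\TrainMap_{\rm ind}\) be the one-step map of the actual composed statistics \(\sum_i\alpha_i(\widehat H_i,\widehat b_i)\), where \((\widehat H_i,\widehat b_i)\) are the statistics of the distillates actually produced. Let \(\TrainMap_R\) be the real-union two-step map. By definition \(\widehat{\mathcal C}_{\rm comp}=\|\widehat\TrainMap_{\rm ind}(\theta_0)-\TrainMap_R(\theta_0)\|\). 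Under a common optimum, the expression \eqref{eq:common-optimum} gives \(\|\TrainMap_{\rm ind}(\theta_0)-\TrainMap_R(\theta_0)\|=\eta^2\|V_HM_0^{1/2}\|_F=\mathcal C_{\rm struct}\). The reverse triangle inequality then yields
\[
\bigl|\widehat{\mathcal C}_{\rm comp}-\mathcal C_{\rm struct}\bigr|\le\|\widehat\TrainMap_{\rm ind}(\theta_0)-\TrainMap_{\rm ind}(\theta_0)\|,
\]
so it remains to bound the right-hand side by \(\mathcal U_{\rm src}\).

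The key step uses the fact that one-step maps are affine in the statistics. We have \(\TrainMap_{(H,b)}(\theta)=\theta-\eta(H\theta-b)\), so the map of a mixture of statistics is the same mixture of maps: \(\widehat\TrainMap_{\rm ind}=\sum_i\alpha_i\widehat\TrainMap_{S_i}\) and \(\TrainMap_{\rm ind}=\sum_i\alpha_i\TrainMap^{\rm ideal}_i\). The maps and weights here are \(\alpha_i\), not \(\beta\); the composition is taken at matched weight \(\beta=\alpha\). Hence
\[
\widehat\TrainMap_{\rm ind}(\theta_0)-\TrainMap_{\rm ind}(\theta_0)=\sum_i\alpha_i\bigl(\widehat\TrainMap_{S_i}(\theta_0)-\TrainMap^{\rm ideal}_i(\theta_0)\bigr).
\]
Each ideal one-step distillate reproduces the real two-step map of its source exactly, \(\TrainMap^{\rm ideal}_i=\TrainMap^{2,\eta}_{D_i}\), by the construction \eqref{eq:two-to-one-stats}. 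So the \(i\)th summand is exactly the constituent error vector, whose norm is \(\sqrt{\epsilon_i}\). Minkowski's inequality in \(L^2(P_0)\) then gives the bound \(\sum_i\alpha_i\sqrt{\epsilon_i}=\mathcal U_{\rm src}\). This argument is the specialization of the stability bound \(\Delta_{\rm aff}\) (Theorem \ref{thm:stability}) to perturbations \((\Delta H,\Delta b)=\sum_i\alpha_i(\widehat H_i-\widetilde H_i,\widehat b_i-\widetilde b_i)\).

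The hypothesis that both ideal and actual statistics fix \(\theta^*\) is not needed for the triangle-inequality step itself. I expect it is used to make every error vector linear in \(\theta_0-\theta^*\), for example \(\widehat\TrainMap_{S_i}(\theta_0)-\TrainMap^{\rm ideal}_i(\theta_0)=-\eta(\widehat H_i-\widetilde H_i)(\theta_0-\theta^*)\). That way \(\epsilon_i\), \(\mathcal C_{\rm struct}\) and the \(\Delta_{\rm aff}\) bound are all expressed through \(M_0\) rather than through \(\mu_0,\Sigma_0\) separately. I will verify that this representation matches the \(M_0\) form of \(\mathcal C_{\rm struct}\) used in the statement.

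The second claim follows immediately. If \(\mathcal C_{\rm struct}>\mathcal U_{\rm src}\), the lower half of the absolute-value bound gives \(\widehat{\mathcal C}_{\rm comp}\ge\mathcal C_{\rm struct}-\mathcal U_{\rm src}>0\).

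The main obstacle is bookkeeping rather than analysis. I need to confirm that the paper's \(\epsilon_i\) is measured against the same reference \(\TrainMap_{D_i}^{T,\eta}\) and under the same \(P_0\) as the composed error. I also need to confirm that the union uses matched weights \(\beta=\alpha\). If \(\beta\ne\alpha\), an extra term would appear and linearity in the mixture would break.
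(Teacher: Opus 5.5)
Your proof is correct and is essentially the paper's: a reverse triangle inequality against $\TrainMap_R$, plus a triangle inequality for the composed perturbation $\sum_i\alpha_i(\widehat\TrainMap_{S_i}-\TrainMap^{2,\eta}_{D_i})$. The paper writes that step in matrix form as $\eta\|\sum_i\alpha_iE_i^HM_0^{1/2}\|_F\le\sum_i\alpha_i\eta\|E_i^HM_0^{1/2}\|_F$ using the shared fixed point. Your $L^2(P_0)$ Minkowski version shows the fixed-point hypothesis on the actual statistics is needed only for that matrix form, while the common real optimum is still what identifies $\mathcal C_{\rm struct}$ with $\eta^2\|V_HM_0^{1/2}\|_F$. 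One aside is inaccurate, though nothing depends on it: the bound is not a specialization of $\Delta_{\rm aff}$ from Theorem \ref{thm:stability}, which is an operator-norm bound in $\|\Delta H\|_2$, $\operatorname{tr}(\Sigma_0)$ and $\|m_0\|_2$ rather than in the constituent errors $\epsilon_i$.
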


The corollary compares two quantities that can be estimated separately: \(\mathcal C_{\rm struct}\) from the source statistics before any distillation, and \(\mathcal U_{\rm src}\) from the distillates actually produced. When the first exceeds the second, the composed union misses the real-union endpoint by at least their difference. On a network, where the common-optimum assumption is not established, the inequality is a diagnostic rather than a certificate. Constituent errors alone cannot upper-bound the composition error, since exact constituents with separating optima have unbounded composition error (Proposition \ref{prop:no-individual-control}).

\section{From Quadratic Models to Networks}
\label{sec:practical}

\begin{example}[Networks as local quadratic proxies]\label{ex:network}
For a neural loss \(F\) at a reference point \(\theta^\circ\), the frozen quadratic \(Q\) has symmetric Hessian \(H=\nabla^2F(\theta^\circ)\), which need not be positive semidefinite, so \(Q\) is a proxy and a realizable synthetic set only when (A2) holds.
\end{example}

For the objective \(F\) and the frozen quadratic \(Q\) of Example \ref{ex:network}, if \(\nabla^2F\) is \(\nu\)-Lipschitz on a ball of radius \(r\) centered at \(\theta^\circ\) that contains the \(K\)-step trajectory, and \(\|I-\eta H\|_2\le\rho\), then
\begin{equation}
\|\TrainMap_F^{K,\eta}(\theta_0)-\TrainMap_Q^{K,\eta}(\theta_0)\|_2\le\frac{\eta\nu r^2}{2}\sum_{t=0}^{K-1}\rho^t.
\label{eq:nonlinear-remainder}
\end{equation}
The nonlinear RMS composition error thus differs from its quadratic counterpart by at most \(\delta_S+\delta_R\), the sum of the synthetic and real training remainders (Theorem \ref{thm:nonlinear}, Appendix \ref{app:nonlinear}).

The structural term of the local model, that is, of the frozen quadratic at the initialization, can be evaluated at a shared parameter without a common optimum and without forming Hessians. Writing \(g_i=H_i\theta_0-b_i\) and \(\bar g=\sum_i\alpha_ig_i\), define
\begin{equation}
r_{\rm comp}(\theta_0):=\sum_i\alpha_iH_i(g_i-\bar g)=V_H\theta_0-w,\qquad P_{\rm comp}:=\eta^4\E_{P_0}\|r_{\rm comp}(\theta_0)\|_2^2,
\label{eq:pcomp}
\end{equation}
so that \(P_{\rm comp}=\mathcal E_{\rm ind}\) for ideal two-to-one quadratic compression and \(P_{\rm comp}=\eta^4\operatorname{tr}(V_H^2M_0)\) under a common optimum. Given the source gradients, each evaluation of \(r_{\rm comp}\) costs one Hessian-vector product per source \citep{pearlmutter1994fast} (Appendix \ref{app:e1-protocol}). The observed RMS error satisfies
\begin{equation}
\left|\mathcal C_{\rm obs}-\sqrt{P_{\rm comp}}\right|\le\Delta_{\rm aff}+\delta_S+\delta_R
\label{eq:obs-bound}
\end{equation}
(Corollary \ref{cor:proxy-gap}), where \(\Delta_{\rm aff}\) is the error of the distilled statistics and \(\delta_S,\delta_R\) are the nonlinear remainders of synthetic and real training. Thus \(\sqrt{P_{\rm comp}}\) approximates the observed RMS error when all three are small relative to it, and Section \ref{sec:experiments} measures the source matching errors and their contribution.

\section{Experimental Results}
\label{sec:experiments}

The experiments have three tasks (settings and additional results in Appendix \ref{app:experiments}): verifying the exact law on quadratic sources with learned synthetic sets, testing the local approximation on a network by comparing the local endpoint discrepancy \(\Delta\Phi(\theta_0)\) of the real sources, which involves no synthetic set, with the prediction \(\eta^2r_{\rm comp}(\theta_0)\), and decomposing the composition error of learned synthetic sets, by endpoint matching on the same network and by distribution matching on CIFAR-10, SVHN and CIFAR-100.

\paragraph{Learned quadratic distillates reproduce the law once each source is matched.}
\label{sec:e5}
For twenty random pairs of quadratic sources in \(d=12\) dimensions, \(n_{\rm syn}\) squared-loss samples per source are optimized on the two-step versus one-step endpoint discrepancy over random initializations, without access to the target statistics (Appendix \ref{app:e5}). Once the budget reaches the dimension, the learned Gram matrices recover \(2H-\eta H^2\) to machine precision, and the union of the learned sets then incurs exactly the composition error of Theorem \ref{thm:structural}, with \(V_H=\tfrac14(H_1-H_2)^2\), in every pair (Appendix Table \ref{tab:e5}). Below the dimension the constituents are inexact, and the composed error exceeds the closed form by a factor of \(4.8\) to \(11.7\) that grows as the budget shrinks.

\begin{figure}[t]
\centering
\includegraphics[width=\textwidth]{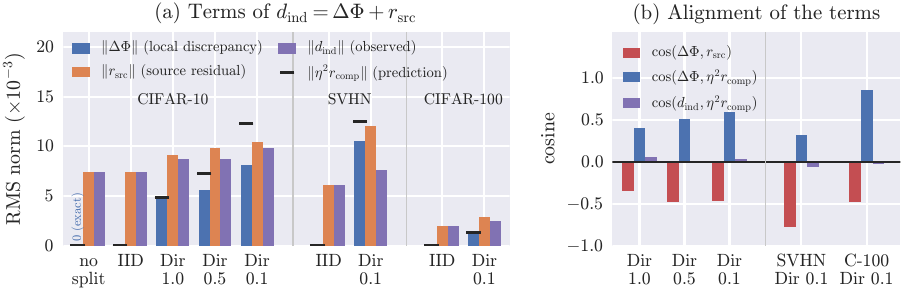}
\caption{The observed error of a union of distillates is the net of the local discrepancy and the source residual. Exact decomposition \(d_{\rm ind}=\Delta\Phi+r_{\rm src}\) on the distribution-matching sets of Section \ref{sec:operator}, with the step calibration described there (CIFAR-10, SVHN and CIFAR-100; level means over partitions, Appendix Table \ref{tab:e9}). (a) RMS norms of the local discrepancy \(\Delta\Phi\), the aggregate source residual \(r_{\rm src}\) and the observed error \(d_{\rm ind}\), with the quadratic prediction \(\eta^2r_{\rm comp}\) marked; the local discrepancy is exactly zero in the no-split control and \(0.013\)--\(0.018\times10^{-3}\) at IID, below the resolution of the axis. The residual is the larger term at every level, and \(\|d_{\rm ind}\|<\|r_{\rm src}\|\) under skew. (b) Cosines at the skewed levels: the two terms are anti-aligned, the prediction is partly aligned with \(\Delta\Phi\) at this step size (\(\eta L\approx1\)) and nearly orthogonal to \(d_{\rm ind}\).}
\label{fig:decomp}
\end{figure}

\begin{figure}[t]
\centering
\includegraphics[width=\textwidth]{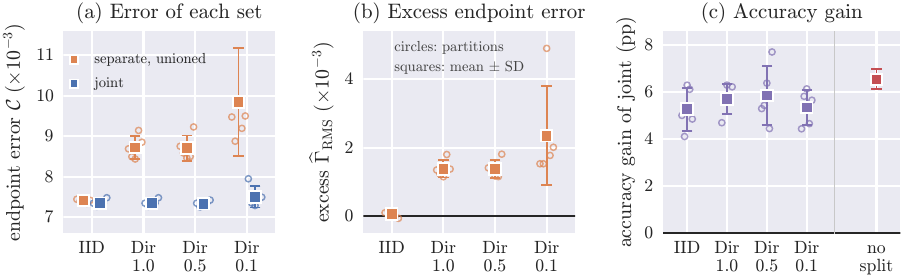}
\caption{The excess endpoint error grows with label skew, whereas the accuracy gain of joint distillation persists in the no-split control. Independent versus joint distillation on CIFAR-10 with distribution matching at ten images per class per source. Each skew level has five partitions (circles), and squares with bars give their mean and SD. Within a partition, endpoint errors (after the per-set step calibration of Section \ref{sec:operator}) are RMS over three initializations and accuracies average three evaluation seeds. (a) RMS endpoint error of the separately distilled union and of the joint set against the real-union two-step map. (b) Their difference, the excess endpoint error. (c) Accuracy gain of joint over separate-then-union distillation. The no-split control (red) has three replicates of the full dataset, and its bar is the SD over replicates.}
\label{fig:two}
\end{figure}

\paragraph{The quadratic prediction captures local network dynamics at small step sizes.}
\label{sec:e1}
Network experiments use two-source CIFAR-10 \citep{krizhevsky2009cifar} partitions, IID or class-wise Dirichlet with concentration \(1.0\), \(0.5\) or \(0.1\) (Dir-1.0 to Dir-0.1), and a three-block ConvNet (Appendix \ref{app:experiments}). The main endpoint measurements use two real steps (\(T=2\); Appendix \ref{app:e10} also examines \(T=4\), \(K=2\)). The local endpoint discrepancy \(\Delta\Phi(\theta_0)\) applies the construction of Theorem \ref{thm:structural} to the network, and its RMS is the local reference \(\mathcal C_{\rm loc}\), equal to \(\mathcal C_{\rm struct}=\sqrt{P_{\rm comp}}\) for quadratics:
\begin{equation}
\mathcal C_{\rm loc}=\bigl(\E_{\theta_0}\|\Delta\Phi(\theta_0)\|^2\bigr)^{1/2},\qquad \Delta\Phi(\theta_0)=\sum_i\alpha_i\Phi_{D_i}^{2,\eta}(\theta_0)-\Phi_{D_1\oplus_\alpha D_2}^{2,\eta}(\theta_0).
\label{eq:cloc}
\end{equation}
For a union of synthetic sets the discrepancy against the same target decomposes exactly as
\begin{equation}
d_{\rm ind}(\theta_0)=\Delta\Phi(\theta_0)+r_{\rm src}(\theta_0),\qquad r_{\rm src}(\theta_0)=\sum_i\alpha_i\bigl(\Phi^{1,\eta}_{S_i}(\theta_0)-\Phi^{2,\eta}_{D_i}(\theta_0)\bigr),
\label{eq:decomp}
\end{equation}
so three quantities appear below: \(\eta^2r_{\rm comp}(\theta_0)\) is the prediction from local gradients and curvature, \(\Delta\Phi(\theta_0)\) is the real-data discrepancy between separate training followed by averaging and training on the union, and \(d_{\rm ind}(\theta_0)\) is the error of the union of the distillates against the real-union target. The weighted source error \(\mathcal U_{\rm src}=\sum_i\alpha_i\|\Phi^{1,\eta}_{S_i}-\Phi^{2,\eta}_{D_i}\|_{L^2(P_0)}\) bounds \(\|r_{\rm src}\|_{L^2(P_0)}\) from above, and the residual falls below the bound when the source errors partly cancel.

We compare \(\Delta\Phi(\theta_0)\) with its prediction \(\eta^2r_{\rm comp}(\theta_0)\) at \(\eta_{\rm ref}=7.8\times10^{-4}\) and above. Within \(\eta L<1\), SiLU and tanh networks give squared-error exponents of \(3.71\pm0.08\) and \(3.86\pm0.06\) at Dir-0.1, against the quadratic value of four, and the calibration ratio is within about three percent of one at \(\eta_{\rm ref}\) (Appendix Figure \ref{fig:e1-full}). On the SiLU network at \(\eta_{\rm ref}\) the prediction also matches the direction of \(\Delta\Phi(\theta_0)\), with cosine \(1.000\) and a relative residual of \(2\) to \(4\) percent, and it degrades as the step size grows, overshooting \(\Delta\Phi\) by a factor of \(1.9\) at \(16\eta_{\rm ref}\) (Appendix \ref{app:e10}, Figure \ref{fig:e10}b).

\paragraph{Source residuals exceed the local discrepancy in the endpoint-matching experiments.}
\label{sec:e10}
Synthetic sets learned on the same network by matching one synthetic step to the two real steps at one, five or ten images per class per source retain a relative source error of \(0.5\) at one image per class and \(0.25\) to \(0.39\) at five and ten (Appendix \ref{app:e10}, Figure \ref{fig:e10}). At Dir-0.1 the aggregate source residual \(\|r_{\rm src}\|\) is \(24\), \(13\) and \(13\) times \(\mathcal C_{\rm struct}\) at one, five and ten images per class, the inequality of Corollary \ref{cor:detectability} is satisfied for none of \(96\) initializations at \(\eta_{\rm ref}\), and the composed discrepancy is nearly orthogonal to the prediction. For these sets the aggregate source residual \(r_{\rm src}\), not the structural term, sets the composed error.

\paragraph{The excess endpoint error of the union grows with label skew.}
\label{sec:operator}
Distribution matching (DM) produces synthetic sets as in practice, here on a ReLU network at \(\eta=6.25\times10^{-3}\) at its official schedule with ten images per represented class per source. The joint set is distilled from the union at the same total budget (Appendix \ref{app:e2}). DM sets are not trained for any step size, so each set is applied with the scalar step multiplier that best matches its own real two-step target at each evaluation initialization, the joint set being calibrated against the union target (Appendix \ref{app:repair}). Without it all errors rise and the excess shrinks but stays positive at every skewed level (Appendix \ref{app:e9}). Below, \(\Phi^{1,\eta}_{S}\) denotes this calibrated map and \(\mathcal C_{\rm ind}\), \(\mathcal C_{\rm joint}\) and \(\widehat\Gamma_{\rm RMS}=\mathcal C_{\rm ind}-\mathcal C_{\rm joint}\) the resulting endpoint errors (Definition \ref{def:operator-error}).

Joint distillation uses the same full training set at every level and its endpoint error stays within \(7.3\)--\(7.5\times10^{-3}\) (Figure \ref{fig:two}a). The error of the separately distilled union matches it under IID and rises to \(9.85\times10^{-3}\) at Dir-0.1. The excess is near zero under IID partitioning and positive and an order of magnitude larger on every label-skewed partition (Figure \ref{fig:two}b). SVHN and CIFAR-100 replicate this pattern, the CIFAR-100 excess rising from \(0.09\) at IID to \(0.56\times10^{-3}\) at Dir-0.1 and positive in nine of nine skewed cells (Appendix \ref{app:svhn}).

Both errors are comparable to \(\mathcal C_{\rm struct}\) and \(\mathcal U_{\rm src}\) is about three times \(\mathcal C_{\rm struct}\), so the inequality of Corollary \ref{cor:detectability} is not satisfied.

\paragraph{The observed discrepancy is the net of the local discrepancy and the source residual.}
By \eqref{eq:decomp} the excess is \(\widehat\Gamma_{\rm RMS}=\|\Delta\Phi+r_{\rm src}\|_{L^2(P_0)}-\|d_{\rm joint}\|_{L^2(P_0)}\) (Appendix \ref{app:e9}), and \(\|d_{\rm ind}\|<\|r_{\rm src}\|\) requires \(\langle\Delta\Phi,r_{\rm src}\rangle<-\tfrac12\|\Delta\Phi\|^2\), a net cancellation between the two terms. The DM sets show this cancellation (Figure \ref{fig:decomp}): the measured \(\|d_{\rm ind}\|\) is smaller than \(\|r_{\rm src}\|\) at every skewed level, and the cosine between \(\Delta\Phi\) and \(r_{\rm src}\) averages \(-0.35\) to \(-0.48\) and is negative for all \(45\) skewed partition--seed pairs. At this step size (\(\eta L\approx1\)) the quadratic model is a proxy rather than a calibrated predictor: \(\eta^2r_{\rm comp}\) matches \(\Delta\Phi\) in magnitude but only partly in direction and is nearly orthogonal to \(d_{\rm ind}\) (Figure \ref{fig:decomp}b). Reweighting the constituents narrows the excess without removing it: class weights fitted on held-out initializations remove \(15\) to \(25\) percent of the union's squared error at the skewed levels and cut the mean excess over the joint set to \(0.1\)--\(0.7\times10^{-3}\), still positive at every skewed level, and accuracy does not change detectably (Appendix \ref{app:reweight}).

\paragraph{The accuracy advantage persists without source heterogeneity.}
\label{sec:accuracy}
On CIFAR-10, joint distillation improves accuracy by \(5.3\) to \(5.9\) points at every heterogeneity level (Figure \ref{fig:two}c, Appendix Table \ref{tab:e2-gap}). With both independent distillates learned from the full training set, the local discrepancy is exactly zero, yet joint distillation still improves accuracy by \(6.6\pm0.4\) points (Appendix \ref{app:nosplit}). Between IID and Dir-0.1 the local reference \(\mathcal C_{\rm loc}\) grows five-hundredfold while the mean gain stays between \(5\) and \(6\) points: heterogeneity is not necessary for the gain. Matching the per-class counts of the joint set to those of the composed set at Dir-0.1 (Appendix \ref{app:balanced}) leaves the excess positive while the accuracy gain on the two affected partitions falls by \(3.5\) and \(1.9\) points, so much of that gain reflects the class allocation. Continuing distillation from the union of two distillates for \(2{,}000\) iterations recovers \(0.4\) points of the no-split gap, against \(1.8\) for a set initialized from real images over the same iterations (Appendix \ref{app:warm}). On the ConvNet the CIFAR-10 and CIFAR-100 gains do not increase with skew, whereas on SVHN \citep{netzer2011svhn} the gain increases with skew, and on ResNet-18, where the same sets gain \(1.8\) to \(3.4\) points, the Dir-0.1 gain exceeds the IID gain on every partition (Appendices \ref{app:svhn} and \ref{app:crossarch}).

\section{Conclusion and Limitations}

We showed that independently distilled datasets can each reproduce training on their own source exactly and still fail when combined, even when the total budget admits an exact joint distillate. For quadratic objectives the failure has an exact form, the difference between averaging compressed source statistics and compressing their average, and learned quadratic distillates reproduce it. On a smooth network at small step sizes the quadratic prediction captures the local endpoint discrepancy in magnitude and direction. For learned synthetic sets the source residual, not the structural term, sets the composed error. Under endpoint matching it exceeds the structural term by more than an order of magnitude. Under distribution matching it partly offsets the local discrepancy. The accuracy gain of joint distillation persists even for two distillates of the same dataset. Single-source fidelity is therefore insufficient to judge composite fidelity, and training fidelity and downstream accuracy are distinct requirements, neither of which is established by evaluating each distillate on its own.

The exact results concern fixed-step gradient descent on quadratic objectives (A1) and apply to a network only locally (A4); adaptive optimizers and long horizons lie outside them (Appendix \ref{app:limitations}). One distillation method was used at the official schedule, and the network experiments have not entered the regime in which the structural term dominates; whether a distillation method can reach it is open, as is whether multi-step trajectory matching couples short-horizon fidelity to accuracy. For a downstream user the practical consequence is that a union of independently produced distillates should be validated as a whole against a reference on the union, since the fidelity of each constituent certifies neither the endpoint nor the accuracy of their combination.

% \clearpage

\bibliography{refs}
\bibliographystyle{iclr2027_conference}

\clearpage
\appendix
\mdfapptodefinestyle{resultboxnb}{nobreak=false}
\begin{center}{\Large\bfseries Appendix}\end{center}
\vspace{4pt}
\startcontents[appendix]
\printcontents[appendix]{}{1}[1]{\section*{Contents}\vspace{-4pt}}
\vspace{6pt}

\section{Extended Related Work}
\label{app:related-extended}

\paragraph{Dataset distillation objectives.}
Beyond the objectives cited in the main text, feature-alignment and
contrastive objectives \citep{wang2022cafe,sajedi2023datadam,ma2026fd}, kernel and
random-feature regression \citep{nguyen2021infinite,zhou2022frepo},
difficulty- and representativeness-aware matching
\citep{guo2024difficulty,liu2023dream,li2025diff,ye2025igds}, and large-scale, generative and
sample-weighting formulations
\citep{kim2022efficient,du2023trajectory,yin2023squeeze,li2024generative,su2024generative,zou2025dataset,cai2026evlf}
have broadened the setting (benchmark: \citealp{cui2022dcbench,li2025dd}; surveys:
\citealp{yu2024review,lei2024survey}), and closed-form linear-probe
distillation for pre-trained backbones \citep{peng2026closedform} shares
our aim of making part of the problem exactly solvable. In all of these
settings the evaluation unit is a single synthetic set.

\paragraph{Sharing distilled data across institutions.}
Distilled sets have been proposed as an anonymized, compact substitute for
sharing clinical images across institutions
\citep{li2020soft,li2022compressed,li2023medical}, where the real union is never
available and the downstream user receives independently distilled sets.
Federated variants \citep[building on federated averaging;][]{mcmahan2017communication}
apply a similar approach to client data, either once
\citep{zhou2020distilled,song2023fedd3} or iteratively
\citep{goetz2020synthetic,hu2022fedsynth,xiong2023feddm,wang2024distdd},
with the server training on the union. \citet{liu2023meta} observe that
locally condensed sets remain heterogeneous and reweight them,
\citet{wang2024fedaf} condense collaboratively, and
\citet{holland2024collaborative} report that naive merging underperforms.
Two recent designs are closest to our setting. \citet{arazzi2025secure}
distill a single synthetic set collaboratively, coordinating clients during
condensation and applying a differential-privacy mechanism to the shared
updates; because the set is jointly optimized, no post-hoc union occurs.
\citet{shi2024hfldd} partition clients into clusters, transmit
independently distilled sets to a cluster head, and train on their union
at the head, with clusters arranged so that the union is approximately
label-balanced. This is a post-hoc union of independent distillates, but
their convergence analysis concerns the federated model rather than the
training map induced by the union. In continual learning, condensed
examples are accumulated as replay memories
\citep{rosasco2022distilled,deng2022remember,yang2023efficient}, and
\citet{yang2024what} and \citet{chen2025curriculum} report that mixing
distilled and real samples can degrade performance.

\paragraph{Merging and mergeability.}
Mergeable summaries and composable coresets provide task-specific
guarantees that survive union by construction
\citep{agarwal2013mergeable,indyk2014composable,feldman2020tiny}.
Standard source-wise distillation objectives do not provide such
guarantees on their own, because they compress a training procedure rather
than a statistic. Model-merging studies address mode connectivity and
weight averaging \citep{wortsman2022soups}, task
arithmetic
\citep{ilharco2023task}, permutation symmetry \citep{ainsworth2023rebasin},
and curvature-aware weighting \citep{matena2022fisher}. The last is the
closest analogue: Fisher-weighted averaging corrects, to first order, a
curvature mismatch between separately fine-tuned parameters. In our
setting the mismatch lies in the data, and Theorem \ref{thm:structural}
expresses it in closed form as the variance of the source Hessians.

\paragraph{Theory of distillation.}
Existing analyses ask what a distillate can represent at a converged
solution, for example exact recovery of the empirical-risk minimizer in
linear and kernel settings \citep{izzo2023theoretical,nguyen2021kernel}, or
in closed form for a frozen backbone \citep{peng2026closedform}. We instead
fix an optimizer and a finite horizon and ask whether compression commutes
with source mixing.

\section{Formal Framework and Finite-Sample Realizability}
\label{app:framework}

\begin{table}[h]
\caption{Notation.}
\label{tab:notation-appendix}
\centering
\small
\begin{tabular}{@{}ll@{}}
\toprule
Symbol & Meaning\\
\midrule
\multicolumn{2}{@{}l}{\emph{Maps}}\\
\(\TrainMap_D^{K,\eta}\) & \(K\)-step full-batch GD map at step size \(\eta\) on dataset \(D\)\\
\(\TrainMap_{\rm ind},\ \TrainMap_R\) & composed synthetic one-step map; real-union two-step map\\
\(f_p(H)\) & time-compression transform \((I-(I-\eta H)^p)/\eta\), \(p=T/K\)\\
\addlinespace
\multicolumn{2}{@{}l}{\emph{Errors}}\\
\(\mathcal E,\ \mathcal C=\sqrt{\mathcal E}\) & squared endpoint error over \(\theta_0\sim P_0\); its RMS form\\
\(\epsilon_i\) & squared endpoint error of constituent \(i\)\\
\(\mathcal E_{\rm ind},\ \mathcal C_{\rm ind}\) & error of the independently composed union\\
\(\Gamma^\star_{\rm joint}\) & oracle composition gap, \(\mathcal E_{\rm ind}-\mathcal E^\star_{\rm joint}(M)\)\\
\(\widehat\Gamma_{\rm RMS}\) & empirical excess endpoint error, \(\mathcal C_{\rm ind}-\mathcal C_{\rm joint}\)\\
\(\widehat\Gamma_{\rm mean}\) & mean over seeds of the per-seed endpoint-distance difference (Appendix \ref{app:experiments})\\
\addlinespace
\multicolumn{2}{@{}l}{\emph{Structural quantities}}\\
\(V_H,\ w\) & curvature variance and affine conflict, Eq.~(\ref{eq:vh-w})\\
\(\mathcal C_{\rm struct},\ \widehat{\mathcal C}_{\rm comp}\) & quadratic-model RMS gap with exact / actual constituents\\
\(\mathcal C_{\rm loc}\) & measured local reference (network, exact constituents)\\
\(P_{\rm comp}\) & HVP-computable estimate of \(\mathcal C_{\rm struct}^2\) for \(T=2,K=1\)\\
\(r_H\) & effective rank of \(V_H\)\\
\(M_0\) & second-moment matrix of the initialization about \(\theta^*\)\\
\addlinespace
\multicolumn{2}{@{}l}{\emph{Bounds}}\\
\(\mathcal U_{\rm src}\) & weighted constituent RMS, \(\sum_i\alpha_i\sqrt{\epsilon_i}\)\\
\(\Delta_{\rm aff},\ \delta_S,\ \delta_R\) & affine-stability and nonlinear remainders\\
\(\zeta\) & their sum, the constituent-plus-nonlinear uncertainty\\
\bottomrule
\end{tabular}
\end{table}

\subsection{Weighted unions and training maps}

A weighted dataset is a finite collection of examples with nonnegative weights
that sum to one. The loss of \(D_1\oplus_\alpha D_2\) is
\(\alpha L_{D_1}+(1-\alpha)L_{D_2}\). This definition separates the intended
source mass from the cardinality of a stored synthetic set. Naive concatenation
is the special case in which empirical sample counts determine the weights.

Throughout the theory, \(\TrainMap_D^{K,\eta}\) denotes full-batch gradient
descent with fixed learning rate \(\eta\), initialized from \(P_0\). All
endpoint comparisons use the same parameterization and initialization
distribution. The RMS quantity \(\mathcal C_{P_0}\) obeys the triangle inequality as an
\(L_2(P_0)\) norm. Its square \(\mathcal E_{P_0}\) is used for exact error
formulas.

The oracle joint error is no larger than the independent error when
\(M\ge\sum_i|S_i|\), since the independently composed set is itself feasible
in the oracle optimization. For the exact quadratic constructions, the next
realizability lemma (Lemma \ref{lem:realizability}) supplies a joint candidate within this budget rather than relying only
on the trivial feasibility argument.

\subsection{Proof of the composability--affinity principle}

\begin{proof}[Proof of Proposition \ref{thm:affinity}]
Condition (ii) immediately implies (i), while (iii) implies (ii). It remains
to show that (i) implies (iii). Repeated midpoint composition gives, for every
dyadic rational \(q\in[0,1]\),
\begin{equation}
C(qx+(1-q)y)=qC(x)+(1-q)C(y).
\end{equation}
Continuity of \(C\) extends the identity from dyadic rationals to every
\(q\in[0,1]\), so \(C\) is affine on \(\mathcal X\). If composition is
specified only through a downstream endpoint map \(E\), injectivity of \(E\)
on the relevant convex hull turns
\begin{equation}
E(C((x+y)/2))=E((C(x)+C(y))/2)
\end{equation}
back into the statistic-level midpoint identity, and the same argument
applies. Injectivity is a property of the induced operator map
\(\theta\mapsto\Phi^{K,\eta}(\theta)\) on \(\Theta\), or of its restriction to
an initialization distribution with \(M_0\succ0\), not of an endpoint at a
single \(\theta_0\); for quadratics it holds under (A3) together with the
positive-branch condition on the compressed statistics, whereas for
overparameterized networks the map from surrogate statistics to
trajectories is generally not injective, so the equivalence is an algebraic
property of the quadratic model rather than an impossibility result for
networks.
\end{proof}

\subsection{Finite-sample realizability}
\label{app:realizability-lemma}

\begin{lemma}[Quadratic objectives are finite synthetic datasets]
\label{lem:realizability}
Let
\begin{equation}
L(\theta)=\frac12\theta^\top H\theta-b^\top\theta+c,
\qquad H\succeq0,\quad b\in\operatorname{range}(H).
\end{equation}
Up to an additive constant, \(L\) is the sum of
\(r=\operatorname{rank}(H)\) scalar squared-loss examples. For
\(p=T/K>1\) and \(0\preceq\eta H\prec I\), the transformed statistics
\begin{equation}
\widetilde H=f_p(H),
\qquad
\widetilde b=f_p(H)H^\dagger b
\end{equation}
have the same range and rank as \(H\), reproduce \(T\) real steps in \(K\)
synthetic steps, and are realizable with the same budget.
For positive-mass sources \((H_i,b_i)\), the exact transformed real-union
objective needs at most \(\sum_i\operatorname{rank}(H_i)\) examples.
\end{lemma}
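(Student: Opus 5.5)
The plan is to work in the eigenbasis of \(H\) and prove the claims in the order stated: realizability of \(L\) itself, then the spectral properties of \(f_p(H)\) and the \(T\)-to-\(K\) reproduction, and finally the budget count for the joint objective.

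\textbf{Realizability.} Write \(H=\sum_{k=1}^r\lambda_k u_ku_k^\top\) with \(\lambda_k>0\) and orthonormal \(u_k\). Since \(b\in\operatorname{range}(H)\), the point \(\theta^\dagger=H^\dagger b\) satisfies \(H\theta^\dagger=b\). Then
\[
L(\theta)=\tfrac12(\theta-\theta^\dagger)^\top H(\theta-\theta^\dagger)+\text{const}
=\sum_{k=1}^r\tfrac12\bigl(x_k^\top\theta-y_k\bigr)^2+\text{const},
\]
with \(x_k=\sqrt{\lambda_k}\,u_k\) and \(y_k=x_k^\top\theta^\dagger\). This exhibits \(L\) as \(r\) scalar squared-loss examples. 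If the weighted-dataset convention requires weights summing to one, I will absorb the weights into the scale of the \(x_k\).

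\textbf{Spectral properties of \(f_p(H)\).} Since \(f_p(H)\) is a matrix function of \(H\), it has eigenvectors \(u_k\) and eigenvalues \(g_p(\lambda_k)=(1-(1-\eta\lambda_k)^p)/\eta\).
\begin{itemize}
\item On the positive branch \(0<\eta\lambda_k<1\), we have \(0<(1-\eta\lambda_k)^p<1\) for real \(p>1\), so \(g_p(\lambda_k)\in(0,1/\eta)\).
\item On the kernel of \(H\), \(g_p(0)=0\).
\end{itemize}
Hence \(f_p(H)\succeq0\) with the same range and rank, and \(\eta f_p(H)\prec I\). The linear term \(\widetilde b=f_p(H)H^\dagger b=f_p(H)\theta^\dagger\) lies in \(\operatorname{range}(f_p(H))\). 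The transformed quadratic therefore has minimizer set \(\theta^\dagger+\ker H\), which is the same as that of \(L\).

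\textbf{Reproduction of \(T\) steps in \(K\).} One GD step on \((\widetilde H,\widetilde b)\) acts as
\[
\theta\mapsto\theta^\dagger+(I-\eta f_p(H))(\theta-\theta^\dagger).
\]
The component in \(\ker H\) is untouched by both maps. On the range, \(I-\eta f_p(H)=(I-\eta H)^p\), so \(K\) synthetic steps give \((I-\eta H)^{pK}=(I-\eta H)^T\), matching \(T\) real steps. Applying the realizability step to \((\widetilde H,\widetilde b)\) then gives the same budget \(r\).

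\textbf{Joint union count.} Set \(\bar H=\sum_i\alpha_iH_i\succeq0\) and \(\bar b=\sum_i\alpha_ib_i\). Because each \(b_i\in\operatorname{range}(H_i)\subseteq\operatorname{range}(\bar H)\), we get \(\bar b\in\operatorname{range}(\bar H)\). The inclusion holds since \(\ker\bar H=\bigcap_i\ker H_i\) by positive semidefiniteness and positive masses. Also \(\|\bar H\|\le L\) keeps \(\eta\bar H\prec I\). The previous steps then apply to \((\bar H,\bar b)\) with budget \(\operatorname{rank}\bar H\le\sum_i\operatorname{rank}H_i\), by subadditivity of rank.

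\textbf{Main obstacle.} The step needing the most care is non-integer \(p\). I will define \((I-\eta H)^p\) through the spectral calculus on the nonnegative eigenvalues \(1-\eta\lambda_k\ge0\), which is well defined and commutes with \(H\). The identity \(\bigl((I-\eta H)^p\bigr)^K=(I-\eta H)^T\) is then checked eigenvalue-wise, using \((a^p)^K=a^{pK}\) for \(a\ge0\). This is the only place the positive-branch hypothesis is essential.
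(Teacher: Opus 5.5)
Your proposal is correct and follows essentially the same route as the paper. Both arguments use the eigendecomposition to write \(L\) as \(r\) squared-loss examples, and your \(y_k=x_k^\top\theta^\dagger\) equals the paper's \(b_k/\sqrt{\lambda_k}\). Both then use positivity of \(g_p\) on the stable branch for range and rank, the contraction identity \((I-\eta f_p(H))^K=(I-\eta H)^T\) together with the shared optimum \(H^\dagger b\), and \(\ker\bar H=\bigcap_i\ker H_i\) with rank subadditivity for the union. Your centered form of the gradient step and your check that \(\eta\bar H\prec I\) for the union spell out details the paper leaves implicit.
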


\begin{proof}
Write the positive eigendecomposition
\begin{equation}
H=\sum_{j=1}^r\lambda_ju_ju_j^\top,
\qquad b=\sum_{j=1}^rb_ju_j.
\end{equation}
For \(x_j=\sqrt{\lambda_j}u_j\) and
\(y_j=b_j/\sqrt{\lambda_j}\),
\begin{equation}
\frac12\sum_{j=1}^r(x_j^\top\theta-y_j)^2
=\frac12\theta^\top H\theta-b^\top\theta+
\frac12\sum_{j=1}^ry_j^2.
\end{equation}
The scalar map
\(\lambda\mapsto[1-(1-\eta\lambda)^p]/\eta\) is positive exactly when
\(\lambda>0\) on the stable interval, so \(\widetilde H\) has the same range
and rank. Since \(b\in\operatorname{range}(H)\),
\(\widetilde b=f_p(H)H^\dagger b\) belongs to that range. If
\(u=H^\dagger b\), both real and synthetic objectives have optimum \(u\) on
\(\operatorname{range}(H)\), while
\begin{equation}
(I-\eta\widetilde H)^K=(I-\eta H)^T.
\end{equation}
They therefore induce the same affine endpoint map.

For PSD matrices,
\begin{equation}
\ker\!\left(\sum_i\alpha_iH_i\right)=\bigcap_i\ker(H_i).
\end{equation}
Because \(b_i\in\operatorname{range}(H_i)=\ker(H_i)^\perp\), the union
linear statistic \(\bar b=\sum_i\alpha_ib_i\) belongs to
\(\operatorname{range}(\bar H)\). Finally,
\begin{equation}
\operatorname{rank}(\bar H)
\le\sum_i\operatorname{rank}(H_i),
\end{equation}
which proves the budget statement.
\end{proof}

\section{One-Dimensional Exact Separation}
\label{app:oned}

For \(L_a(\theta)=\frac a2(\theta-u)^2\), gradient descent gives
\begin{equation}
\TrainMap_a^{T,\eta}(\theta_0)-u=(1-\eta a)^T(\theta_0-u).
\end{equation}
The time-compression transform \(g_p(a):=(1-(1-\eta a)^p)/\eta\) of Section
\ref{sec:commutation} satisfies
\begin{equation}
(1-\eta g_p(a))^K=(1-\eta a)^T,
\end{equation}
so a single synthetic squared-loss example with
\(x=\sqrt{g_p(a)}\) and \(y=xu\) is individually exact.

\begin{figure}[h]
\centering
\includegraphics[width=0.9\linewidth]{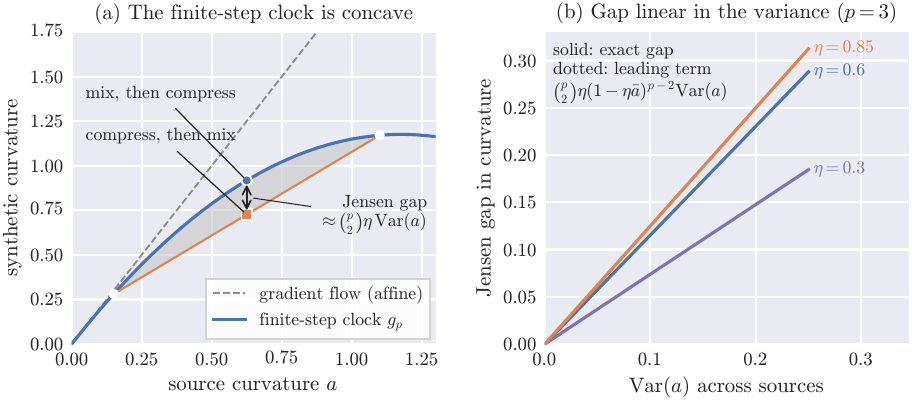}
\caption{Concavity of the finite-step clock. \(g_p\) is
concave in the source curvature, so the average of two compressed sources
(compress, then mix) lies below the compression of their average (mix,
then compress); under gradient flow (dashed) the two coincide.}
\label{fig:clock}
\end{figure}

\begin{theorem}[Exact constituents need not compose]
\label{thm:1d}
Let \(a_1\ne a_2\), \(\alpha\in(0,1)\), and distill each source exactly using
\(g_p\). Under the correct union mass \(\alpha\),
\begin{equation}
\alpha g_p(a_1)+(1-\alpha)g_p(a_2)
<g_p\!\left(\alpha a_1+(1-\alpha)a_2\right).
\end{equation}
Thus \(\epsilon_1=\epsilon_2=0\) but
\(\Gamma_{\rm joint}^\star=\mathcal E_{\rm ind}>0\) for every
non-degenerate initialization distribution around \(u\).
\end{theorem}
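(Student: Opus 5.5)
The argument has three parts: a strict Jensen inequality for \(g_p\), a direct computation of \(\mathcal E_{\rm ind}\), and an identification of \(\Gamma_{\rm joint}^\star\) with \(\mathcal E_{\rm ind}\) through a one-example joint distillate.

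\emph{Strict Jensen for \(g_p\).} On the stable interval \(0<\eta a<1\), differentiate \(g_p(a)=(1-(1-\eta a)^p)/\eta\) twice to get
\begin{equation*}
g_p''(a)=-p(p-1)\eta(1-\eta a)^{p-2}.
\end{equation*}
This is strictly negative for \(p>1\) because \(1-\eta a>0\), so \(g_p\) is strictly concave on the interval. The interval is convex, so \(\bar a=\alpha a_1+(1-\alpha)a_2\) lies in it as well. With \(a_1\ne a_2\) and \(\alpha\in(0,1)\), strict Jensen gives the displayed strict inequality. For \(p=2\) one can verify directly that the shortfall equals \(\eta\alpha(1-\alpha)(a_1-a_2)^2\).

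\emph{Computing \(\mathcal E_{\rm ind}\).} Each single example \(x_i=\sqrt{g_p(a_i)}\), \(y_i=x_iu\) reproduces \(T\) real steps in \(K\) steps, by the identity \((1-\eta g_p(a))^K=(1-\eta a)^T\) established above; hence \(\epsilon_1=\epsilon_2=0\). The \(\alpha\)-weighted union of the distillates has loss \(\tfrac{\tilde a}{2}(\theta-u)^2\) plus a constant, where \(\tilde a=\alpha g_p(a_1)+(1-\alpha)g_p(a_2)\). The real union has curvature \(\bar a\) and the same optimum \(u\). Therefore
\begin{equation*}
\TrainMap_{\rm ind}(\theta_0)-\TrainMap_R(\theta_0)=\bigl[(1-\eta\tilde a)^K-(1-\eta\bar a)^T\bigr](\theta_0-u).
\end{equation*}
Since \((1-\eta\bar a)^T=(1-\eta g_p(\bar a))^K\), the bracket equals \(h(\tilde a)-h(g_p(\bar a))\) with \(h(s)=(1-\eta s)^K\). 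The function \(h\) is strictly decreasing on \(s<1/\eta\). Here \(\tilde a<g_p(\bar a)<1/\eta\), using \(g_p(a)<1/\eta\) on the stable interval, so the bracket is nonzero. Squaring and taking expectations gives
\begin{equation*}
\mathcal E_{\rm ind}=c^2\,\E_{P_0}[(\theta_0-u)^2]>0
\end{equation*}
for the nonzero bracket \(c\), whenever the initialization is non-degenerate around \(u\).

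\emph{The gap equals \(\mathcal E_{\rm ind}\).} By Definition \ref{def:gap}, \(\Gamma^\star_{\rm joint}=\mathcal E_{\rm ind}-\mathcal E^\star_{\rm joint}(M)\), so it suffices to show \(\mathcal E^\star_{\rm joint}(M)=0\). Take the single example \(x=\sqrt{g_p(\bar a)}\), \(y=xu\). It reproduces the real-union \(T\)-step map exactly and uses one sample, which fits within the budget \(M\ge 2\); this is the one-dimensional case of Lemma \ref{lem:realizability}. Since \(\mathcal E\ge0\), the infimum is zero, and \(\Gamma^\star_{\rm joint}=\mathcal E_{\rm ind}>0\).

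The only delicate step is ruling out coincidence of the \(K\)-th powers after the strict Jensen step. This requires staying on the positive branch, \(1-\eta\tilde a>0\), so that \(h\) is injective. That condition holds because \(0<g_p(a_i)<1/\eta\).
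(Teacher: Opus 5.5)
Your proposal is correct and follows the same route as the paper's proof: strict concavity of \(g_p\) from its second derivative, the closed-form independent error \(\E[(\theta_0-u)^2]\,[(1-\eta\tilde a)^K-(1-\eta\bar a)^T]^2\), and a one-example exact joint distillate giving \(\mathcal E^\star_{\rm joint}(M)=0\). Your monotonicity argument with \(h(s)=(1-\eta s)^K\) on \(s<1/\eta\) shows that the bracket is actually nonzero, a step the paper's proof asserts without justification.
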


\begin{proof}[Proof of Theorem \ref{thm:1d}]
Direct differentiation yields
\begin{equation}
g_p''(a)=-p(p-1)\eta(1-\eta a)^{p-2}<0.
\end{equation}
Strict concavity gives
\begin{equation}
\bar s:=\alpha g_p(a_1)+(1-\alpha)g_p(a_2)
<g_p(\bar a)=:s_{\rm joint},
\quad \bar a:=\alpha a_1+(1-\alpha)a_2.
\end{equation}
The independent endpoint error is therefore
\begin{equation}
\mathcal E_{\rm ind}
=\E[(\theta_0-u)^2]
\left[(1-\eta\bar s)^K-(1-\eta\bar a)^T\right]^2>0.
\end{equation}
The exact joint curvature is \(s_{\rm joint}\), realizable by one squared-loss
example, so \(\mathcal E_{\rm joint}^\star=0\). Each constituent is exact by
construction, hence \(\epsilon_1=\epsilon_2=0\).
\end{proof}

Figure \ref{fig:reweighting} illustrates the single-mode repair and the
two-mode residual floor.

\begin{figure}[h]
\centering
\includegraphics[width=0.55\linewidth]{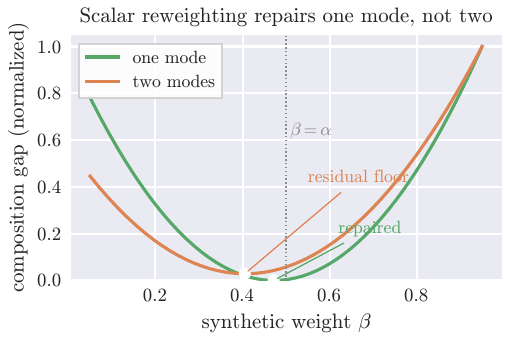}
\caption{Scalar reweighting repairs one curvature mode, not two.
Composition gap (normalized) as a function of the synthetic weight
\(\beta\): for a single mode there is a \(\beta^*\ne\alpha\) at which the
gap is zero; for two modes with different \(\beta^*_j\) the minimum over
\(\beta\) is a residual floor.}
\label{fig:reweighting}
\end{figure}

\begin{proposition}[Mode-wise reweighting compatibility]
\label{prop:mode-calibration}
For two scalar modes with \(a_{1j}<a_{2j}\), define
\begin{equation}
\beta_j^*=
\frac{g_p(a_{2j})-g_p(\alpha a_{1j}+(1-\alpha)a_{2j})}
{g_p(a_{2j})-g_p(a_{1j})}.
\label{eq:mode-calibration}
\end{equation}
Then \(\beta_j^*\in(0,1)\) is the unique source-1 synthetic weight that
repairs mode \(j\). A diagonal multi-mode problem is exactly repairable by one
global reweighting if and only if all nontrivial \(\beta_j^*\) coincide.
For \(p=2\), this weight has the closed form
\begin{equation}
\beta_j^*=\alpha\,
\frac{2-\eta(a_{2j}+\bar a_j)}
{2-\eta(a_{1j}+a_{2j})},
\qquad \bar a_j=\alpha a_{1j}+(1-\alpha)a_{2j}.
\label{eq:mode-calibration-p2}
\end{equation}
Consequently, in a nondegenerate mode \(\beta_j^*=\alpha\) if and only if
\(\eta=0\), and equality of the repair weights of two distinct modes is a
measure-zero, codimension-one condition on their four curvatures.
\end{proposition}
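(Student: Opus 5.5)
Each mode \(j\) is handled as its own scalar problem; I then read off the global condition and specialize to \(p=2\).

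\emph{Single mode.} Mixing the two exact distillates with synthetic weight \(\beta\) gives the composed curvature \(\beta g_p(a_{1j})+(1-\beta)g_p(a_{2j})\). The exact joint curvature is \(g_p(\bar a_j)\), and by the one-dimensional analysis (Theorem \ref{thm:1d}) this is the value that reproduces the union endpoint. The composed map is \(\theta\mapsto\theta-\eta\times(\text{curvature})(\theta-u)\) with a common optimum, so the mode is repaired if and only if the composed curvature equals \(g_p(\bar a_j)\). Since \(1-\eta a\) is positive, \(g_p\) is strictly increasing, so \(g_p(a_{2j})\ne g_p(a_{1j})\). The repair condition is therefore affine in \(\beta\) with nonzero slope, and solving it gives \eqref{eq:mode-calibration} uniquely.

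To show \(\beta_j^*\in(0,1)\), use monotonicity: \(a_{1j}<\bar a_j<a_{2j}\) places \(g_p(\bar a_j)\) strictly between \(g_p(a_{1j})\) and \(g_p(a_{2j})\). One caveat: if the modes also differ in their optima, repair additionally requires matching the linear statistic. Under the common-optimum setting of this section that condition follows automatically.

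\emph{Multi-mode.} A diagonal problem decouples into its modes. One global \(\beta\) repairs every mode if and only if it equals each mode's unique \(\beta_j^*\). "Nontrivial" excludes modes with \(a_{1j}=a_{2j}\), which are repaired by every \(\beta\).

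\emph{Case \(p=2\).} Here \(g_2(a)=2a-\eta a^2\), so \(g_2(x)-g_2(y)=(x-y)(2-\eta(x+y))\). Applying this to the numerator and denominator of \eqref{eq:mode-calibration}, and noting \(a_{2j}-\bar a_j=\alpha(a_{2j}-a_{1j})\), gives \eqref{eq:mode-calibration-p2} after cancelling \(a_{2j}-a_{1j}\).

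From the closed form, \(\beta_j^*=\alpha\) if and only if \(\eta(a_{2j}+\bar a_j)=\eta(a_{1j}+a_{2j})\), that is, \(\eta(\bar a_j-a_{1j})=0\). Since \(\bar a_j-a_{1j}=(1-\alpha)(a_{2j}-a_{1j})\ne0\), this forces \(\eta=0\).

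For two modes, the condition \(\beta_1^*=\beta_2^*\) is equivalent, after clearing the positive denominators, to the vanishing of the polynomial
\[
(2-\eta(a_{21}+\bar a_1))(2-\eta(a_{12}+a_{22}))-(2-\eta(a_{22}+\bar a_2))(2-\eta(a_{11}+a_{21})).
\]
This polynomial is not identically zero: its \(\eta\)-linear coefficient is \(2(1-\alpha)\bigl[(a_{12}-a_{22})-(a_{11}-a_{21})\bigr]\), which is nonzero for generic choices. Its zero set is therefore a proper algebraic hypersurface in the four curvatures, which has measure zero and codimension one.

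The main obstacle is this last genericity claim. The key step is exhibiting one explicit choice of curvatures where the polynomial is nonzero, so that it is nontrivial. The other steps are routine algebra.
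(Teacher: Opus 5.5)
Your proposal is correct and follows the paper's proof essentially step for step: monotonicity of $g_p$ gives uniqueness and $\beta_j^*\in(0,1)$, mode-wise decoupling gives the global criterion, the factorization $g_2(x)-g_2(y)=(x-y)(2-\eta(x+y))$ gives the closed form, and clearing the positive denominators gives a nonzero polynomial condition in the four curvatures. One small tightening: the curvatures are the variables and $\eta>0$ is fixed, so say explicitly that your polynomial depends on $\eta$ and the curvatures only through the products $\eta a_{kj}$; then at every fixed $\eta>0$ its degree-one part in the curvatures is $\eta$ times your $\eta$-linear coefficient (correctly $-2(1-\alpha)[(a_{12}-a_{22})-(a_{11}-a_{21})]$, a harmless sign slip), and this nonzero linear form is what shows that the zero set in the curvatures is proper.
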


\begin{proof}
Strict monotonicity of \(g_p\) places
\(g_p(\alpha a_{1j}+(1-\alpha)a_{2j})\) strictly between the two endpoint
values, so solving the affine interpolation equation gives the unique weight
\eqref{eq:mode-calibration}. Simultaneous repair requires the same scalar
weight to solve every modal equation, which is equivalent to equality of the
\(\beta_j^*\). For \(p=2\), substituting
\(g_2(a)=a(2-\eta a)\) and factoring differences gives
\eqref{eq:mode-calibration-p2}. Its denominator is positive on the stable
branch. Since \(\bar a_j-a_{1j}=(1-\alpha)(a_{2j}-a_{1j})>0\), the equality
\(\beta_j^*=\alpha\) holds exactly when \(\eta=0\). Equating the two rational
expressions for modes \(j\) and \(j'\), then clearing their positive
denominators, gives one nonzero polynomial equation in
\((a_{1j},a_{2j},a_{1j'},a_{2j'})\). Its regular zero set has codimension one,
and the full zero set has Lebesgue measure zero. For general real \(p>1\),
\(g_p\) and \(\beta_j^*\) are real analytic on the stable branch. The equality
of two repair weights is again a nontrivial analytic equation: if the weight
were constant over all endpoint pairs, the coincident-endpoint limit would
force that constant to be \(\alpha\), contradicting strict concavity of
\(g_p\). Its zero set therefore also has Lebesgue measure zero.
\end{proof}

For any fixed real \(p>1\), the generalized binomial expansion gives
\begin{equation}
g_p(a)=pa-\binom{p}{2}\eta a^2+O(\eta^2a^3).
\end{equation}
The leading Jensen gap is therefore
\begin{equation}
g_p(\bar a)-\sum_i\alpha_i g_p(a_i)
=\binom{p}{2}\eta\operatorname{Var}_\alpha(a_i)+O(\eta^2),
\end{equation}
which vanishes in the gradient-flow limit while predicting the curvature
variance that appears in the matrix law.

\subsection{Quadratic endpoint matching (E5)}
\label{app:e5}

The identities above describe what an exact distillate must contain. E5
asks whether a distillation algorithm finds it. For twenty random pairs of
quadratic sources in \(d=12\) dimensions, a synthetic set of \(n_{\rm syn}\)
squared-loss samples per source is optimized by endpoint matching (gradient
descent on the two-step versus one-step endpoint discrepancy over random
\(\theta_0\), 6{,}000 steps, random initialization, no access to the
target statistics). Table \ref{tab:e5} reports, per budget, the relative
Frobenius error of the learned Gram matrix against the transformed
statistic \(2H-\eta H^2\), and the ratio of the measured composed defect to
the closed form \(\tfrac{\eta^2}{4}(H_1-H_2)^2\) of Theorem
\ref{thm:structural} (Appendix \ref{app:matrix}).

\begin{table}[h]
\caption{E5: endpoint matching recovers the transformed statistic
once the budget reaches the dimension.}
\label{tab:e5}
\centering
\small
\begin{tabular}{@{}rrrrrr@{}}
\toprule
\(n_{\rm syn}\) & rank of Gram & rel.\ Frobenius error & endpoint loss & measured/closed-form defect & log-log corr.\\
\midrule
4 & 4 & 48.0\% & \(4.8\times10^{-1}\) & 11.7 \(\pm\) 1.4 & 0.12\\
6 & 6 & 31.7\% & \(2.0\times10^{-1}\) & 7.6 \(\pm\) 0.9 & 0.16\\
8 & 8 & 19.9\% & \(8.9\times10^{-2}\) & 4.8 \(\pm\) 0.6 & 0.16\\
12 & 12 & \(<10^{-6}\) & \(1.3\times10^{-31}\) & 1.000 \(\pm\) 0.000 & 1.00\\
24 & 12 & \(<10^{-6}\) & \(1.6\times10^{-31}\) & 1.000 \(\pm\) 0.000 & 1.00\\
\bottomrule
\end{tabular}
\par\smallskip\raggedright\footnotesize
Twenty source pairs, \(d=12\), mean \(\pm\) SD over pairs. Log-log
correlation is between the measured defect and \(\|(H_1-H_2)^2\|_F\) across
pairs.
\end{table}

At \(n_{\rm syn}\ge d\) the algorithm recovers
\(2H-\eta H^2\) to machine precision from a random start and the
composed defect equals the closed form exactly. Below the dimension the
constituents are inexact and the composed error is dominated by
per-source approximation error, the budget-limited situation of the
CIFAR-10 runs (Appendix \ref{app:regime}).
In this experiment every budget-limited run gave a composed error above
the closed form, but that is not a general property of approximate
quadratic constituents: per-source approximation errors can cancel the
structural term. With \(H_1=1\), \(H_2=3\), \(\eta=0.1\), equal masses, and
a common optimum, the exact constituents have transformed curvatures
\(1.9\) and \(5.1\), whose average \(3.5\) differs from the joint value
\(3.6\). Approximate constituents with curvatures \(2.0\) and \(5.2\) each
carry an error, yet their average is exactly \(3.6\), so
\(\mathcal C_{\rm ind}=0<\mathcal C_{\rm struct}\), which is consistent with
Corollary \ref{cor:detectability} (whose positivity condition is sufficient, not necessary). Two
quantities should be kept apart here: \(\mathcal C_{\rm ind}=\|s+e_{\rm
ind}\|\) can fall below \(\|s\|\) only through cancellation between the
composed set's error and the structural term, whereas
\(\widehat\Gamma_{\rm RMS}=\|s+e_{\rm ind}\|-\|e_{\rm joint}\|\) also
depends on the joint set's error. The ratios near one in the network
budget sweep (Table \ref{tab:ipc}) concern the first quantity.

\section{Exact Matrix Composition Laws}
\label{app:matrix}

\subsection{Two-source identity}

For equal source masses and common optimum \(\theta^*\), let
\(\bar H=(H_1+H_2)/2\). The independent one-step contraction is
\begin{equation}
A_{\rm ind}
=I-\eta\left(2\bar H-\eta\frac{H_1^2+H_2^2}{2}\right),
\end{equation}
whereas the real-union two-step contraction is
\begin{equation}
A_R=(I-\eta\bar H)^2
=I-2\eta\bar H+\eta^2\bar H^2.
\end{equation}
Thus
\begin{equation}
A_{\rm ind}-A_R
=\eta^2\left(\frac{H_1^2+H_2^2}{2}-\bar H^2\right)
=\frac{\eta^2}{4}(H_1-H_2)^2.
\end{equation}
No commutativity is used in the last identity. For \(K=1\) the composed
one-step map is linear in the synthetic losses, so with exact constituents
\(\Phi_{\rm ind}(\theta_0)=\sum_i\alpha_i\Phi^{T,\eta}_{D_i}(\theta_0)\)
and the defect coincides with the drift between averaging local \(T\)-step
updates and taking \(T\) steps on the pooled data, the parameter-averaging
drift of federated optimization \citep{li2020fedavg}. The departure specific
to distillation appears through the transformed statistics and, for
\(K>1\), through the compounded contractions of Theorem \ref{thm:defect}.

\subsection{Multi-source identity}

For arbitrary positive masses,
\begin{align}
\sum_i\alpha_i(H_i-\bar H)^2
&=\sum_i\alpha_iH_i^2-\bar H^2,\\
\frac12\sum_{i,j}\alpha_i\alpha_j(H_i-H_j)^2
&=\sum_i\alpha_iH_i^2-\bar H^2.
\end{align}
Every summand in the first representation is of the form \(X^2\) for a
symmetric \(X\), proving \(V_H\succeq0\). Moreover, for a vector \(v\),
\begin{equation}
v^\top V_Hv=\sum_i\alpha_i\|(H_i-\bar H)v\|_2^2.
\end{equation}
It follows that \(V_H=0\) exactly when all positive-mass Hessians are equal.

\paragraph{Scope of positive semidefiniteness.}
The identities above require only symmetric \(H_i\): each
\((H_i-\bar H)^2\) remains positive semidefinite even if \(H_i\) is
indefinite. Positive semidefiniteness is instead used for finite-dataset
realizability, the positive contraction branch, and the rank-budget
interpretation. Consequently, an exact-loss Hessian can still define the local
algebraic proxy, while a generalized Gauss--Newton matrix supplies a PSD
surrogate aligned with the signed and realizability results.

\subsection{Curvature and affine statistics}

\begin{proof}[Proof of Theorem \ref{thm:structural}]
The independent synthetic union has statistics
\begin{equation}
H_{\rm ind}=2\bar H-\eta\sum_i\alpha_iH_i^2,
\qquad
b_{\rm ind}=2\bar b-\eta\sum_i\alpha_iH_ib_i.
\end{equation}
The exact joint one-step statistics are
\begin{equation}
H_J=2\bar H-\eta\bar H^2,
\qquad
b_J=(2I-\eta\bar H)\bar b.
\end{equation}
Hence \(H_{\rm ind}=H_J-\eta V_H\) and
\(b_{\rm ind}=b_J-\eta w\). A one-step affine GD map is
\((I-\eta H)\theta+\eta b\), which gives \eqref{eq:affine-law}. Write \(\theta_0=\mu_0+u_0\), with
\(\E[u_0]=0\) and \(\E[u_0u_0^\top]=\Sigma_0\). Squaring and taking
expectation yields \eqref{eq:affine-error}.
\end{proof}

If \(b_i=H_i\theta^*\) for all \(i\), then
\begin{equation}
w=\left(\sum_i\alpha_iH_i^2-\bar H^2\right)\theta^*
=V_H\theta^*,
\end{equation}
which gives \eqref{eq:common-optimum}.

\begin{corollary}[Distribution-aware zero-gap characterization]
\label{cor:zero-gap}
In the setting of Theorem \ref{thm:structural}, the composition error is zero
if and only if
\begin{equation}
V_H\Sigma_0^{1/2}=0\quad\text{and}\quad V_H\mu_0=w.
\label{eq:zero-gap}
\end{equation}
Under a common optimum this is equivalent to \(V_HM_0^{1/2}=0\). If
\(M_0\succ0\), exact composition holds iff all positive-mass source Hessians
are equal.
\end{corollary}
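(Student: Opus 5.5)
The plan is to start from the exact error formula \eqref{eq:affine-error} of Theorem \ref{thm:structural}, \(\mathcal E_{\rm ind}=\eta^4[\operatorname{tr}(V_H^2\Sigma_0)+\|V_H\mu_0-w\|_2^2]\), and note that for \(\eta>0\) it is a sum of two nonnegative terms. The error therefore vanishes if and only if both terms vanish. The second term vanishes exactly when \(V_H\mu_0=w\). For the first term I will use symmetry of \(V_H\) and \(\Sigma_0\succeq0\) to write \(\operatorname{tr}(V_H^2\Sigma_0)=\operatorname{tr}(\Sigma_0^{1/2}V_HV_H\Sigma_0^{1/2})=\|V_H\Sigma_0^{1/2}\|_F^2\). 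This vanishes iff \(V_H\Sigma_0^{1/2}=0\), which gives \eqref{eq:zero-gap}.

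For the common-optimum statement I will instead use \eqref{eq:common-optimum}, \(\mathcal E_{\rm ind}=\eta^4\operatorname{tr}(V_H^2M_0)\), which follows from \(w=V_H\theta^*\) as derived just above. The same trace rewriting gives \(\eta^4\|V_HM_0^{1/2}\|_F^2\), so zero error is equivalent to \(V_HM_0^{1/2}=0\). As a consistency check, this should agree with the general condition. Since \(M_0=\Sigma_0+(\mu_0-\theta^*)(\mu_0-\theta^*)^\top\), we have \(V_HM_0V_H=V_H\Sigma_0V_H+V_H(\mu_0-\theta^*)(\mu_0-\theta^*)^\top V_H\). This vanishes iff \(V_H\Sigma_0^{1/2}=0\) and \(V_H\mu_0=V_H\theta^*=w\), so the two characterizations match.

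Finally, suppose \(M_0\succ0\). Then \(M_0^{1/2}\) is invertible, so \(V_HM_0^{1/2}=0\) iff \(V_H=0\). By the multi-source identity, \(v^\top V_Hv=\sum_i\alpha_i\|(H_i-\bar H)v\|_2^2\) with every \(\alpha_i>0\). Hence \(V_H=0\) iff \(H_i=\bar H\) for every \(i\), that is, iff all positive-mass Hessians are equal. Conversely, equal Hessians give \(V_H=0\), and under a common optimum also \(w=V_H\theta^*=0\), so the error is zero.

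No step is a real obstacle. The only point needing care is the trace-to-Frobenius rewriting, which relies on \(V_H\) being symmetric; that holds because each \((H_i-\bar H)^2\) is symmetric. It also relies on \(\Sigma_0\) and \(M_0\) being positive semidefinite, so that their square roots exist. I would also state explicitly that \(\eta>0\) is assumed, since at \(\eta=0\) the error vanishes trivially.
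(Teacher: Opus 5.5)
Your proposal is correct and follows the paper's proof essentially step for step. Both arguments use the nonnegativity of the two terms in the error formula of Theorem~\ref{thm:structural}, with \(\operatorname{tr}(V_H^2\Sigma_0)=\|V_H\Sigma_0^{1/2}\|_F^2\); the reduction to \(\|V_HM_0^{1/2}\|_F^2\) via \(w=V_H\theta^*\); and the positive-mass sum-of-squares identity for the \(M_0\succ0\) case. Your check through \(M_0=\Sigma_0+(\mu_0-\theta^*)(\mu_0-\theta^*)^\top\) just spells out the paper's remark that under a common optimum ``the same sum'' equals \(\|V_HM_0^{1/2}\|_F^2\).
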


\begin{proof}[Proof of Corollary \ref{cor:zero-gap}]
The identity \eqref{eq:affine-error} is a sum of two nonnegative terms, with
\begin{equation}
\operatorname{tr}(V_H^2\Sigma_0)
=\|V_H\Sigma_0^{1/2}\|_F^2.
\end{equation}
It is therefore zero exactly under the two conditions in
\eqref{eq:zero-gap}. With a common optimum, \(w=V_H\theta^*\), and the same
sum is \(\|V_HM_0^{1/2}\|_F^2\). If \(M_0\succ0\), this vanishes exactly
when \(V_H=0\), which by the sum-of-squares identity in Appendix
\ref{app:matrix} holds exactly when every positive-mass \(H_i\) equals
\(\bar H\). The same conclusion holds without a common optimum whenever
\(\Sigma_0\succ0\): the first condition in \eqref{eq:zero-gap} then forces
\(V_H=0\), hence \(H_i=\bar H\) for all positive-mass sources, and
substituting into \(w\) gives \(w=\bar H\bar b-\bar H\bar b=0\), so the
second condition holds for every \(\mu_0\). Affine compatibility is an
independent requirement only when \(\Sigma_0\) is rank-deficient, for
instance at a fixed initialization \(\Sigma_0=0\), where \(V_H\mu_0=w\)
can hold with \(w\ne0\).
\end{proof}

\paragraph{Why union does not average minimizers.}
Dataset union acts linearly on \((H,b)\), not on \(u=H^{-1}b\). If two
Hessians are uniformly positive definite with minimum eigenvalue at least
\(m>0\), then
\begin{equation}
\|\widehat u-u^\circ\|_2
\le\frac1m\left(\|\Delta b\|_2+
\|\Delta H\|_2\|u^\circ\|_2\right).
\end{equation}
Without this margin, small affine-statistic error does not imply minimizer
closeness.

\paragraph{Relation to Jensen's inequality.}
\label{app:faq-jensen}
Strict concavity of the scalar clock \(g_p\) supplies the one-dimensional
case (Theorem \ref{thm:1d}) and nothing more. The matrix law (Theorem
\ref{thm:structural}) is an exact identity for noncommuting Hessians rather
than an inequality: it characterizes the endpoint defect, yields a
positive-semidefinite curvature discrepancy, separates a curvature-variance
term from an affine-conflict term, and identifies which components admit
scalar repair (Appendix \ref{app:repair}). The arbitrary-ratio
decomposition (Theorem \ref{thm:defect}) shows that curvature variance
remains the leading term under a common optimum, and the
strong-monotonicity lower bound (Lemma \ref{lem:power-lower}) rules out
endpoint cancellation across \(K\) steps without any Loewner ordering.
When the Hessians commute, spectral Jensen recovers a signed refinement
(Theorem \ref{thm:general}). In general no operator-concavity statement is
assumed or needed, and \(P_{\rm comp}\) follows from the identity, not
from the inequality.

\subsection{Momentum and minibatch stochasticity}
\label{app:optimizers}

The two-to-one identity of Theorem \ref{thm:structural} is stated for
full-batch gradient descent. The following lemma records that the same
discrepancy arises under heavy-ball momentum and, in expectation, under
independent minibatches. Adaptive preconditioners such as Adam, whose
effective step depends on the gradient history, are not covered.

\begin{lemma}[Two-to-one identity under momentum and minibatches]
\label{lem:optimizers}
Let the sources satisfy (A1) with masses \(\alpha_i\).
\begin{enumerate}
\item[(i)] Let the real training use heavy-ball momentum with a fixed
coefficient \(\beta\ge0\) and zero initial momentum,
\(\theta_{t+1}=\theta_t-\eta(H\theta_t-b)+\beta(\theta_t-\theta_{t-1})\)
with \(\theta_{-1}=\theta_0\), and let one synthetic step be a plain
gradient step. The synthetic statistics that reproduce two real steps on
source \(i\) exactly are
\(\widetilde H_i=(2+\beta)H_i-\eta H_i^2\) and
\(\widetilde b_i=((2+\beta)I-\eta H_i)b_i\), and the union of these exact
distillates satisfies
\(\TrainMap_{\rm ind}(\theta_0)-\TrainMap_R(\theta_0)=\eta^2(V_H\theta_0-w)\),
the same discrepancy as in \eqref{eq:affine-law}.
\item[(ii)] Let each real step use an independent unbiased minibatch,
with statistics \((H^{(t)},b^{(t)})\), \(\E[H^{(t)}]=H\),
\(\E[b^{(t)}]=b\), independent of \(\theta_0\) and of each other. Then
the expected two-step endpoint equals the full-batch endpoint,
\(\E[\theta_2\mid\theta_0]=(I-\eta H)^2\theta_0+\eta(2I-\eta H)b\), so the
statistics of Theorem \ref{thm:structural} reproduce it exactly and the
discrepancy of the composed one-step map from the expected real-union
endpoint is again \(\eta^2(V_H\theta_0-w)\). Against the stochastic
reference, the mean squared endpoint error of any deterministic one-step
candidate is its error against the expected endpoint plus the reference
variance \(\E\|\theta_2-\E[\theta_2\mid\theta_0]\|^2\), which does not
depend on the candidate and cancels in the composition gap
\(\Gamma^\star_{\rm joint}\).
\end{enumerate}
\end{lemma}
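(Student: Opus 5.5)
For part (i), I will unroll two heavy-ball steps with \(\theta_{-1}=\theta_0\) and read off the synthetic statistics from the resulting affine map. With zero initial momentum the first step is a plain gradient step, \(\theta_1=(I-\eta H)\theta_0+\eta b\). The second step is \(\theta_2=\theta_1-\eta(H\theta_1-b)+\beta(\theta_1-\theta_0)\). Substituting gives
\[
\theta_2=\bigl((1+\beta)(I-\eta H)-\eta H(I-\eta H)-\beta I\bigr)\theta_0+\eta\bigl((2+\beta)I-\eta H\bigr)b .
\]
The linear coefficient simplifies to \(I-\eta\bigl((2+\beta)H-\eta H^2\bigr)\). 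Matching this with one synthetic gradient step \((I-\eta\widetilde H)\theta_0+\eta\widetilde b\) forces \(\widetilde H=(2+\beta)H-\eta H^2\) and \(\widetilde b=((2+\beta)I-\eta H)b\). These are the stated statistics, and they are exact for every \(\theta_0\).

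Next I will copy the proof of Theorem \ref{thm:structural}, with the factor \(2\) replaced by \(2+\beta\). The composed statistics are \(\sum_i\alpha_i\widetilde H_i=(2+\beta)\bar H-\eta\sum_i\alpha_iH_i^2\) and \(\sum_i\alpha_i\widetilde b_i=(2+\beta)\bar b-\eta\sum_i\alpha_iH_ib_i\). The real-union two-step momentum map is the same unrolled formula applied to \((\bar H,\bar b)\). Its one-step-equivalent statistics are therefore \((2+\beta)\bar H-\eta\bar H^2\) and \(((2+\beta)I-\eta\bar H)\bar b\). The terms proportional to \(2+\beta\) are linear and cancel. The differences are exactly \(-\eta V_H\) and \(-\eta w\), and inserting them into the one-step map gives \(\eta^2(V_H\theta_0-w)\). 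The one point I will check carefully is that the real-union reference is also a momentum trajectory, so both sides carry the same \(\beta\). Once that holds, \(\beta\) enters only linearly and drops out.

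For part (ii), I will write \(\theta_1=(I-\eta H^{(0)})\theta_0+\eta b^{(0)}\) and \(\theta_2=(I-\eta H^{(1)})\theta_1+\eta b^{(1)}\). I will take the conditional expectation given \(\theta_0\) using independence. The pair \((H^{(1)},b^{(1)})\) is independent of \(\theta_1\), since \(\theta_1\) depends only on \(\theta_0\) and the first batch. Hence \(\E[\theta_2\mid\theta_0]=(I-\eta H)\E[\theta_1\mid\theta_0]+\eta b\), and \(\E[\theta_1\mid\theta_0]=(I-\eta H)\theta_0+\eta b\). This gives the full-batch two-step endpoint. Theorem \ref{thm:structural} then applies verbatim to the expected endpoints, for the sources and for the union alike. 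The independence step is the only subtle point, because the product \(H^{(1)}\theta_1\) needs it to factor.

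Finally, for a deterministic candidate \(\Phi(\theta_0)\) I will use the conditional bias--variance decomposition:
\[
\E\|\Phi(\theta_0)-\theta_2\|^2=\E\|\Phi(\theta_0)-\E[\theta_2\mid\theta_0]\|^2+\E\|\theta_2-\E[\theta_2\mid\theta_0]\|^2 .
\]
The cross term vanishes after conditioning on \(\theta_0\). The variance term does not depend on \(\Phi\), so it enters both \(\mathcal E_{\rm ind}\) and \(\mathcal E^\star_{\rm joint}\) (also inside the infimum) as the same additive constant, and it cancels in \(\Gamma^\star_{\rm joint}\). I do not expect a real obstacle beyond the bookkeeping in the unrolling of part (i).
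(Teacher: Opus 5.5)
Your proposal is correct and follows the paper's proof essentially step for step. Part (i) uses the same two-step unrolling of the heavy-ball recursion, with the \(\beta\)-terms cancelling by linearity in the union, and part (ii) uses the same independence argument for the expected minibatch endpoint, followed by the conditional bias--variance split whose variance term cancels in \(\Gamma^\star_{\rm joint}\). The only difference is cosmetic: you take the conditional expectation iteratively through \(\theta_1\) instead of first expanding the product \((I-\eta H^{(2)})(I-\eta H^{(1)})\theta_0\), and this needs the same mutual independence of \(\theta_0\) and the two batches.
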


\begin{proof}
(i) With zero initial momentum the first step is
\(\theta_1=(I-\eta H)\theta_0+\eta b\) and
\(\theta_1-\theta_0=-\eta(H\theta_0-b)\). The second step gives
\(\theta_2=(I-\eta H)\theta_1+\eta b+\beta(\theta_1-\theta_0)
=[(I-\eta H)^2-\beta\eta H]\theta_0+\eta[(2+\beta)I-\eta H]b\).
A plain step with statistics \((\widetilde H,\widetilde b)\) maps
\(\theta_0\) to \((I-\eta\widetilde H)\theta_0+\eta\widetilde b\), and
equating the two maps for all \(\theta_0\) gives
\(\widetilde H=(2+\beta)H-\eta H^2\) and
\(\widetilde b=((2+\beta)I-\eta H)b\). Applying the same rule to the
union, whose statistics are \((\bar H,\bar b)\), the composed and joint
statistics differ by
\(\sum_i\alpha_i\widetilde H_i-\widetilde H_\cup=-\eta(\sum_i\alpha_iH_i^2-\bar H^2)=-\eta V_H\)
and
\(\sum_i\alpha_i\widetilde b_i-\widetilde b_\cup=-\eta(\sum_i\alpha_iH_ib_i-\bar H\bar b)=-\eta w\);
the terms carrying \(\beta\) are linear in \((H,b)\) and cancel. One
synthetic step with these differences yields
\(\eta^2(V_H\theta_0-w)\), as in the proof of Theorem \ref{thm:structural}.
(ii) Two minibatch steps give
\(\theta_2=(I-\eta H^{(2)})(I-\eta H^{(1)})\theta_0+\eta(I-\eta H^{(2)})b^{(1)}+\eta b^{(2)}\).
Taking the conditional expectation and using the independence of the two
batches gives the full-batch expression. For a deterministic candidate
endpoint \(\phi(\theta_0)\),
\(\E\|\phi-\theta_2\|^2=\|\phi-\E[\theta_2\mid\theta_0]\|^2+\E\|\theta_2-\E[\theta_2\mid\theta_0]\|^2\)
because the cross term has zero mean; the second term is the same for the
composed candidate and for every joint candidate, so it cancels in
\(\mathcal E_{\rm ind}-\mathcal E^\star_{\rm joint}\).
\end{proof}

\section{General Finite-Step Jensen Gap}
\label{app:general}

\subsection{General noncommutative arbitrary-ratio decomposition}

\begin{lemma}[Strong monotonicity of positive matrix powers]
\label{lem:power-lower}
If symmetric matrices \(X,Y\succeq qI\), with \(q>0\), then for every
integer \(K\ge1\),
\begin{equation}
\|X^K-Y^K\|_F\ge Kq^{K-1}\|X-Y\|_F.
\label{eq:power-lower}
\end{equation}
No commutativity between \(X\) and \(Y\) is required.
\end{lemma}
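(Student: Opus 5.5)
We will prove the bound through a telescoping identity combined with a trace inequality that avoids any commutativity. Set \(\Delta:=X-Y\). The noncommutative telescoping identity gives
\begin{equation}
X^K-Y^K=\sum_{k=0}^{K-1}X^{k}\,\Delta\,Y^{K-1-k},
\end{equation}
which follows by induction on \(K\) or by expanding the telescoping sum \(\sum_k (X^{k+1}Y^{K-1-k}-X^kY^{K-k})\). Frobenius norms of such sums are awkward to bound from below directly, so we test the identity against \(\Delta\) itself. Cauchy--Schwarz gives \(\|X^K-Y^K\|_F\,\|\Delta\|_F\ge\langle \Delta,X^K-Y^K\rangle_F\), so it suffices to show
\begin{equation}
\langle\Delta,X^K-Y^K\rangle_F=\sum_{k=0}^{K-1}\operatorname{tr}\!\big(\Delta X^{k}\Delta Y^{K-1-k}\big)\ge Kq^{K-1}\|\Delta\|_F^2.
\end{equation}

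The key step is a lower bound on each summand \(\operatorname{tr}(\Delta A\Delta B)\), where \(A=X^k\) and \(B=Y^{K-1-k}\) are symmetric with \(A\succeq q^kI\) and \(B\succeq q^{K-1-k}I\). Powers of a symmetric matrix whose eigenvalues are at least \(q>0\) have eigenvalues at least \(q^k\), so these spectral bounds hold. Write \(A=A^{1/2}A^{1/2}\). By cyclicity,
\begin{equation}
\operatorname{tr}(\Delta A\Delta B)=\operatorname{tr}\big(A^{1/2}\Delta B\Delta A^{1/2}\big).
\end{equation}
The matrix \(\Delta B\Delta=\Delta^\top B\Delta\) is positive semidefinite and satisfies \(\Delta B\Delta\succeq q^{K-1-k}\Delta^2\) by congruence. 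Conjugating by \(A^{1/2}\) preserves this Loewner order, so
\begin{equation}
\operatorname{tr}(\Delta A\Delta B)\ge q^{K-1-k}\operatorname{tr}(A^{1/2}\Delta^2A^{1/2})=q^{K-1-k}\operatorname{tr}(\Delta A\Delta).
\end{equation}
Applying the same congruence argument to \(\Delta A\Delta\succeq q^k\Delta^2\) gives \(\operatorname{tr}(\Delta A\Delta)\ge q^k\|\Delta\|_F^2\). Each summand is therefore at least \(q^{K-1}\|\Delta\|_F^2\), and summing over the \(K\) values of \(k\) yields the claim. If \(\Delta=0\) there is nothing to prove; otherwise we divide by \(\|\Delta\|_F\).

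The main obstacle is ensuring that no step uses commutativity. This is the reason for pairing with \(\Delta\) and using the trace form \(\operatorname{tr}(\Delta A\Delta B)\), which can be bounded by two separate congruence arguments, instead of attempting an operator-norm bound on \(X^k\Delta Y^{K-1-k}\) directly. We still need to check that every matrix is symmetric, which holds because \(X\), \(Y\) and \(\Delta\) are symmetric. We also need that the Loewner inequality \(P\succeq Q\) implies \(\operatorname{tr}(CPC^\top)\ge\operatorname{tr}(CQC^\top)\), which is immediate.
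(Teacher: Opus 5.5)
Your proof is correct. It differs from the paper's only in how $X^K-Y^K$ is written as a sum of sandwiched terms.

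The paper sets $Z_t=Y+t(X-Y)\succeq qI$ and integrates the Fr\'echet derivative $D(Z_t^K)[E]=\sum_\ell Z_t^\ell E Z_t^{K-1-\ell}$ along that segment. Each summand then carries the same matrix on both sides, and pairing with $E$ gives directly $\|Z_t^{\ell/2}EZ_t^{(K-1-\ell)/2}\|_F^2\ge q^{K-1}\|E\|_F^2$. The Cauchy--Schwarz step is then identical to yours.

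You instead use the exact difference-of-powers identity, which the paper itself uses in its stability appendix and in Theorem \ref{thm:defect}. Your summands therefore mix $X^k$ on the left with $Y^{K-1-k}$ on the right, and your two congruence steps handle that mixed case. Equivalently, $\operatorname{tr}(\Delta X^k\Delta Y^{K-1-k})=\|X^{k/2}\Delta Y^{(K-1-k)/2}\|_F^2$, which is the paper's squared-norm bound with different matrices on the two sides.

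What each approach buys:
\begin{itemize}
\item Your route is purely algebraic: a finite identity with no integral, and the spectral hypothesis is used only at $X$ and $Y$ themselves rather than along a path.
\item The paper's derivative formulation is the standard strong-monotonicity template. Through the divided-difference form of the derivative of a spectral function, it extends to real powers $p\ge1$ or other functions with $f'\ge c>0$, where no telescoping identity is available.
\end{itemize}
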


\begin{proof}
Let \(E=X-Y\) and \(Z_t=Y+tE\succeq qI\). The Fr\'echet derivative is
\begin{equation}
D(Z_t^K)[E]=\sum_{\ell=0}^{K-1}Z_t^\ell E Z_t^{K-1-\ell}.
\end{equation}
For each summand,
\begin{align}
\left\langle E,Z_t^\ell E Z_t^{K-1-\ell}\right\rangle_F
&=\left\|Z_t^{\ell/2}E
Z_t^{(K-1-\ell)/2}\right\|_F^2\\
&\ge q^{K-1}\|E\|_F^2.
\end{align}
Integrating the derivative along \(Z_t\) and applying Cauchy--Schwarz gives
\begin{equation}
\|X^K-Y^K\|_F\|E\|_F
\ge\langle X^K-Y^K,E\rangle_F
\ge Kq^{K-1}\|E\|_F^2,
\end{equation}
which proves \eqref{eq:power-lower}.
\end{proof}

\paragraph{Centered matrix-power bound.}
Let symmetric \(H_i\) satisfy \(\|H_i\|_2\le L\),
\(\bar H=\sum_i\alpha_iH_i\), and \(\Delta_i=H_i-\bar H\). For every
integer \(k\ge2\),
\begin{align}
\left\|\bar H^k-\sum_i\alpha_iH_i^k\right\|_F
&\le \binom{k}{2}L^{k-2}
\sum_i\alpha_i\|\Delta_i\|_2\|\Delta_i\|_F\nonumber\\
&\le \binom{k}{2}L^{k-2}\operatorname{tr}(V_H).
\label{eq:centered-power}
\end{align}

\begin{proof}
For \(F_k(X)=X^k\), the second Fr\'echet derivative is
\begin{equation}
D^2F_k(X)[E,E]
=2\sum_{0\le a<b\le k-1}
X^aEX^{b-a-1}EX^{k-1-b}.
\end{equation}
Along \(X_t=\bar H+t\Delta_i=(1-t)\bar H+tH_i\), one has
\(\|X_t\|_2\le L\). Each of the \(k(k-1)\) ordered insertion terms is
therefore bounded in Frobenius norm by
\(L^{k-2}\|\Delta_i\|_2\|\Delta_i\|_F\). Taylor's formula with integral
remainder gives
\begin{equation}
H_i^k=\bar H^k+DF_k(\bar H)[\Delta_i]
+\int_0^1(1-t)D^2F_k(X_t)[\Delta_i,\Delta_i]dt.
\end{equation}
Averaging cancels the first derivative since
\(\sum_i\alpha_i\Delta_i=0\). The integral contributes the factor
\(\int_0^1(1-t)dt=1/2\), leaving \(\binom{k}{2}\). Finally,
\begin{equation}
\sum_i\alpha_i\|\Delta_i\|_2\|\Delta_i\|_F
\le\sum_i\alpha_i\|\Delta_i\|_F^2
=\operatorname{tr}(V_H),
\end{equation}
which proves \eqref{eq:centered-power}.
\end{proof}

For \(p=T/K>1\), symmetric \(H\) with \(\|H\|_2\le L\), and \(\eta L<1\),
the spectral transform \(f_p(H):=(I-(I-\eta H)^p)/\eta\) satisfies
\((I-\eta f_p(H))^K=(I-\eta H)^T\). The following theorem separates the
universal leading term from higher-order, possibly noncommutative,
corrections for every real compression ratio.

\begin{theorem}[Arbitrary-ratio defect decomposition]
\label{thm:defect}
Let real \(p>1\), symmetric \(H_i\) with \(\|H_i\|_2\le L\),
\(\eta L<1\), and suppose the
sources share an optimum. Define
\begin{equation}
H_{\rm ind}=\sum_i\alpha_i f_p(H_i),\quad
H_J=f_p(\bar H),\quad J_p=H_J-H_{\rm ind}.
\end{equation}
Let \(r_H:=\operatorname{tr}(V_H)^2/\|V_H\|_F^2\) be the effective rank
(participation ratio) of \(V_H\ne0\), so that
\(\operatorname{tr}(V_H)=\sqrt{r_H}\,\|V_H\|_F\) for \(V_H\succeq0\) and
\(1\le r_H\le\operatorname{rank}(V_H)\), and, for \(0\le x<1\), define
\begin{equation}
c_p(x):=\sum_{k=3}^{\infty}\left|\binom pk\right|
\binom{k}{2}x^{k-3}.
\end{equation}
Without any commutativity assumption,
\begin{align}
J_p&=\binom{p}{2}\eta V_H+R_p,\label{eq:defect-decomp}\\
R_p&=\sum_{k=3}^{\infty}(-1)^{k+1}\binom{p}{k}\eta^{k-1}
\left(\bar H^k-\sum_i\alpha_iH_i^k\right),\nonumber\\
\|R_p\|_F&\le \eta^2L c_p(\eta L)\operatorname{tr}(V_H)\nonumber\\
&= \eta^2L c_p(\eta L)\sqrt{r_H}\|V_H\|_F.
\label{eq:defect-remainder}
\end{align}
Let \(B_{\rm ind}=I-\eta H_{\rm ind}\) and \(B_J=I-\eta H_J\), and write
\(A_{\rm ind}:=B_{\rm ind}^K\) and \(A_R:=B_J^K\) for the \(K\)-step
contractions. The endpoint defect is exactly
\begin{equation}
A_{\rm ind}-A_R
=\eta\sum_{\ell=0}^{K-1}B_{\rm ind}^{K-1-\ell}J_pB_J^\ell.
\label{eq:defect-endpoint}
\end{equation}
For \(K=1\),
\begin{equation}
\mathcal E_{\rm ind}=\eta^2\|J_pM_0^{1/2}\|_F^2,
\label{eq:defect-error}
\end{equation}
For arbitrary \(K\), if \(M_0\succeq\lambda_0I\) and
\(q:=(1-\eta L)^p\), then
\begin{equation}
\sqrt{\mathcal E_{\rm ind}}
\ge K\eta q^{K-1}\sqrt{\lambda_0}\|J_p\|_F
\ge K\eta^2 q^{K-1}\sqrt{\lambda_0}\|V_H\|_F
\left[\binom{p}{2}-\eta Lc_p(\eta L)\sqrt{r_H}\right]_+.
\label{eq:defect-lower}
\end{equation}
\end{theorem}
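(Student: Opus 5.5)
The plan is to expand $f_p$ as a power series, cancel the linear terms by averaging, and read off $V_H$ from the quadratic term. After that, a telescoping identity handles the $K$-step endpoint, and Lemma \ref{lem:power-lower} gives the lower bound.

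\textbf{Series expansion and the decomposition \eqref{eq:defect-decomp}.} Since $\|\eta H\|_2\le\eta L<1$ for every $H\in\{H_i,\bar H\}$, the generalized binomial series for $(I-\eta H)^p$ converges absolutely in operator norm. Substituting it into $f_p$ gives
\[
f_p(H)=\sum_{k\ge1}(-1)^{k+1}\binom pk\eta^{k-1}H^k .
\]
Forming $J_p=f_p(\bar H)-\sum_i\alpha_if_p(H_i)$ term by term, the $k=1$ terms cancel because $\bar H=\sum_i\alpha_iH_i$. The $k=2$ term is $-\binom p2\eta\big(\bar H^2-\sum_i\alpha_iH_i^2\big)=\binom p2\eta V_H$. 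The tail $k\ge3$ is exactly the stated $R_p$. No commutativity is used, since every step is linear in the matrix powers.

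\textbf{Bounding $R_p$.} Apply the centered matrix-power bound \eqref{eq:centered-power} to each $k\ge3$. This bounds the Frobenius norm of the $k$-th term by
\[
\left|\binom pk\right|\eta^{k-1}\binom k2L^{k-2}\operatorname{tr}(V_H)=\eta^2L\left|\binom pk\right|\binom k2(\eta L)^{k-3}\operatorname{tr}(V_H).
\]
Summing over $k\ge3$ gives $\eta^2Lc_p(\eta L)\operatorname{tr}(V_H)$. The series $c_p$ converges for $x<1$ because $|\binom pk|$ grows at most polynomially in $k$. Finally, $\operatorname{tr}(V_H)=\sqrt{r_H}\|V_H\|_F$ holds by the definition of $r_H$, using $V_H\succeq0$ from \eqref{eq:vh}.

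\textbf{Endpoint identity and the case $K=1$.} From $H_{\rm ind}=H_J-J_p$ we get $B_{\rm ind}-B_J=\eta J_p$. The standard telescoping identity $X^K-Y^K=\sum_{\ell}X^{K-1-\ell}(X-Y)Y^\ell$ then gives \eqref{eq:defect-endpoint}. Under a common optimum $\theta^*$, both the composed map and the real map fix $\theta^*$. This is because each $f_p(H_i)$ is paired with the correspondingly transformed linear statistic $f_p(H_i)\theta^*$, and the union of these is $H_{\rm ind}\theta^*$. Consequently the endpoint difference is $(A_{\rm ind}-A_R)(\theta_0-\theta^*)$, so that
\[
\mathcal E_{\rm ind}=\|(A_{\rm ind}-A_R)M_0^{1/2}\|_F^2 .
\]
For $K=1$ this equals $\eta^2\|J_pM_0^{1/2}\|_F^2$, which is \eqref{eq:defect-error}.

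\textbf{Lower bound \eqref{eq:defect-lower}.} If $M_0\succeq\lambda_0I$, then $\mathcal E_{\rm ind}\ge\lambda_0\|A_{\rm ind}-A_R\|_F^2$. To apply Lemma \ref{lem:power-lower} with $X=B_{\rm ind}$ and $Y=B_J$, I need both matrices to dominate $qI$.
\begin{itemize}
\item For $B_J$: $B_J=I-\eta f_p(\bar H)=(I-\eta\bar H)^p$, and $I-\eta\bar H\succeq(1-\eta L)I\succ0$, so $B_J\succeq qI$ by the spectral mapping theorem.
\item For $B_{\rm ind}$: $B_{\rm ind}=\sum_i\alpha_i(I-\eta H_i)^p$, because the identity terms recombine using $\sum_i\alpha_i=1$. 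This is a convex combination of matrices each $\succeq qI$, hence $B_{\rm ind}\succeq qI$.
\end{itemize}
This positivity check is the step I expect to need the most care. In particular, the step $I-\eta f_p(H)=(I-\eta H)^p$ requires the positive branch $\eta L<1$ so that the real power is well defined.

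With positivity in hand, the lemma gives
\[
\|A_{\rm ind}-A_R\|_F\ge Kq^{K-1}\|B_{\rm ind}-B_J\|_F=K\eta q^{K-1}\|J_p\|_F ,
\]
which is the first inequality. For the second, the reverse triangle inequality gives
\[
\|J_p\|_F\ge\binom p2\eta\|V_H\|_F-\|R_p\|_F\ge\eta\|V_H\|_F\left[\binom p2-\eta Lc_p(\eta L)\sqrt{r_H}\right],
\]
using the remainder bound above. Since the left side is nonnegative, the bracket may be replaced by its positive part.
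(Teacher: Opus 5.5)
Your proposal is correct and follows the paper's proof essentially step for step. It uses the generalized binomial expansion of $f_p$ with the linear term cancelling, and the centered matrix-power bound applied termwise to $R_p$. It then uses the difference-of-powers identity for the endpoint, and Lemma~\ref{lem:power-lower} applied to $B_{\rm ind}=\sum_i\alpha_i(I-\eta H_i)^p\succeq qI$ and $B_J=(I-\eta\bar H)^p\succeq qI$, followed by the reverse triangle inequality. Your explicit check that both maps fix $\theta^*$, which gives $\mathcal E_{\rm ind}=\|(A_{\rm ind}-A_R)M_0^{1/2}\|_F^2$, spells out a step the paper uses without comment.
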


\begin{proof}[Proof of Theorem \ref{thm:defect}]
Because \(\|\eta H\|_2<1\), the generalized binomial series converges
absolutely in operator norm and gives
\begin{equation}
f_p(H)=\sum_{k=1}^{\infty}(-1)^{k+1}
\binom{p}{k}\eta^{k-1}H^k.
\end{equation}
For integer \(p\), the series terminates at \(k=p\). No product between
distinct source Hessians occurs, so no commutativity assumption is needed.
Subtracting the weighted source expansion from the expansion at \(\bar H\),
the \(k=1\) terms cancel. Since
\begin{equation}
\bar H^2-\sum_i\alpha_iH_i^2=-V_H,
\end{equation}
the \(k=2\) term is \(\binom{p}{2}\eta V_H\), and the remaining terms are
exactly \(R_p\) in \eqref{eq:defect-decomp}. Applying
\eqref{eq:centered-power} termwise gives
\begin{align}
\|R_p\|_F
&\le \eta^2L\operatorname{tr}(V_H)
\sum_{k=3}^{\infty}\left|\binom pk\right|
\binom{k}{2}(\eta L)^{k-3}\nonumber\\
&=\eta^2Lc_p(\eta L)\operatorname{tr}(V_H).
\end{align}
The series defining \(c_p\) converges for \(\eta L<1\). Since
\(V_H\succeq0\), \(\operatorname{tr}(V_H)=\sqrt{r_H}\,\|V_H\|_F\) by the
definition of the effective rank, proving \eqref{eq:defect-remainder}. (With
the algebraic rank \(\rho=\operatorname{rank}V_H\) one has instead
\(\operatorname{tr}(V_H)\le\sqrt{\rho}\|V_H\|_F\). Since \(r_H\le\rho\), the
effective-rank form is the sharper of the two.)

The exact joint contraction is
\(B_J=I-\eta f_p(\bar H)=(I-\eta\bar H)^p\), hence
\(B_J^K=(I-\eta\bar H)^T=A_R\). Also
\(B_{\rm ind}-B_J=\eta J_p\). The difference-of-powers identity gives
\begin{equation}
B_{\rm ind}^K-B_J^K
=\eta\sum_{\ell=0}^{K-1}
B_{\rm ind}^{K-1-\ell}J_pB_J^\ell,
\end{equation}
establishing \eqref{eq:defect-endpoint}. When \(K=1\), the centered endpoint
difference is \(\eta J_p(\theta_0-\theta^*)\), which yields
\eqref{eq:defect-error}.

For general \(K\), set
\(D=B_{\rm ind}^K-B_J^K\). Since \(D\) is symmetric and
\(M_0\succeq\lambda_0I\),
\begin{equation}
\sqrt{\mathcal E_{\rm ind}}
=\|DM_0^{1/2}\|_F
\ge\sqrt{\lambda_0}\|D\|_F.
\end{equation}
Moreover,
\begin{equation}
B_{\rm ind}=\sum_i\alpha_i(I-\eta H_i)^p\succeq qI,
\qquad B_J=(I-\eta\bar H)^p\succeq qI,
\quad q=(1-\eta L)^p.
\end{equation}
Lemma \ref{lem:power-lower}, together with
\(B_{\rm ind}-B_J=\eta J_p\), implies
\begin{equation}
\|D\|_F\ge K\eta q^{K-1}\|J_p\|_F.
\end{equation}
Finally, the reverse triangle inequality combined with
\eqref{eq:defect-remainder} gives
\begin{equation}
\|J_p\|_F\ge\eta\|V_H\|_F
\left[\binom p2-\eta Lc_p(\eta L)\sqrt{r_H}\right]_+,
\end{equation}
proving \eqref{eq:defect-lower}.
\end{proof}

Figure \ref{fig:separation-scan} visualizes the sufficient-separation bracket
\(s_{p,r}(x)=\binom p2-xc_p(x)\sqrt r\). Positivity certifies a nonzero lower
bound, whereas nonpositivity only means that this sufficient condition is uninformative. The exact \(p=2\) case is omitted because \(c_2\equiv0\) and the
bracket is one.
\begin{figure}[t]
\centering
\includegraphics[width=0.96\linewidth]{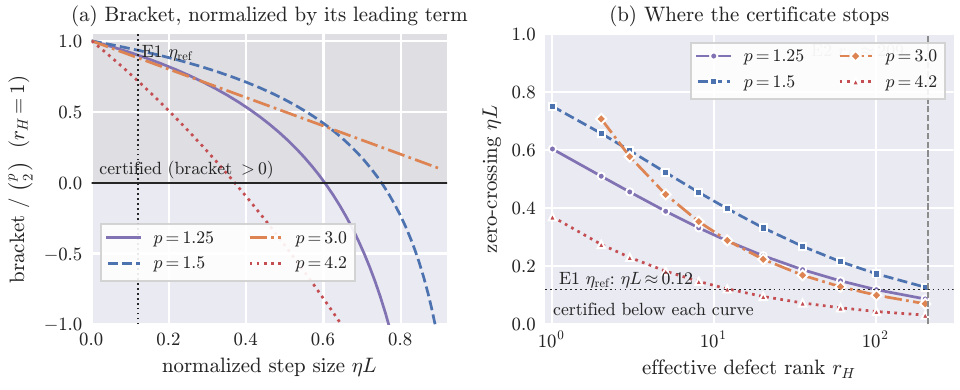}
\caption{Where the arbitrary-ratio lower bound is certified.
The zero-crossing of Theorem \ref{thm:defect}'s sufficient-separation
bracket as a function of effective defect rank, for several compression
ratios; the bound is certified below each curve, and larger \(p\), step
size, or rank can make it uninformative even though the exact composition gap
remains nonnegative. The dotted line marks
\(\eta L\approx0.12\), the value of the ReLU 2{,}000-example subset
diagnostic at \(\eta_{\rm ref}\) (the calibration cells of Figure
\ref{fig:e1-full} lie at \(\eta L=0.04\)--\(0.11\)); the
independent-versus-joint comparison operates at \(\eta L\approx1\),
outside the range, where only the exact \(p=2\) identity is invoked.}
\label{fig:separation-scan}
\end{figure}

\subsection{Proof of the flow-limit boundary}
\label{app:boundaries}

\begin{proposition}[Flow-limit boundary]
\label{prop:flow-limit}
Fix \(p>1\), bounded symmetric source Hessians, and a common optimum. For
exact quadratic constituents, the RMS composition gap is \(O(\eta^2)\) for
fixed \(K\), and \(O(\eta)\) under \(K\eta\to\tau/p\). Hence the finite-step
Jensen mechanism vanishes as \(\eta\to0\).
\end{proposition}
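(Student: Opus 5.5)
The plan is to read both claims off Theorem \ref{thm:defect}. Under a common optimum the composed $K$-step endpoint deviates from the real-union endpoint by $(A_{\rm ind}-A_R)(\theta_0-\theta^*)$. Hence
\[
\sqrt{\mathcal E_{\rm ind}}=\|(A_{\rm ind}-A_R)M_0^{1/2}\|_F\le\|A_{\rm ind}-A_R\|_F\,\|M_0^{1/2}\|_2 .
\]
So it suffices to bound $\|A_{\rm ind}-A_R\|_F$ uniformly in the regimes of interest. By \eqref{eq:defect-endpoint} this difference equals $\eta\sum_{\ell=0}^{K-1}B_{\rm ind}^{K-1-\ell}J_pB_J^\ell$.

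The first step is to control the contractions. On the positive branch, $B_J=(I-\eta\bar H)^p$ and $B_{\rm ind}=\sum_i\alpha_i(I-\eta H_i)^p$ are symmetric with $0\preceq B\preceq I$. Their operator norms are therefore at most one, so submultiplicativity gives
\[
\|A_{\rm ind}-A_R\|_F\le K\eta\|J_p\|_F .
\]
The second step bounds $J_p$ using \eqref{eq:defect-decomp} and \eqref{eq:defect-remainder}:
\[
\|J_p\|_F\le\eta\Big[\tbinom p2+\eta Lc_p(\eta L)\sqrt{r_H}\Big]\|V_H\|_F .
\]
As $\eta\to0$, the quantity $c_p(\eta L)$ stays bounded by $c_p(x_0)$ for any fixed $x_0<1$, and $\|V_H\|_F$ is bounded because the Hessians are bounded. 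This gives $\|J_p\|_F=O(\eta)$.

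Combining the two steps, $\sqrt{\mathcal E_{\rm ind}}\le CK\eta^2\|V_H\|_F\|M_0\|_2^{1/2}$. For fixed $K$ this is $O(\eta^2)$, matching the exact $p=2$, $K=1$ law $\eta^2\|V_HM_0^{1/2}\|_F$.

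The main subtlety is the scaling $K\eta\to\tau/p$. There $K\sim\tau/(p\eta)$, so the bound $K\eta\cdot O(\eta)=O(\tau\eta/p)=O(\eta)$ follows directly from the same estimate, since the operator norms of the $B$'s never exceed one. I would remark that exponential decay from $\lambda_{\min}$ could sharpen this, but it is not needed. The one point to check carefully is that the constant is uniform in $\eta$. This holds because $c_p$ is increasing on $[0,1)$ and $\eta L\le x_0$ eventually.

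Finally, "the mechanism vanishes" follows because both bounds tend to zero as $\eta\to0$. This is consistent with the gradient-flow limit, in which $f_p(H)\to pH$ becomes linear and hence affine, as required by Proposition \ref{thm:affinity}.
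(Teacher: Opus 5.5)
Your proposal is correct and follows essentially the paper's route. The difference-of-powers identity of Theorem~\ref{thm:defect}, with contractions of operator norm at most one, gives the factor $K\eta\|J_p\|$; the binomial expansion, which you make explicit through the remainder bound of Theorem~\ref{thm:defect}, gives $\|J_p\|=O(\eta)$; and boundedness of $K\eta$ gives the $O(\eta)$ rate in the second regime. As in the paper, the bound $\|B_{\rm ind}\|_2,\|B_J\|_2\le1$ tacitly uses $H_i\succeq0$. For merely bounded symmetric Hessians you pick up a factor $(1+\eta L)^{p(K-1)}\le e^{pLK\eta}$, which stays bounded in both regimes, so the conclusion is unaffected.
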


\begin{proof}[Proof of Proposition \ref{prop:flow-limit}]
For bounded symmetric \(H\) and fixed real \(p>1\), the generalized binomial
series is uniform for sufficiently small \(\eta\) and gives
\begin{equation}
f_p(H)=pH-\binom p2\eta H^2+O(\eta^2).
\end{equation}
Consequently,
\begin{equation}
J_p=f_p(\bar H)-\sum_i\alpha_i f_p(H_i)
=\binom p2\eta V_H+O(\eta^2),
\end{equation}
so \(\|J_p\|_2=O(\eta)\). Under a common optimum, the endpoint identity in
Theorem \ref{thm:defect} and bounded contraction factors imply
\begin{equation}
\mathcal C_{\rm struct}
\le \eta K\|J_p\|_2\sqrt{\operatorname{tr}(M_0)}.
\end{equation}
This is \(O(\eta^2)\) for fixed \(K\). If \(K\eta\to\tau/p\), the prefactor
\(\eta K\) remains bounded and the same expression is \(O(\eta)\). In both
regimes the finite-step composition mechanism vanishes.
\end{proof}

\subsection{Different optima via affine augmentation}

For source \(i\), introduce the homogeneous-coordinate one-step map and its
generator
\begin{equation}
\mathsf A_i=
\begin{bmatrix}I-\eta H_i&\eta b_i\\0&1\end{bmatrix}
=I_{d+1}+\eta\mathsf G_i,
\qquad
\mathsf G_i=
\begin{bmatrix}-H_i&b_i\\0&0\end{bmatrix}.
\end{equation}
Write \(\bar{\mathsf A}=\sum_i\alpha_i\mathsf A_i\) and
\(\bar{\mathsf G}=\sum_i\alpha_i\mathsf G_i\).

\begin{proposition}[Exact affine arbitrary-ratio law]
\label{prop:affine-ratio}
Let \(H_i\succeq0\), \(b_i\in\operatorname{range}(H_i)\), and
\(\eta\|H_i\|_2<1\). For any real \(p>1\), exact source-wise time compression replaces
\(\mathsf A_i\) by its principal power \(\mathsf A_i^p\). Let
\begin{equation}
\mathsf A_{\rm ind}=\sum_i\alpha_i\mathsf A_i^p,
\qquad
\mathsf A_J=\left(\sum_i\alpha_i\mathsf A_i\right)^p,
\qquad
\mathsf D_p=\mathsf A_{\rm ind}-\mathsf A_J.
\end{equation}
Then, without common optima or commutativity,
\begin{align}
\mathsf D_p
&=\sum_{k=2}^{\infty}\binom{p}{k}\eta^k
\left(\sum_i\alpha_i\mathsf G_i^k-\bar{\mathsf G}^{k}\right),
\label{eq:affine-ratio}\\
\mathsf A_{\rm ind}^{K}-\mathsf A_J^{K}
&=\sum_{\ell=0}^{K-1}\mathsf A_{\rm ind}^{K-1-\ell}
\mathsf D_p\mathsf A_J^\ell.
\label{eq:affine-endpoint}
\end{align}
For \(p=2\), the augmented defect matrix is exactly
\begin{equation}
\mathsf D_2
=\eta^2
\begin{bmatrix}V_H&-w\\0&0\end{bmatrix},
\end{equation}
recovering Theorem \ref{thm:structural}.
\end{proposition}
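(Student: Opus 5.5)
The plan is to work entirely in homogeneous coordinates, where each affine one-step map becomes a linear map on \(\mathbb R^{d+1}\). Then exact time compression, the series expansion, and the \(K\)-step telescoping all reduce to statements about ordinary matrices. This lets us reuse the machinery of Theorem \ref{thm:defect} without any common-optimum assumption.

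\textbf{Step 1: source-wise compression is the principal power \(\mathsf A_i^p\).} The block upper-triangular \(\mathsf A_i\) has eigenvalues those of \(I-\eta H_i\), all in \((0,1]\), together with \(1\). Under (A2) the block \(\eta b_i\) lies in the range of \(H_i\), which is the complement of the unit eigenspace of \(I-\eta H_i\). Hence \(\mathsf A_i\) is diagonalizable with spectrum in \((0,1]\), and its principal \(p\)-th power is well defined. Concretely, on \(\operatorname{range}(H_i)\) shift coordinates by \(u_i=H_i^\dagger b_i\). In these coordinates \(\mathsf A_i\) acts as \((I-\eta H_i)\oplus 1\), so \(\mathsf A_i^p\) has upper-left block \((I-\eta H_i)^p=I-\eta f_p(H_i)\) and the same fixed point \(u_i\). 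This is exactly the affine one-step map with statistics \((f_p(H_i),f_p(H_i)H_i^\dagger b_i)\) of Lemma \ref{lem:realizability}. Since \((\mathsf A_i^p)^K=\mathsf A_i^T\), it reproduces \(T\) real steps. In the same way, the real union has one-step map \(\bar{\mathsf A}\), because union acts linearly on \((H,b)\), and its exact compression is \(\bar{\mathsf A}^p=\mathsf A_J\). The union of distillates averages the statistics, which means averaging the augmented one-step matrices, giving \(\mathsf A_{\rm ind}\).

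\textbf{Step 2: series expansion, giving \eqref{eq:affine-ratio}.} Expand \((I+\eta\mathsf G)^p=\sum_k\binom pk\eta^k\mathsf G^k\). This needs care, because \(\|\eta\mathsf G_i\|\) need not be below one in operator norm. The fallback is that \(\eta\mathsf G_i\) has spectrum \(\{-\eta\lambda\}\cup\{0\}\subset(-1,0]\) and is diagonalizable. The series then converges spectrally for each source, via the similarity from Step 1. For \(\bar{\mathsf G}\), note that \(\bar{\mathsf G}\) has the same structure with \(\bar H\) and \(\bar b\in\operatorname{range}(\bar H)\), by the kernel-intersection argument in Lemma \ref{lem:realizability}. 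So the binomial series for each term converges as a matrix-function identity. Subtracting, the \(k=0\) and \(k=1\) terms cancel by linearity of the average, which leaves \eqref{eq:affine-ratio}. No products of distinct \(\mathsf G_i\) appear, so commutativity is never needed.

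\textbf{Step 3: the \(K\)-step identity \eqref{eq:affine-endpoint}.} This is the telescoping difference-of-powers identity \(X^K-Y^K=\sum_\ell X^{K-1-\ell}(X-Y)Y^\ell\) with \(X=\mathsf A_{\rm ind}\) and \(Y=\mathsf A_J\). It holds for any square matrices.

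\textbf{Step 4: the case \(p=2\).} The series terminates at \(k=2\), so \(\mathsf D_2=\eta^2(\sum_i\alpha_i\mathsf G_i^2-\bar{\mathsf G}^2)\). Compute \(\mathsf G_i^2=\begin{bmatrix}H_i^2&-H_ib_i\\0&0\end{bmatrix}\) and \(\bar{\mathsf G}^2=\begin{bmatrix}\bar H^2&-\bar H\bar b\\0&0\end{bmatrix}\). Their difference is \(\begin{bmatrix}V_H&-w\\0&0\end{bmatrix}\) by \eqref{eq:vh-w}. Applying \(\mathsf D_2\) to \((\theta_0,1)\) gives \(\eta^2(V_H\theta_0-w)\), which recovers \eqref{eq:affine-law}.

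\textbf{Main obstacle.} I expect the hardest part to be the convergence and principal-branch justification in Step 2, since the augmented generators are non-normal and the norm bound fails. My first attempt would be the explicit similarity to \((I-\eta H_i)\oplus1\). Failing that, I would argue via the holomorphic functional calculus on a disc of radius less than one around \(0\), which contains the spectra of all \(\eta\mathsf G_i\) and of \(\eta\bar{\mathsf G}\).
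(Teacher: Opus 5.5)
Your proof is correct and follows the paper's route: the shift by \(u_i=H_i^\dagger b_i\) that makes \(\mathsf A_i\) similar to \(\operatorname{diag}(I-\eta H_i,1)\), the binomial expansion in which the \(k=0,1\) terms cancel, the difference-of-powers identity, and direct block multiplication at \(p=2\). The only divergence is the convergence step you flag as the main obstacle: the paper avoids the non-normality by noting that \(\mathsf G_i^k\) has blocks \((-H_i)^k\) and \((-H_i)^{k-1}b_i\), so the series converges absolutely blockwise from \(\eta L<1\), which is more elementary than your similarity/spectral-radius argument, though both are valid.
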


\begin{proof}
Because \(b_i\in\operatorname{range}(H_i)\), take
\(u_i=H_i^\dagger b_i\). Translation by \(u_i\) makes \(\mathsf A_i\)
similar to \(\operatorname{diag}(I-\eta H_i,1)\), so its principal real power
exists and is precisely the affine quadratic map in Lemma
\ref{lem:realizability}. The same argument applies to the weighted union
since \(\bar b\in\operatorname{range}(\bar H)\).
Weighted union averages the affine statistics, hence it averages the
one-step homogeneous-coordinate matrices. Compressing time before
mixing gives \(\sum_i\alpha_i\mathsf A_i^p\), whereas mixing before applying
the same principal power gives \(\bar{\mathsf A}^{p}\). Expanding
\((I+\eta\mathsf G_i)^p\) by the generalized binomial series makes the
constant and linear terms cancel. Convergence is blockwise: by induction on
\(k\),
\[
\mathsf G_i^k=\begin{bmatrix}(-H_i)^k & (-H_i)^{k-1}b_i\\ 0 & 0\end{bmatrix},
\qquad k\ge1,
\]
so \(\|\eta^k\mathsf G_i^k\|\le(\eta L)^k+\eta\|b_i\|(\eta L)^{k-1}\), and
with \(\eta L<1\) the generalized binomial series
\(\sum_k\binom pk\eta^k\mathsf G_i^k\) converges absolutely in every block.
This proves \eqref{eq:affine-ratio}.
The identity \eqref{eq:affine-endpoint} is the
difference-of-powers identity. For \(p=2\), direct block multiplication gives
\begin{equation}
\sum_i\alpha_i\mathsf G_i^2-\bar{\mathsf G}^{2}
=\begin{bmatrix}
\sum_i\alpha_iH_i^2-\bar H^2&
-\sum_i\alpha_iH_ib_i+\bar H\bar b\\0&0
\end{bmatrix},
\end{equation}
whose top blocks are \(V_H\) and \(-w\).
\end{proof}

The matrices \(\mathsf A_i\) are generally non-symmetric. Consequently,
Proposition \ref{prop:affine-ratio} supplies an exact identity for different
source optima, but Lemma \ref{lem:power-lower} does not furnish an
arbitrary-\(K\) lower bound in this augmented space.

\subsection{Commuting arbitrary-ratio lower bound}

\begin{theorem}[Multi-source finite-step Jensen gap]
\label{thm:general}
Let \(H_1,\ldots,H_m\) be pairwise commuting positive-semidefinite Hessians
that share an optimum. Define
\(H_{\rm ind}=\sum_i\alpha_if_p(H_i)\) and
\(H_{\rm joint}=f_p(\sum_i\alpha_iH_i)\). Then
\(H_{\rm ind}\preceq H_{\rm joint}\), strictly on every heterogeneous common
eigendirection. Let \(\bar h_j=\sum_i\alpha_ih_{ij}\),
\(v_j=\sum_i\alpha_i(h_{ij}-\bar h_j)^2\), and
\(m_f=\min_{x\in[0,L]}p(p-1)\eta(1-\eta x)^{p-2}\). If \(U\) is the common
eigenbasis and \(\sigma_j^2=(U^\top M_0U)_{jj}\), then
\begin{equation}
\mathcal E_{\rm ind}\ge
\sum_j\sigma_j^2\left[
\frac{K\eta m_f}{2}(1-\eta\bar h_j)^{p(K-1)}v_j
\right]^2.
\label{eq:general-lower}
\end{equation}
\end{theorem}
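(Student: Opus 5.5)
Because the Hessians commute, the plan is to diagonalize everything in the common eigenbasis \(U\) and reduce the claim to the scalar problem of Theorem~\ref{thm:1d}, one mode at a time. Write \(H_i=U\operatorname{diag}(h_{i1},\dots,h_{id})U^\top\). Since \(f_p\) is a spectral function, \(f_p(H_i)=U\operatorname{diag}(g_p(h_{ij}))U^\top\). Averaging stays in the same basis, so \(H_{\rm ind}\) and \(H_{\rm joint}\) are simultaneously diagonal with entries \(\sum_i\alpha_ig_p(h_{ij})\) and \(g_p(\bar h_j)\). The Loewner statement \(H_{\rm ind}\preceq H_{\rm joint}\) then follows from the scalar concavity of \(g_p\) (the proof of Theorem~\ref{thm:1d}, \(g_p''<0\) on \([0,L]\) by positive-branch \(\eta L<1\)). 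Strictness holds on every eigendirection where the \(h_{ij}\) are not all equal.

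Next I quantify the per-mode Jensen gap. Taylor expansion with integral remainder of \(g_p\) around \(\bar h_j\), averaged over \(i\), kills the linear term. Since \(-g_p''\ge m_f\) on \([0,L]\), this gives
\[
\delta_j:=g_p(\bar h_j)-\sum_i\alpha_ig_p(h_{ij})\ge \tfrac{m_f}{2}v_j .
\]
(One needs \(h_{ij}\in[0,L]\), which holds by PSD and (A1).)

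Then I pass from curvatures to endpoints. Under the common optimum, the centered endpoint discrepancy is \((B_{\rm ind}^K-B_J^K)(\theta_0-\theta^*)\), with \(B=I-\eta H\) diagonal in \(U\). Per mode the factors are \(b_{\rm ind,j}=1-\eta\sum_i\alpha_ig_p(h_{ij})\) and \(b_{J,j}=1-\eta g_p(\bar h_j)=(1-\eta\bar h_j)^p\), so \(b_{\rm ind,j}-b_{J,j}=\eta\delta_j\ge0\). The scalar mean value inequality \(x^K-y^K\ge Ky^{K-1}(x-y)\) for \(x\ge y\ge0\) gives
\[
b_{\rm ind,j}^K-b_{J,j}^K\ge K\eta\,\delta_j\,(1-\eta\bar h_j)^{p(K-1)} .
\]
Here \(y=b_{J,j}\ge0\) uses \(\eta\bar h_j<1\). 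This is exactly why the exponent features \(\bar h_j\) rather than a worst-case \(q\).

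Finally, I compute the error: \(\mathcal E_{\rm ind}=\operatorname{tr}(D^2M_0)\), where \(D=U\operatorname{diag}(d_j)U^\top\). This equals \(\sum_j d_j^2(U^\top M_0U)_{jj}=\sum_j\sigma_j^2d_j^2\). Only the diagonal of \(U^\top M_0U\) appears, because \(D^2\) is diagonal in \(U\). Substituting \(d_j\ge \tfrac{K\eta m_f}{2}(1-\eta\bar h_j)^{p(K-1)}v_j\ge0\) yields \eqref{eq:general-lower}.

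I expect the main obstacle to be the sign and branch bookkeeping. First, \(m_f\) must be the minimum of \(-g_p''\) on \([0,L]\), which is positive for all real \(p>1\) on the stable branch. Second, the mean value bound requires both contraction factors to be nonnegative. For \(b_{\rm ind,j}\) this holds because it lies above \(b_{J,j}\ge0\), so I do not need any separate positivity of \(\sum_i\alpha_i(1-\eta h_{ij})^p\). I will check these two points carefully. Everything else is diagonal algebra.
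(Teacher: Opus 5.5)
Your proposal is correct and follows the paper's proof essentially step for step. Both reduce to the common eigenbasis, bound the per-mode Jensen remainder by \(\delta_j\ge\tfrac{m_f}{2}v_j\), apply the one-sided difference-of-powers bound \((q_j+\eta\delta_j)^K-q_j^K\ge Kq_j^{K-1}\eta\delta_j\) with \(q_j=(1-\eta\bar h_j)^p\), and sum with weights \(\sigma_j^2\). The differences are cosmetic: the paper obtains the Jensen remainder by applying Jensen to the concave function \(f_p(x)+\tfrac{m_f}{2}x^2\) rather than via Taylor's integral remainder, and you state explicitly the identity \(\mathcal E_{\rm ind}=\sum_j\sigma_j^2d_j^2\) (only the diagonal of \(U^\top M_0U\) enters), which the paper leaves implicit.
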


Since all source Hessians commute, they admit a common eigenbasis. On a common
mode, define the scalar function
\begin{equation}
f_p(x)=\frac{1-(1-\eta x)^p}{\eta}.
\end{equation}
Its second derivative is
\begin{equation}
f_p''(x)=-p(p-1)\eta(1-\eta x)^{p-2}\le-m_f<0.
\end{equation}

\begin{lemma}[Strong-concavity Jensen remainder]
For masses \(\alpha_i\ge0\) with \(\sum_i\alpha_i=1\), values \(x_i\in[0,L]\), mean
\(\bar x=\sum_i\alpha_ix_i\), and variance
\(v=\sum_i\alpha_i(x_i-\bar x)^2\),
\begin{equation}
f_p(\bar x)-\sum_i\alpha_if_p(x_i)\ge\frac{m_f}{2}v.
\end{equation}
\end{lemma}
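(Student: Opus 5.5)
The plan is to use the standard Taylor-with-integral-remainder argument for strongly concave functions, applied pointwise at the mean \(\bar x\) and then averaged over the masses. The first step is to fix the domain. Since every \(x_i\in[0,L]\), the mean \(\bar x\) and every segment \([\bar x,x_i]\) (or \([x_i,\bar x]\)) lie in \([0,L]\). Because \(\eta L<1\), the quantity \(1-\eta x\) is positive on this interval, so \(f_p\) is \(C^2\) there, including for non-integer \(p\). The bound \(f_p''(x)\le -m_f\) then holds on \([0,L]\) by the very definition of \(m_f\) as the minimum of \(-f_p''\) over \([0,L]\).

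For each \(i\), I would write Taylor's formula with integral remainder about \(\bar x\):
\begin{equation*}
f_p(x_i)=f_p(\bar x)+f_p'(\bar x)(x_i-\bar x)+(x_i-\bar x)^2\int_0^1(1-t)\,f_p''\bigl(\bar x+t(x_i-\bar x)\bigr)\,dt .
\end{equation*}
Since \(f_p''\le -m_f\) along the segment and \(\int_0^1(1-t)\,dt=\tfrac12\), the remainder is at most \(-\tfrac{m_f}{2}(x_i-\bar x)^2\). This gives the pointwise inequality
\begin{equation*}
f_p(x_i)\le f_p(\bar x)+f_p'(\bar x)(x_i-\bar x)-\tfrac{m_f}{2}(x_i-\bar x)^2 .
\end{equation*}

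Next, I multiply this inequality by \(\alpha_i\ge0\) and sum over \(i\). The linear term vanishes because \(\sum_i\alpha_i(x_i-\bar x)=0\), and the constant term sums to \(f_p(\bar x)\) because \(\sum_i\alpha_i=1\). What remains is \(\sum_i\alpha_if_p(x_i)\le f_p(\bar x)-\tfrac{m_f}{2}v\), which is the claim after rearranging. This mirrors the averaging step already used in the proof of the centered matrix-power bound \eqref{eq:centered-power}, in its scalar form.

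There is no serious obstacle here. The only point needing care is the first step: checking that the segment stays inside \([0,L]\), so that the uniform curvature bound \(m_f\) applies and \((1-\eta x)^{p-2}\) is well defined and positive when \(p<2\). Convexity of the interval and \(\eta L<1\) settle this.
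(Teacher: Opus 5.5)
Your proof is correct and is essentially the paper's argument. The paper observes that \(q(x)=f_p(x)+\frac{m_f}{2}x^2\) is concave and applies Jensen's inequality to \(q\); your Taylor bound \(f_p(x_i)\le f_p(\bar x)+f_p'(\bar x)(x_i-\bar x)-\frac{m_f}{2}(x_i-\bar x)^2\) rearranges to exactly the supporting-line inequality for \(q\) at \(\bar x\), so averaging it is Jensen for \(q\) written out, and your check that \(\eta L<1\) keeps \((1-\eta x)^{p-2}\) well defined and positive on \([0,L]\) makes explicit what the paper leaves to the positive-branch assumption.
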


\begin{proof}
The function \(q(x)=f_p(x)+\frac{m_f}{2}x^2\) is concave. Jensen's inequality
for \(q\), followed by expansion of the quadratic term, proves the result.
\end{proof}

\begin{proof}[Proof of Theorem \ref{thm:general}]
Applying scalar Jensen mode by mode gives
\(H_{\rm ind}\preceq H_{\rm joint}\) and the curvature difference
\begin{equation}
\delta_j:=f_p(\bar h_j)-\sum_i\alpha_if_p(h_{ij})
\ge\frac{m_f}{2}v_j.
\end{equation}
Let \(q_j=1-\eta f_p(\bar h_j)=(1-\eta\bar h_j)^p\). The independent
one-step contraction on mode \(j\) is \(q_j+\eta\delta_j\), whereas the joint
contraction is \(q_j\). Therefore
\begin{align}
(q_j+\eta\delta_j)^K-q_j^K
&\ge Kq_j^{K-1}\eta\delta_j\\
&\ge\frac{K\eta m_f}{2}
(1-\eta\bar h_j)^{p(K-1)}v_j.
\end{align}
Squaring this difference, weighting by the initialization mode variance
\(\sigma_j^2\), and summing over modes gives
\eqref{eq:general-lower}.
\end{proof}

The arbitrary-ratio signed theorem uses simultaneous diagonalization. The
arbitrary-ratio decomposition in Theorem \ref{thm:defect} does not, but its
higher-order remainder need not have a fixed Loewner sign. Extending the
global ordering to every real \(p>1\) and noncommuting sources would require
an operator-concavity statement that is not assumed here. This does not weaken
the Frobenius lower bound in \eqref{eq:defect-lower}: strong monotonicity of
positive matrix powers prevents endpoint cancellation without asserting a
Loewner ordering for \(J_p\).

\section{Approximate Distillation and Identifiability}
\label{app:stability}

\subsection{Difference-of-powers bound}

For square matrices \(X,Y\),
\begin{equation}
X^K-Y^K=\sum_{\ell=0}^{K-1}X^{K-1-\ell}(X-Y)Y^\ell.
\end{equation}
If \(\|X\|_2,\|Y\|_2\le\rho\), then
\begin{equation}
\|X^K-Y^K\|_2\le K\rho^{K-1}\|X-Y\|_2.
\label{eq:power-bound}
\end{equation}
This identity does not require commutativity.

\subsection{Proof of affine-statistic stability}

Define
\begin{align}
L_K&:=K\eta\|\Delta H\|_2,\\
C_K&:=K\eta\|\Delta b\|_2+
\frac{\eta^2K(K-1)}{2}B_b\|\Delta H\|_2,\\
\Delta_{\rm aff}&:=\left[L_K^2\operatorname{tr}(\Sigma_0)+
(L_K\|m_0\|_2+C_K)^2\right]^{1/2}.
\end{align}

\begin{theorem}[Affine-statistic stability]
\label{thm:stability}
Let \(\mathcal C_{\rm struct}\) be the RMS gap between the ideal composed
map and the real-union map, and let
\(\widehat{\mathcal C}_{\rm comp}\) be the corresponding RMS error of the
actual composed map. If the ideal and actual one-step affine contractions have
operator norm at most one and \(\|b^\circ\|_2\le B_b\), then
\[
\left|\widehat{\mathcal C}_{\rm comp}-\mathcal C_{\rm struct}\right|
\le \Delta_{\rm aff}.
\]
For contractions bounded by \(\rho\), the same conclusion holds with
\(L_K(\rho)\) and \(C_K(\rho)\) defined below.
\end{theorem}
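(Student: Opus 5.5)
We start from the observation that $\widehat{\mathcal C}_{\rm comp}$ and $\mathcal C_{\rm struct}$ are both $L_2(P_0)$ distances to the same real-union map $\TrainMap_R$. By the reverse triangle inequality for the $L_2(P_0)$ norm,
\[
\bigl|\widehat{\mathcal C}_{\rm comp}-\mathcal C_{\rm struct}\bigr|\le\mathcal C_{P_0}\bigl(\widehat\TrainMap_{\rm ind},\TrainMap^\circ_{\rm ind}\bigr),
\]
where $\TrainMap^\circ_{\rm ind}$ is the ideal composed $K$-step map with statistics $(H^\circ,b^\circ)$ and $\widehat\TrainMap_{\rm ind}$ is the actual one, with statistics $(H^\circ+\Delta H,\,b^\circ+\Delta b)$. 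It therefore suffices to bound the RMS distance between two $K$-step affine gradient-descent maps whose statistics differ by $(\Delta H,\Delta b)$.

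Write each $K$-step map in the form $\theta\mapsto A^K\theta+c$, with $A=I-\eta H$ and $c=\eta\sum_{t=0}^{K-1}A^t b$.

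For the linear part, the difference-of-powers bound \eqref{eq:power-bound} applies with $\rho=1$ and $X-Y=-\eta\Delta H$. It gives $\|\widehat A^K-A^{\circ K}\|_2\le K\eta\|\Delta H\|_2=L_K$.

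For the offset, split
\[
\hat c-c^\circ=\eta\sum_t\widehat A^t\Delta b+\eta\sum_t(\widehat A^t-A^{\circ t})b^\circ .
\]
The first sum has norm at most $K\eta\|\Delta b\|_2$. For the second, apply \eqref{eq:power-bound} again to get $\|\widehat A^t-A^{\circ t}\|_2\le t\eta\|\Delta H\|_2$. Summing over $t=0,\dots,K-1$ gives $\eta^2\tfrac{K(K-1)}2\|\Delta H\|_2B_b$. Together these yield $\|\hat c-c^\circ\|_2\le C_K$.

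Now decompose $\theta_0=m_0+u_0$ with $\E u_0=0$ and $\operatorname{Cov}(u_0)=\Sigma_0$. The endpoint difference is $D\theta_0+e$, where $D=\widehat A^K-A^{\circ K}$ and $e=\hat c-c^\circ$. Its mean square splits into two pieces:
\[
\E\|Du_0\|^2+\|Dm_0+e\|^2\le\|D\|_2^2\operatorname{tr}(\Sigma_0)+\bigl(\|D\|_2\|m_0\|_2+\|e\|_2\bigr)^2 .
\]
The cross term vanishes because $u_0$ has mean zero. Inserting the two bounds above gives $\Delta_{\rm aff}^2$.

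For the general contraction bound $\rho$, the same steps apply with factors $K\rho^{K-1}$ and $t\rho^{t-1}$ inserted. The $\rho$-version of $C_K$ also acquires the factor $\sum_t\rho^t$ in front of $\eta\|\Delta b\|_2$. These define $L_K(\rho)$ and $C_K(\rho)$.

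The main point requiring care is the offset term. One must keep the two contributions (from $\Delta b$, and from $\Delta H$ acting on $b^\circ$) separate. One must also use the hypothesis that both the ideal and the actual contractions are bounded, so that \eqref{eq:power-bound} applies to each partial power $t<K$. Everything else is the triangle inequality.
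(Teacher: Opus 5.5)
Your proof is correct and follows the paper's argument essentially step for step: the reverse triangle inequality in \(L_2(P_0)\), the difference-of-powers bound for the linear part, the telescoped split of the intercept into a \(\Delta b\) term and a \(\Delta H\)-acting-on-\(b^\circ\) term, and the decomposition \(\theta_0=m_0+u_0\) that removes the cross term. Your \(\rho\)-dependent constants also agree with the paper's \(L_K(\rho)\) and \(C_K(\rho)\).
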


\begin{proof}[Proof of Theorem \ref{thm:stability}]
The \(K\)-step affine map associated with \((H,b)\) is
\begin{equation}
\TrainMap_{H,b}^{K,\eta}(\theta_0)
=(I-\eta H)^K\theta_0+
\eta\sum_{t=0}^{K-1}(I-\eta H)^tb.
\end{equation}
\Eqref{eq:power-bound} gives the linear-map difference
\begin{equation}
\|\widehat B^K-(B^\circ)^K\|_2
\le K\eta\|\Delta H\|_2=L_K.
\end{equation}
For the affine intercept, telescope each power to obtain
\begin{align}
&\left\|\eta\sum_{t=0}^{K-1}\widehat B^t\widehat b-
\eta\sum_{t=0}^{K-1}(B^\circ)^tb^\circ\right\|_2\\
&\quad\le K\eta\|\Delta b\|_2+
\frac{\eta^2K(K-1)}{2}B_b\|\Delta H\|_2=C_K.
\end{align}
Writing \(\theta_0=m_0+u_0\) makes the centered cross term vanish, so
\begin{equation}
\E\|\widehat\TrainMap(\theta_0)-\TrainMap^\circ(\theta_0)\|_2^2
\le L_K^2\operatorname{tr}(\Sigma_0)+
(L_K\|m_0\|_2+C_K)^2.
\end{equation}
The reverse triangle inequality in \(L_2(P_0)\) proves
the theorem's bound.
\end{proof}

For contractions bounded by a general \(\rho\), replace the simplified
constants by
\begin{align}
L_K(\rho)&=K\eta\rho^{K-1}\|\Delta H\|_2,\\
C_K(\rho)&=\eta S_K(\rho)\|\Delta b\|_2+
\eta^2T_K(\rho)B_b\|\Delta H\|_2,\\
S_K(\rho)&=\sum_{t=0}^{K-1}\rho^t,\qquad
T_K(\rho)=\sum_{t=1}^{K-1}t\rho^{t-1}.
\end{align}

\subsection{From endpoint error to statistic error}

Let the ideal and actual one-step contraction matrices for source \(i\) be
\(Y_i=I-\eta H_i^\circ\) and \(X_i=I-\eta\widehat H_i\), and define
\begin{equation}
R_i=X_i^K-Y_i^K,\qquad
\epsilon_i=\operatorname{tr}(R_i^2M_0).
\end{equation}
Assume \(M_0\succeq\lambda_0I\), and that \(X_i,Y_i\) are symmetric positive
definite with spectra in \([q_{\min},\rho]\), where \(q_{\min}>0\). Lemma
\ref{lem:power-lower}, applied directly to \(X_i,Y_i\), yields
\begin{equation}
\|\widehat H_i-H_i^\circ\|_F
\le\frac{\|R_i\|_F}{K\eta q_{\min}^{K-1}}
\le\frac{\sqrt{\epsilon_i}}
{K\eta q_{\min}^{K-1}\sqrt{\lambda_0}}.
\end{equation}
Thus endpoint fidelity controls curvature statistics only under full-rank
initialization coverage and a fixed positive dynamics branch.

\begin{corollary}[Stability from identifiable endpoint fidelity]
\label{cor:endpoint-stability}
Suppose \(M_0\succeq\lambda_0I\), every ideal and actual one-step contraction
has spectrum in \([q_{\min},\rho]\), and the ideal and actual source statistics
share a fixed point \(\theta^*\). If the squared source endpoint errors are
\(\epsilon_i\), then
\begin{equation}
\left|\widehat{\mathcal C}_{\rm comp}-\mathcal C_{\rm struct}\right|
\le
\sqrt{\frac{\operatorname{tr}(M_0)}{\lambda_0}}
\left(\frac{\rho}{q_{\min}}\right)^{K-1}
\sum_i\alpha_i\sqrt{\epsilon_i}.
\label{eq:endpoint-stability}
\end{equation}
\end{corollary}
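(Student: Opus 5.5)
The plan is to reduce everything to the linear parts of the maps, using the shared fixed point, and then chain three bounds: the reverse triangle inequality in \(L_2(P_0)\), the difference-of-powers bound \eqref{eq:power-bound}, and the inversion of endpoint fidelity into curvature fidelity displayed just before the corollary.

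\textbf{Reduction to linear parts.} Because every ideal and actual source statistic fixes \(\theta^*\), so do their weighted averages. Write \(\bar X=\sum_i\alpha_iX_i\) and \(\bar Y=\sum_i\alpha_iY_i\). These are the one-step contractions of the actual and the ideal composed sets, since union averages the Hessians. The \(K\)-step composed maps are then \(\theta^*+\bar X^K(\theta_0-\theta^*)\) and \(\theta^*+\bar Y^K(\theta_0-\theta^*)\). Both \(\widehat{\mathcal C}_{\rm comp}\) and \(\mathcal C_{\rm struct}\) are \(L_2(P_0)\) distances to the same real-union map. The reverse triangle inequality therefore gives
\begin{equation*}
\bigl|\widehat{\mathcal C}_{\rm comp}-\mathcal C_{\rm struct}\bigr|
\le\|(\bar X^K-\bar Y^K)M_0^{1/2}\|_F
\le\|\bar X^K-\bar Y^K\|_2\sqrt{\operatorname{tr}(M_0)}.
\end{equation*}

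\textbf{Powers of averages.} The set of symmetric matrices with spectrum in \([q_{\min},\rho]\) is convex, so \(\bar X\) and \(\bar Y\) stay in it. Since \(q_{\min}>0\), their operator norms are at most \(\rho\). Applying \eqref{eq:power-bound} yields \(\|\bar X^K-\bar Y^K\|_2\le K\rho^{K-1}\|\bar X-\bar Y\|_2\). Next, \(\|\bar X-\bar Y\|_2\le\eta\sum_i\alpha_i\|\widehat H_i-H_i^\circ\|_2\le\eta\sum_i\alpha_i\|\widehat H_i-H_i^\circ\|_F\).

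\textbf{From endpoint error to statistic error.} For each source I invoke the bound preceding the corollary, which follows from Lemma \ref{lem:power-lower} applied to \(X_i,Y_i\succeq q_{\min}I\) together with \(M_0\succeq\lambda_0I\):
\begin{equation*}
\|\widehat H_i-H_i^\circ\|_F\le\frac{\sqrt{\epsilon_i}}{K\eta q_{\min}^{K-1}\sqrt{\lambda_0}}.
\end{equation*}
Here \(\epsilon_i=\operatorname{tr}(R_i^2M_0)\) is exactly the constituent error, again because of the shared fixed point. Substituting this into the previous steps, the factors \(K\eta\) cancel and what remains is \(\sqrt{\operatorname{tr}(M_0)/\lambda_0}\,(\rho/q_{\min})^{K-1}\sum_i\alpha_i\sqrt{\epsilon_i}\), which is \eqref{eq:endpoint-stability}.

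\textbf{Main obstacle.} The delicate step is the bookkeeping that makes the intercepts disappear and keeps the averaged contractions inside the spectral window. The shared-fixed-point hypothesis must hold for the averaged statistics, not only for each source separately. The lower bound \(q_{\min}\) must apply to the individual sources, where Lemma \ref{lem:power-lower} is used. The upper bound \(\rho\) must apply to the averages, where the difference-of-powers bound is used. I would check both with the convexity argument above before combining the estimates.
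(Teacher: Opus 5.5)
Your proposal is correct and follows the paper's proof essentially step for step. Both arguments use the same ingredients: the shared fixed point to reduce to linear parts, the per-source inversion \(\|\widehat H_i-H_i^\circ\|_F\le\sqrt{\epsilon_i}/(K\eta q_{\min}^{K-1}\sqrt{\lambda_0})\) from Lemma \ref{lem:power-lower}, the triangle inequality for the averaged contraction, the difference-of-powers bound \eqref{eq:power-bound}, \(\|AM_0^{1/2}\|_F\le\|A\|_2\sqrt{\operatorname{tr}(M_0)}\), and the reverse triangle inequality in \(L_2(P_0)\). Your explicit convexity check, showing that the averaged contractions keep operator norm at most \(\rho\), is left implicit in the paper, but it is exactly what the paper's application of \eqref{eq:power-bound} requires.
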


\begin{proof}[Proof of Corollary \ref{cor:endpoint-stability}]
Because both ideal and actual maps fix \(\theta^*\), their source endpoint
difference is \(R_i(\theta_0-\theta^*)\). The preceding display bounds each
one-step contraction difference. After weighted union,
\begin{equation}
\|\widehat B-B^\circ\|_F
\le\sum_i\alpha_i\|X_i-Y_i\|_F.
\end{equation}
Applying \eqref{eq:power-bound} and
\(\|AM_0^{1/2}\|_F\le\|A\|_2\sqrt{\operatorname{tr}(M_0)}\) yields
\begin{equation}
\mathcal C_{P_0}(\widehat B^K,(B^\circ)^K)
\le\sqrt{\frac{\operatorname{tr}(M_0)}{\lambda_0}}
\left(\frac{\rho}{q_{\min}}\right)^{K-1}
\sum_i\alpha_i\sqrt{\epsilon_i}.
\end{equation}
The reverse triangle inequality in \(L_2(P_0)\) proves
\eqref{eq:endpoint-stability}.
\end{proof}

\subsection{Proof of the sufficient detectability condition}

\begin{proof}[Proof of Corollary \ref{cor:detectability}]
Write \(E_i^H=\widehat H_i-H_i^\circ\). For \(K=1\), the source endpoint
difference around the common fixed point is
\(-\eta E_i^H(\theta_0-\theta^*)\), and therefore
\begin{equation}
\sqrt{\epsilon_i}=\eta\|E_i^HM_0^{1/2}\|_F.
\end{equation}
The independently composed statistic perturbation is
\(\sum_i\alpha_iE_i^H\). Hence
\begin{equation}
\mathcal C_{P_0}(\widehat\TrainMap_{\rm ind},\TrainMap_{\rm ind}^\circ)
=\eta\left\|\sum_i\alpha_iE_i^HM_0^{1/2}\right\|_F
\le\sum_i\alpha_i\sqrt{\epsilon_i}=\mathcal U_{\rm src}.
\end{equation}
The reverse triangle inequality against the real-union map proves
\eqref{eq:detectability-bound}. When
\(\mathcal C_{\rm struct}>\mathcal U_{\rm src}\), the same reverse triangle
inequality gives the displayed positive lower bound. Failure of this sufficient
condition does not imply non-detectability, and Proposition
\ref{prop:no-individual-control} rules out a converse based only on the
\(\epsilon_i\).
\end{proof}

\subsection{Why individual errors alone are insufficient}

\begin{proposition}[No control from individual error alone]
\label{prop:no-individual-control}
No function \(F:\mathbb R_+^m\to\mathbb R_+\) with \(F(0)=0\) can upper-bound
all composition errors using only constituent endpoint errors. Exact scalar
constituents can have unbounded composition error as their optima separate.
\end{proposition}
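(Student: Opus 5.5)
The plan is to argue by explicit counterexample in one dimension, using the exact scalar constituents of Appendix~\ref{app:oned}. Suppose such an \(F\) existed. Every configuration with \(\epsilon_1=\dots=\epsilon_m=0\) would then satisfy \(\mathcal E_{\rm ind}\le F(0)=0\). It therefore suffices to exhibit a family of exactly distilled sources whose composition error is positive. In fact we will make it unbounded, which also gives the second sentence of the statement.

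\emph{Construction.} Take \(m=2\), \(T=2\), \(K=1\), equal masses \(\alpha=1/2\), and sources \(L_i(\theta)=\tfrac{a_i}{2}(\theta-u_i)^2\) with \(0<\eta a_i<1\). Set \(H_i=a_i\) and \(b_i=a_iu_i\). The exact distillate of each source is the single example with curvature \(g_2(a_i)=a_i(2-\eta a_i)\) and linear coefficient \(\widetilde b_i=(2-\eta a_i)a_iu_i\), as in \eqref{eq:two-to-one-stats}. By Lemma~\ref{lem:realizability} it reproduces the two real steps exactly, so \(\epsilon_1=\epsilon_2=0\). Theorem~\ref{thm:structural} then applies verbatim without a common optimum. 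The composed-minus-real endpoint is \(\eta^2(V_H\theta_0-w)\), where
\[
V_H=\tfrac14(a_1-a_2)^2,\qquad w=\tfrac12(a_1^2u_1+a_2^2u_2)-\tfrac14(a_1+a_2)(a_1u_1+a_2u_2)=\tfrac14(a_1-a_2)(a_1u_1-a_2u_2),
\]
and \(\mathcal E_{\rm ind}=\eta^4\big[V_H^2\operatorname{Var}(\theta_0)+(V_H\mu_0-w)^2\big]\) by \eqref{eq:affine-error}.

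\emph{Unboundedness.} Fix \(a_1\ne a_2\) and \(P_0\), and let \(u_1=-u_2=s\to\infty\). The curvature term stays fixed. The term \(w=\tfrac14(a_1-a_2)(a_1+a_2)s\) grows linearly in \(s\). Hence \((V_H\mu_0-w)^2\sim\tfrac1{16}(a_1^2-a_2^2)^2s^2\to\infty\) while \(\epsilon_1=\epsilon_2=0\) throughout. If we also want equal curvatures, the curvature-variance term vanishes and so does \(w\). The separating optima must therefore be paired with \(a_1\ne a_2\); the choice \(a_1^2\ne a_2^2\) is needed so the leading coefficient is nonzero.

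\emph{Main obstacle.} The only delicate point is checking that the exact distillates stay realizable and exact as \(s\) grows. This holds because Lemma~\ref{lem:realizability} does not depend on the location of the optimum: each distillate is a single example with \(x=\sqrt{g_2(a_i)}\) and \(y=xu_i\). A second point is that the statement concerns \(\mathcal E_{\rm ind}\) and not the gap \(\Gamma^\star_{\rm joint}\). If one also wants the gap unbounded, note that by Lemma~\ref{lem:realizability} the joint transformed objective is realizable with one example, so \(\mathcal E^\star_{\rm joint}=0\) and \(\Gamma^\star_{\rm joint}=\mathcal E_{\rm ind}\).
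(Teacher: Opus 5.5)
Your proposal is correct and matches the paper's proof. The paper uses exactly this one-dimensional, two-source, \(T=2\), \(K=1\) construction with equal masses, distinct curvatures and optima \(\pm a\), instantiated at \(\eta=1/2\), \(h_1=1/2\), \(h_2=3/2\). This gives \(\epsilon_1=\epsilon_2=0\) and \(\mathcal E_{\rm ind}=\sigma_0^2/256+(\mu_0/16+a/8)^2\), which is positive at \(a=0\) and unbounded as \(a\to\infty\); your general formulas for \(V_H\) and \(w\) reproduce these numbers. Your separate requirement \(a_1^2\ne a_2^2\) is redundant, since it follows from \(a_1\ne a_2\) with both curvatures positive.
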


\begin{proof}[Proof of Proposition \ref{prop:no-individual-control}]
There is no function \(F:\R_+^m\to\R_+\), with
\(F(0,\ldots,0)=0\), that upper-bounds every composition error using only
individual endpoint errors. Take the one-dimensional construction with
\(\eta=1/2\), \(T=2\), \(K=1\), curvatures \(h_1=1/2\), \(h_2=3/2\), equal
masses, and minimizers \(u_1=a\), \(u_2=-a\). Individually exact synthetic
curvatures are \(7/8\) and \(15/8\), yet
\begin{equation}
\epsilon_1=\epsilon_2=0,
\qquad
\mathcal E_{\rm ind}=\frac{\sigma_0^2}{256}+\frac{a^2}{64}.
\end{equation}
Here \(\E[\theta_0]=0\). For a general initialization mean \(\mu_0\), the
right-hand side is
\(\sigma_0^2/256+(\mu_0/16+a/8)^2\).
The first term gives a positive structural gap at \(a=0\), and the second
makes the error unbounded while all constituent errors remain zero.
\end{proof}

\section{Local Nonlinear Remainder}
\label{app:nonlinear}

Let
\begin{equation}
g=\nabla F(\theta^\circ),\qquad H=\nabla^2F(\theta^\circ),
\end{equation}
and
\begin{equation}
Q(\theta)=F(\theta^\circ)+g^\top(\theta-\theta^\circ)+
\frac12(\theta-\theta^\circ)^\top H(\theta-\theta^\circ).
\end{equation}

\begin{theorem}[Hessian-Lipschitz local remainder]
\label{thm:nonlinear}
Suppose \(\nabla^2F\) is \(\nu\)-Lipschitz in operator norm on a
radius-\(r\) ball, the nonlinear \(K\)-step gradient-descent trajectory
remains in that ball, and \(\|I-\eta H\|_2\le\rho\). Then
\[
\|\TrainMap_F^{K,\eta}(\theta_0)-
\TrainMap_Q^{K,\eta}(\theta_0)\|_2
\le\frac{\eta\nu r^2}{2}\sum_{t=0}^{K-1}\rho^t.
\]
Consequently, the nonlinear real-versus-synthetic RMS gap differs from its
frozen-quadratic counterpart by at most the sum of the two remainders.
\end{theorem}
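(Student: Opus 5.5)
We compare the two trajectories step by step and telescope the errors through the frozen-quadratic dynamics. Write \(\theta_t\) for the gradient-descent iterates on \(F\) and \(\vartheta_t\) for those on \(Q\), both started at \(\theta_0\), and set \(e_t=\theta_t-\vartheta_t\) with \(e_0=0\). Since \(\nabla Q(\theta)=g+H(\theta-\theta^\circ)\), subtracting the two updates gives
\[
e_{t+1}=(I-\eta H)e_t-\eta\bigl(\nabla F(\theta_t)-\nabla Q(\theta_t)\bigr).
\]
This is the key algebraic step. The linear part of the quadratic update acts on the error exactly, so the only new error injected at step \(t\) is the Taylor remainder of the gradient, evaluated on the nonlinear trajectory itself.

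The second step bounds that remainder. By the integral form of Taylor's theorem,
\[
\nabla F(\theta)-\nabla Q(\theta)=\int_0^1\bigl(\nabla^2F(\theta^\circ+s(\theta-\theta^\circ))-\nabla^2F(\theta^\circ)\bigr)(\theta-\theta^\circ)\,ds .
\]
The \(\nu\)-Lipschitz hypothesis bounds the norm of this by \(\nu\|\theta-\theta^\circ\|^2\int_0^1 s\,ds=\tfrac{\nu}{2}\|\theta-\theta^\circ\|^2\). The segment from \(\theta^\circ\) to \(\theta_t\) lies in the ball by convexity, because the hypothesis places the nonlinear trajectory there. Hence each injected error has norm at most \(\eta\nu r^2/2\). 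We use the nonlinear iterates precisely because the hypothesis constrains them, not the quadratic ones.

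Finally, unrolling the recursion gives \(e_K=-\eta\sum_{t=0}^{K-1}(I-\eta H)^{K-1-t}(\nabla F-\nabla Q)(\theta_t)\). Applying \(\|I-\eta H\|_2\le\rho\) and reindexing yields \(\|e_K\|\le\frac{\eta\nu r^2}{2}\sum_{t=0}^{K-1}\rho^t\).

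For the consequence, apply the bound separately to the synthetic and the real objectives, obtaining \(\delta_S\) and \(\delta_R\), each uniform in \(\theta_0\). The triangle inequality for the \(L_2(P_0)\) norm \(\mathcal C_{P_0}\) then shows that the nonlinear RMS gap differs from the quadratic one by at most \(\delta_S+\delta_R\).

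The main obstacle is bookkeeping rather than analysis. We must make sure the Taylor expansion is taken around \(\theta^\circ\) along the nonlinear iterates, where the ball assumption applies, and not along the quadratic iterates. We also must not mistakenly need a Lipschitz bound on \(\nabla F\) to propagate \(e_t\); the exact linearity of \(\nabla Q\) is what avoids this.
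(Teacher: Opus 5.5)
Your proposal is correct and matches the paper's proof. It uses the same error recursion \(e_{t+1}=(I-\eta H)e_t-\eta(\nabla F-\nabla Q)(\theta_t)\), the same Taylor bound \(\frac{\nu}{2}\|\theta_t-\theta^\circ\|_2^2\le\frac{\nu r^2}{2}\) on the nonlinear iterates (which needs the ball to be centered at \(\theta^\circ=\theta_0\), as the main text specifies), and then unrolls and applies the triangle and reverse triangle inequalities in \(L_2(P_0)\) for the consequence; the only cosmetic difference is that you unroll the vector recursion before taking norms, whereas the paper unrolls the scalar inequality \(\|e_{t+1}\|_2\le\rho\|e_t\|_2+\eta\nu r^2/2\).
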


Throughout, the expansion point is the trajectory start,
\(\theta^\circ=\theta_0\). With gradients bounded by \(G\) along the path,
the \(K\)-step displacement satisfies \(r\le K\eta G=O(\eta)\), so the
remainder is \(O(\eta^3)\), which is the scaling assumed in Corollary
\ref{cor:eta-window}. For \(K=1\) the synthetic remainder vanishes
identically, since the one-step update on the synthetic loss equals the
one-step update on its quadratic expansion at \(\theta_0\), so
\(\delta_S=0\) and the uncertainty in Corollary \ref{cor:proxy-gap}
reduces to \(\Delta_{\rm aff}+\delta_R\).

\begin{proof}[Proof of Theorem \ref{thm:nonlinear}]
Hessian Lipschitzness and Taylor's theorem imply, throughout the radius-\(r\)
ball,
\begin{equation}
\|\nabla F(\theta)-g-H(\theta-\theta^\circ)\|_2
\le\frac{\nu}{2}\|\theta-\theta^\circ\|_2^2
\le\frac{\nu r^2}{2}.
\end{equation}
Let \(e_t\) be the difference between nonlinear and frozen-quadratic iterates
initialized at the same point. Their recursions give
\begin{equation}
\|e_{t+1}\|_2\le\rho\|e_t\|_2+\frac{\eta\nu r^2}{2},
\qquad e_0=0.
\end{equation}
Unrolling gives \eqref{eq:nonlinear-remainder}. For real and synthetic
maps, the reverse triangle inequality in \(L_2(P_0)\) and the ordinary
triangle inequality bound the difference between their nonlinear and
quadratic RMS gaps by \(\delta_R+\delta_S\).
\end{proof}

\subsection{Proof of the finite-step detectability window}
\begin{figure}[h]
\centering
\includegraphics[width=0.72\linewidth]{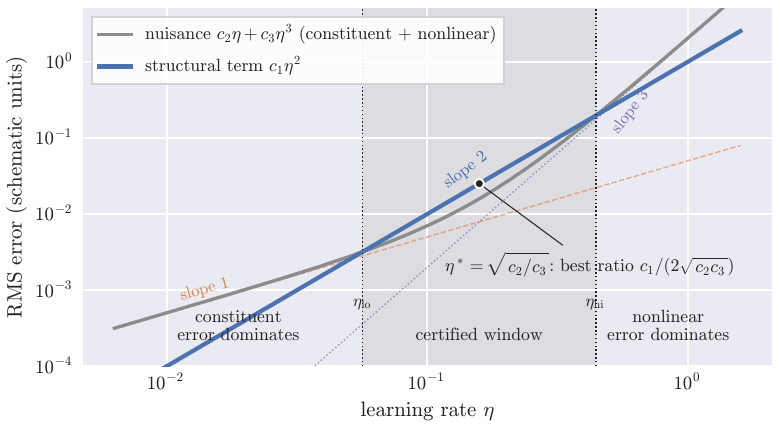}
\caption{The detectability window (schematic). The structural term
grows as \(\eta^2\), constituent error as \(\eta\), nonlinear error as
\(\eta^3\); the structural term is certifiably visible only where it exceeds
their sum, the shaded interval of Corollary \ref{cor:eta-window}.}
\label{fig:window}
\end{figure}

\begin{corollary}[Finite-step detectability window]
\label{cor:eta-window}
Suppose on a valid local interval that the observed RMS error satisfies
\(|\mathcal C_{\rm obs}-\mathcal C_{\rm struct}|\le\mathcal U_{\rm src}+\delta_S+\delta_R\),
as in Corollary \ref{cor:detectability} extended by the nonlinear
remainders of Theorem \ref{thm:nonlinear}, and that
\(\mathcal C_{\rm struct}=c_1\eta^2\),
\(\mathcal U_{\rm src}\le c_2\eta\), and
\(\delta_S+\delta_R\le c_3\eta^3\), where \(c_1,c_2,c_3>0\). Then
\(\mathfrak D_{\rm loc}(\eta)=c_1\eta/(c_2+c_3\eta^2)\) is maximized at
\(\eta^*=\sqrt{c_2/c_3}\), with value \(c_1/(2\sqrt{c_2c_3})\). A nonempty
certified interval exists iff \(c_1^2>4c_2c_3\), before intersection with the
stability interval.
\end{corollary}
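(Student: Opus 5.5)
We need to prove Corollary \ref{cor:eta-window}: define \(\mathfrak D_{\rm loc}(\eta)=c_1\eta/(c_2+c_3\eta^2)\), find its maximizer and maximum, and show that a nonempty certified interval exists iff \(c_1^2>4c_2c_3\).

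The first step is to record what "certified" means here. By the assumed inequality \(|\mathcal C_{\rm obs}-\mathcal C_{\rm struct}|\le\mathcal U_{\rm src}+\delta_S+\delta_R\), the same reverse-triangle reasoning as in Corollary \ref{cor:detectability} gives \(\mathcal C_{\rm obs}\ge\mathcal C_{\rm struct}-(\mathcal U_{\rm src}+\delta_S+\delta_R)\). This lower bound is positive whenever
\[
c_1\eta^2>c_2\eta+c_3\eta^3,
\]
because the assumed scalings only overestimate the uncertainty. I will therefore call \(\eta\) certified when this strict inequality holds. Dividing by \(\eta^2>0\) shows this is equivalent to
\[
\mathfrak D_{\rm loc}(\eta)=\frac{c_1\eta}{c_2+c_3\eta^2}>1 .
\]
So \(\mathfrak D_{\rm loc}\) is the ratio of the structural term to the uncertainty bound, and the certified set is the superlevel set \(\{\eta>0:\mathfrak D_{\rm loc}(\eta)>1\}\).

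Second, I maximize \(\mathfrak D_{\rm loc}\) on \(\eta>0\). Differentiating,
\[
\mathfrak D_{\rm loc}'(\eta)=\frac{c_1\,(c_2-c_3\eta^2)}{(c_2+c_3\eta^2)^2}.
\]
This is positive for \(\eta<\sqrt{c_2/c_3}\) and negative afterwards, so \(\eta^*=\sqrt{c_2/c_3}\) is the unique global maximizer. Substituting gives \(c_2+c_3\eta^{*2}=2c_2\), hence
\[
\mathfrak D_{\rm loc}(\eta^*)=\frac{c_1\sqrt{c_2/c_3}}{2c_2}=\frac{c_1}{2\sqrt{c_2c_3}} .
\]
As a cross-check, AM--GM gives \(c_2/\eta+c_3\eta\ge2\sqrt{c_2c_3}\), with equality exactly at \(\eta^*\).

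Third, I characterize nonemptiness. The certified set is nonempty iff \(\max\mathfrak D_{\rm loc}>1\), which is iff \(c_1>2\sqrt{c_2c_3}\), which is iff \(c_1^2>4c_2c_3\). Equivalently, the quadratic \(c_3\eta^2-c_1\eta+c_2<0\) has a nonempty solution set iff its discriminant is positive. In that case the set is the open interval between the two roots
\[
\eta_\pm=\frac{c_1\pm\sqrt{c_1^2-4c_2c_3}}{2c_3}.
\]
Both roots are positive, since their product \(c_2/c_3\) and their sum \(c_1/c_3\) are positive. The caveat "before intersection with the stability interval" simply means the scaling hypotheses only hold on the valid local interval (e.g. \(\eta L<1\)). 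The actual certified set is therefore \((\eta_-,\eta_+)\) intersected with that interval, and nonemptiness of the intersection is not claimed.

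The main obstacle is not computational. It is being explicit that certification is the strict sufficient condition \(\mathcal C_{\rm struct}>\mathcal U_{\rm src}+\delta_S+\delta_R\). It is also being explicit that the one-sided scaling bounds \(\mathcal U_{\rm src}\le c_2\eta\) and \(\delta_S+\delta_R\le c_3\eta^3\) preserve this condition, so the analysis of \(\mathfrak D_{\rm loc}\) yields a sufficient, not necessary, window.
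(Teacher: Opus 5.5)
Your proof is correct and follows the paper's own argument: both reduce certification to the strict inequality \(c_1\eta^2>c_2\eta+c_3\eta^3\), i.e.\ \(c_3\eta^2-c_1\eta+c_2<0\), read nonemptiness off the discriminant with roots \(\eta_\pm\), and obtain \(\eta^*\) by differentiating \(\mathfrak D_{\rm loc}\). Your additional observation that the certified set is exactly the superlevel set \(\{\mathfrak D_{\rm loc}>1\}\) ties the maximizer to the nonemptiness criterion, which the paper states as two separate facts.
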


\begin{proof}[Proof of Corollary \ref{cor:eta-window}]
Under the assumed bound, a positive structural signal is certified whenever
\(c_1\eta^2>c_2\eta+c_3\eta^3\) (Corollary \ref{cor:proxy-gap} gives the
same form with \(\Delta_{\rm aff}\) in place of \(\mathcal U_{\rm src}\)). Dividing by the positive factor \(\eta\)
gives \(c_3\eta^2-c_1\eta+c_2<0\), which has a nonempty solution interval iff
its discriminant is positive. The two roots are
\begin{equation}
\eta_\pm=\frac{c_1\pm\sqrt{c_1^2-4c_2c_3}}{2c_3},
\end{equation}
and the certified interval is \((\eta_-,\eta_+)\). Direct
differentiation of \(c_1\eta/(c_2+c_3\eta^2)\) gives the unique maximizer
\(\eta^*=\sqrt{c_2/c_3}\) and the stated maximum. The resulting interval must
still be intersected with the positive-branch and local-trajectory conditions.
\end{proof}

The theorem requires the trajectories to stay in the local ball. This
condition can be verified retrospectively for the local dynamics test, or enforced by
restricting the step budget and initialization radius. The theorem does not
cover a trajectory that crosses multiple curvature regimes.

\paragraph{Limits of the mechanism.}
\label{app:limits}
The mechanism vanishes in the flow limit (Proposition \ref{prop:flow-limit}),
can be masked when constituent error exceeds it (Corollary \ref{cor:detectability} then gives no conclusion), and is local, since
\eqref{eq:nonlinear-remainder} needs a slowly varying Hessian (Example
\ref{ex:network}). With terms scaling as \(c_1\eta^2,c_2\eta,c_3\eta^3\), a
certified interval in \(\eta\) exists iff \(c_1^2>4c_2c_3\) (Corollary
\ref{cor:eta-window}, Appendix Figure \ref{fig:window}). Which components a
scalar retiming can repair is treated in Appendix \ref{app:repair}.

\section{Calibration and Components That Scalar Retiming Cannot Repair}
\label{app:repair}

Propositions \ref{prop:clock} and \ref{prop:rank-barrier} below are
stated in the continuous gradient-flow limit, where retiming acts through
matrix exponentials. The rank bound of Proposition \ref{prop:rank-barrier}
carries over to discrete steps by the same argument with \((I-\eta H)^T\)
in place of \(e^{-TH}\), and its discrete form is given at the end of
Appendix \ref{app:rank-budget}. The exact scalar calibration of
Proposition \ref{prop:clock} is specific to the flow limit: in discrete
time a single step count generally cannot compensate a curvature scaling on
every mode at once, as equation \ref{eq:mode-retiming} shows.

\subsection{Mode-specific retiming}
\label{app:mode-retiming}

For commuting real and synthetic Hessians, let \(\lambda_j\) and \(\mu_j\)
denote the real and synthetic curvatures on common mode \(j\). When
\(0<1-\eta\lambda_j<1\) and \(0<1-\eta\mu_j<1\), the continuous effective
synthetic step count that exactly repairs that mode is
\begin{equation}
K_j^*=T\frac{\log(1-\eta\lambda_j)}
{\log(1-\eta\mu_j)}.
\label{eq:mode-retiming}
\end{equation}
A single scalar retiming repairs all task-relevant modes only if their
\(K_j^*\) values coincide. If the protocol requires an integer number of GD
steps, the common value must additionally be an integer or be approximated.

\subsection{Uniform source clocks}

\begin{proposition}[Exact calibration of uniform source clocks]
\label{prop:clock}
Let \(Z=\sum_i\alpha_i/c_i\). Choosing
\(\beta_i=(\alpha_i/c_i)/Z\) and \(\tau=TZ\) gives
\(e^{-\tau\sum_i\beta_ic_iH_i}=e^{-T\sum_i\alpha_iH_i}\). If the \(H_i\)
are linearly independent, exactness for fixed \(\beta_i\) requires
\(\beta_ic_i/\alpha_i\) to be constant over sources.
\end{proposition}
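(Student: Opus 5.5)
The first claim is a direct substitution, and the second reduces to injectivity of the matrix exponential on symmetric matrices followed by comparing coefficients. I read the setting as in the preamble of this appendix: gradient flow, where source \(i\)'s exact distillate acts as the curvature \(c_iH_i\) run on a uniformly rescaled clock \(c_i>0\). Retiming and reweighting the union of distillates with weights \(\beta_i\) and time \(\tau\) then gives the flow \(e^{-\tau\sum_i\beta_ic_iH_i}\). The real union over time \(T\) gives \(e^{-T\sum_i\alpha_iH_i}\). Throughout I take the \(H_i\) symmetric, as in (A1).

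\emph{Sufficiency.} Substitute \(\beta_i=(\alpha_i/c_i)/Z\) and \(\tau=TZ\). Then
\[
\tau\sum_i\beta_ic_iH_i=TZ\sum_i\frac{\alpha_i}{c_iZ}\,c_iH_i=T\sum_i\alpha_iH_i,
\]
so the two generators coincide and so do their exponentials. I will also check that \(\sum_i\beta_i=1\) by the definition of \(Z\), so the \(\beta_i\) form an admissible synthetic weighting. Here \(Z>0\) because the \(\alpha_i\) and \(c_i\) are positive.

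\emph{Necessity.} Fix \(\beta_i\) and suppose some \(\tau\) achieves \(e^{-\tau\sum_i\beta_ic_iH_i}=e^{-T\sum_i\alpha_iH_i}\). Both exponents are real symmetric, and the exponential is a bijection from real symmetric matrices onto symmetric positive definite matrices: it acts by \(\lambda\mapsto e^{\lambda}\) on eigenvalues, which is strictly monotone, and it preserves eigenspaces. Hence the unique symmetric logarithm yields \(\tau\sum_i\beta_ic_iH_i=T\sum_i\alpha_iH_i\), that is,
\[
\sum_i(\tau\beta_ic_i-T\alpha_i)H_i=0.
\]
Linear independence of the \(H_i\) forces \(\tau\beta_ic_i=T\alpha_i\) for every \(i\). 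Therefore \(\beta_ic_i/\alpha_i=T/\tau\), which is constant over sources, as claimed. In particular \(\tau>0\), and normalizing the \(\beta_i\) recovers exactly the choice in the first part with \(\tau=TZ\).

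The only step needing care is the logarithm in the necessity argument. Without symmetry, \(e^{A}=e^{B}\) does not imply \(A=B\), for instance through rotation generators differing by \(2\pi i\). So I would invoke symmetry explicitly here and remark that it is where (A1) enters. The same argument shows that if exactness is demanded for all horizons \(T\) simultaneously, differentiation at \(T=0\) gives the generator identity directly, without needing injectivity.
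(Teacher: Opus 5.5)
Your proof is correct and follows the paper's argument. Substituting the weights makes the generators \(\tau\sum_i\beta_ic_iH_i\) and \(T\sum_i\alpha_iH_i\) coincide, and linear independence then forces \(\tau\beta_ic_i=T\alpha_i\) for every source. Your one addition is to justify explicitly the passage from equal exponentials to equal generators, via injectivity of the exponential on real symmetric matrices. The paper's proof takes that step for granted.
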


\begin{proof}
For the stated weights,
\(H_S=\sum_i\beta_ic_iH_i=Z^{-1}\sum_i\alpha_iH_i\). Hence
\(\tau H_S=T\sum_i\alpha_iH_i\), and the two matrix exponentials agree. If
the \(H_i\) are linearly independent, equality of their generators requires
\(\tau\beta_ic_i=T\alpha_i\) for every source. Thus
\(\beta_ic_i/\alpha_i\) is constant across sources.
\end{proof}

\subsection{Repairability taxonomy}

The formal clock and rank results separate defects that admit a scalar
calibration from those that require a mode-dependent or higher-rank repair
(Table \ref{tab:taxonomy}). Only the semantic-mass and clock interventions
are tested empirically in this paper. The other entries summarize
theoretical consequences; a nonzero mean affine discrepancy is not by
itself a criterion of non-repairability, since a common optimum with
\(\mu_0\ne\theta^*\) gives \(V_H\mu_0-w\ne0\) while a single scalar
still repairs the map whenever the linear part and the affine offset are
matched by the same factor.

\begin{table}[h]
\caption{Theory-derived repairability taxonomy.}
\label{tab:taxonomy}
\centering
\small
\begin{tabular}{@{}lccp{4.6cm}@{}}
\toprule
Component & Signature & Diagnostic & Minimum repair required\\
\midrule
Global clock & common ratio & endpoint speed & scalar retiming\\
Source clock & coefficients \(c_i\) & source HVP scale & per-source reweighting\\
Curvature variance & \(V_H\ne0\) & \(P_{\rm comp}\) & mode-dependent if the required retimings differ (\ref{app:mode-retiming})\\
Affine conflict & \(V_H\mu_0-w\ne0\) & mean shift & scalar retiming only if it matches the linear part and the affine offset simultaneously (the scalar example of Section \ref{sec:law} is repaired by the factor \(3.6/3.5\)); otherwise mode-dependent\\
Missing subspace & rank deficit & persistent residual & not repairable by retiming\\
\bottomrule
\end{tabular}
\end{table}

\subsection{Rank-budget proof}
\label{app:rank-budget}

\Needspace{6\baselineskip}
\begin{proposition}[Rank-budget non-repairability]
\label{prop:rank-barrier}
Let \(s_1\ge\cdots\ge s_d\) be singular values of
\(B=(I-e^{-TH_R})M_0^{1/2}\). Every rank-\(r\) synthetic Hessian and scalar
retiming \(\tau\) obey
\begin{equation}
\mathcal E(\tau)\ge\sum_{j=r+1}^d s_j(B)^2.
\end{equation}
\end{proposition}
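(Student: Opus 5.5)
The plan is to reduce the statement to the Eckart--Young--Mirsky theorem. In the gradient-flow setting of this appendix, training on the real union for time \(T\) from \(\theta_0\) with common optimum \(\theta^*\) gives the centered endpoint \(e^{-TH_R}(\theta_0-\theta^*)\). A synthetic Hessian \(H_S\) of rank at most \(r\), run for retimed time \(\tau\), gives \(e^{-\tau H_S}(\theta_0-\theta^*)\). Taking the expectation over \(P_0\), exactly as in the proof of Theorem \ref{thm:structural}, the squared endpoint error is
\[
\mathcal E(\tau)=\big\|\big(e^{-\tau H_S}-e^{-TH_R}\big)M_0^{1/2}\big\|_F^2 .
\]
For a synthetic set whose optimum differs from \(\theta^*\), I will either add the mean offset term and drop it as nonnegative, or restrict to the common-optimum case in which the proposition is stated.

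The key step is to rewrite the difference using \(I\). Since
\[
e^{-\tau H_S}-e^{-TH_R}=(I-e^{-TH_R})-(I-e^{-\tau H_S}),
\]
the error becomes \(\mathcal E(\tau)=\|B-N\|_F^2\) with \(B=(I-e^{-TH_R})M_0^{1/2}\) and \(N=(I-e^{-\tau H_S})M_0^{1/2}\). The matrix \(I-e^{-\tau H_S}\) is a spectral function of \(H_S\) that vanishes on \(\ker H_S\). Its range therefore lies in \(\operatorname{range}(H_S)\), so its rank is at most \(r\) for every \(\tau\), and hence \(\operatorname{rank}N\le r\). By Eckart--Young--Mirsky, \(\|B-N\|_F^2\ge\sum_{j>r}s_j(B)^2\) for every matrix \(N\) of rank at most \(r\). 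This bound is uniform in \(\tau\) and in \(H_S\), which gives the claim.

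The only delicate point is the rank argument, which must hold for every real \(\tau\) and without symmetry assumptions beyond those used for realizable synthetic sets. If \(H_S\) is not symmetric, I will use the power series \(I-e^{-\tau H_S}=\sum_{k\ge1}(-1)^{k+1}\tau^kH_S^k/k!\). Every term has range inside \(\operatorname{range}(H_S)\), so the same rank bound follows. The discrete version, with \((I-\eta H_S)^K\) and \((I-\eta H_R)^T\), goes through identically because \(I-(I-\eta H_S)^K=\sum_{k\ge1}\binom Kk(-1)^{k+1}\eta^kH_S^k\). This is presumably the form deferred to the end of Appendix \ref{app:rank-budget}.
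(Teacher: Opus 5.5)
Your proof is correct and is essentially the paper's argument. The paper takes the orthogonal projector \(Q\) onto \(\ker H_S\), uses \(Qe^{-\tau H_S}=Q\) to obtain \(\mathcal E(\tau)\ge\|QB\|_F^2\), and finishes with the Ky Fan minimum principle over projectors of rank at least \(d-r\); this is the standard proof of the Eckart--Young--Mirsky bound you apply to \(B-N\) with \(\operatorname{rank}N\le r\).

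The one loose remark is your option of dropping a mean-offset term ``as nonnegative'' when the synthetic optimum \(u_S\) differs from \(\theta^*\). The cross term with the centered part need not be nonnegative unless \(\E[\theta_0]=\theta^*\). So either restrict to the common-optimum case, as you also propose and as the paper implicitly does, or note that \(Q\) annihilates the offset \((I-e^{-\tau H_S})(u_S-\theta^*)\), so the projected bound holds regardless.
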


\begin{proof}[Proof of Proposition \ref{prop:rank-barrier}]
Let \(Q\) be the orthogonal projector onto \(\ker(H_S)\). Since
\(\operatorname{rank}(H_S)\le r\), \(\operatorname{rank}(Q)\ge d-r\), and
\begin{equation}
Qe^{-\tau H_S}=Q.
\end{equation}
Projecting the endpoint difference gives
\begin{equation}
Q(e^{-\tau H_S}-e^{-TH_R})=Q(I-e^{-TH_R}).
\end{equation}
The norm of an orthogonal projection lower-bounds the full endpoint error, so
\begin{equation}
\mathcal E(\tau)\ge\|QB\|_F^2,
\qquad B=(I-e^{-TH_R})M_0^{1/2}.
\end{equation}
Among all projectors of rank at least \(d-r\), the Ky Fan minimum principle
gives
\begin{equation}
\|QB\|_F^2=\operatorname{tr}(QBB^\top)
\ge\sum_{j=r+1}^{d}s_j(B)^2,
\end{equation}
which proves Proposition \ref{prop:rank-barrier}. A scalar squared-loss example contributes
a rank-one positive-semidefinite outer product to its Hessian, proving the
sample-budget interpretation. In discrete time the projector satisfies
\(Q(I-\eta H_S)^K=Q\), so the same argument gives
\(\mathcal E\ge\sum_{j=r+1}^ds_j(B_{\rm disc})^2\) with
\(B_{\rm disc}=(I-(I-\eta H_R)^T)M_0^{1/2}\), which unifies the bound with
the finite-step setting of Theorems \ref{thm:structural} and
\ref{thm:defect}.
\end{proof}

\subsection{Proof of the observable-gap certificate}

\begin{corollary}[Two-to-one observable-gap certificate]
\label{cor:proxy-gap}
Assume \(T=2\) for each real source and union map, \(K=1\) for synthetic maps,
and the hypotheses of Theorems \ref{thm:stability} and
\ref{thm:nonlinear}. With \(P_{\rm comp}\) evaluated at the frozen-model
reference and nonlinear RMS bounds \(\delta_S,\delta_R\),
\begin{equation}
\left|\mathcal C_{\rm obs}-\sqrt{P_{\rm comp}}\right|
\le \zeta,\qquad \zeta:=\Delta_{\rm aff}+\delta_S+\delta_R.
\label{eq:proxy-certificate}
\end{equation}
For partitions \(a,b\), a proxy separation larger than \(\zeta_a+\zeta_b\)
certifies the corresponding observed ordering.
\end{corollary}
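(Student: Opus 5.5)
The plan is a triangle-inequality argument in \(L_2(P_0)\), splitting the gap between the observed composed map and the quadratic prediction into three layers. Write \(\Phi^{\rm obs}_{\rm ind}\) for the nonlinear composed synthetic one-step map and \(\Phi^{\rm obs}_R\) for the nonlinear real-union two-step map, so \(\mathcal C_{\rm obs}=\|\Phi^{\rm obs}_{\rm ind}-\Phi^{\rm obs}_R\|_{L_2(P_0)}\). Let \(\widehat\Phi^Q_{\rm ind}\) and \(\Phi^Q_R\) be their frozen-quadratic counterparts, with the quadratic built at the reference used for \(P_{\rm comp}\). Let \(\Phi^\circ_{\rm ind}\) be the ideal composed one-step map built from the exact transformed statistics \(2H_i-\eta H_i^2\), \((2I-\eta H_i)b_i\) of the frozen quadratics. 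By Theorem \ref{thm:structural} together with the definition \eqref{eq:pcomp}, \(\Phi^\circ_{\rm ind}-\Phi^Q_R=\eta^2 r_{\rm comp}(\theta_0)\). Hence \(\|\Phi^\circ_{\rm ind}-\Phi^Q_R\|_{L_2(P_0)}=\sqrt{P_{\rm comp}}\), and this anchor is an exact identity, not a bound.

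The chain has three steps. First, by the reverse triangle inequality, \(|\mathcal C_{\rm obs}-\|\widehat\Phi^Q_{\rm ind}-\Phi^Q_R\||\le\|\Phi^{\rm obs}_{\rm ind}-\widehat\Phi^Q_{\rm ind}\|+\|\Phi^{\rm obs}_R-\Phi^Q_R\|\). Theorem \ref{thm:nonlinear} applied pointwise and then integrated over \(P_0\) bounds these two terms by \(\delta_S\) and \(\delta_R\). Here \(\delta_S=0\) for \(K=1\), as noted after Theorem \ref{thm:nonlinear}, but I would keep it for generality. Second, \(|\|\widehat\Phi^Q_{\rm ind}-\Phi^Q_R\|-\|\Phi^\circ_{\rm ind}-\Phi^Q_R\||\le\|\widehat\Phi^Q_{\rm ind}-\Phi^\circ_{\rm ind}\|\). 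This is the difference between two one-step affine maps whose statistics differ by \((\Delta H,\Delta b)\). Theorem \ref{thm:stability} with \(K=1\) bounds it by \(\Delta_{\rm aff}\), since \(C_1=\eta\|\Delta b\|\) and \(L_1=\eta\|\Delta H\|\). Third, substitute the exact identity above. Adding the two inequalities gives \eqref{eq:proxy-certificate}.

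For the ordering claim, suppose \(\sqrt{P_{\rm comp}^a}-\sqrt{P_{\rm comp}^b}>\zeta_a+\zeta_b\). Applying the bound to each partition gives \(\mathcal C^a_{\rm obs}\ge\sqrt{P^a_{\rm comp}}-\zeta_a>\sqrt{P^b_{\rm comp}}+\zeta_b\ge\mathcal C^b_{\rm obs}\).

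I expect the main obstacle to be bookkeeping, not any hard estimate. The structural identity must be applied to the same frozen quadratic, expanded at \(\theta^\circ=\theta_0\) per initialization, that the remainders \(\delta_R,\delta_S\) compare against. Because the expansion point varies with \(\theta_0\), the statistics \((H_i,b_i)\) are \(\theta_0\)-dependent. I would therefore carry out every step pointwise in \(\theta_0\) and only integrate at the end, interpreting \(\Delta_{\rm aff}\) and \(\delta\) as RMS bounds. I would also check that the contraction-norm hypotheses of Theorem \ref{thm:stability} hold at each reference point, which is assumed in the corollary's hypotheses.
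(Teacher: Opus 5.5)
Your proposal is correct and follows essentially the same route as the paper. The exact two-to-one identity (Theorem \ref{thm:structural} with \eqref{eq:pcomp}) anchors the ideal quadratic RMS gap at \(\sqrt{P_{\rm comp}}\), Theorem \ref{thm:stability} moves it by at most \(\Delta_{\rm aff}\), and the nonlinear remainders of Theorem \ref{thm:nonlinear} add \(\delta_S+\delta_R\) through the triangle inequality in \(L_2(P_0)\). The ordering claim then comes from the same pair of one-sided bounds. Your pointwise-in-\(\theta_0\) bookkeeping for the initialization-dependent expansion point, and your remark that \(\delta_S=0\) when \(K=1\), make explicit what the paper's proof leaves implicit.
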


\begin{proof}[Proof of Corollary \ref{cor:proxy-gap}]
Theorem \ref{thm:structural} and \eqref{eq:pcomp} give the ideal quadratic RMS
gap \(\sqrt{P_{\rm comp}}\), without a common-optimum assumption.
Theorem \ref{thm:stability} changes this norm by at most
\(\Delta_{\rm aff}\). Replacing the actual and reference quadratic dynamics
by their nonlinear counterparts changes it by at most
\(\delta_S+\delta_R\), by two applications of the triangle inequality.
This proves \eqref{eq:proxy-certificate}. For partitions \(a,b\),
\begin{align}
\mathcal C_{\rm obs}^{(a)}&\ge\sqrt{P_{\rm comp}^{(a)}}-\zeta_a,\\
\mathcal C_{\rm obs}^{(b)}&\le\sqrt{P_{\rm comp}^{(b)}}+\zeta_b.
\end{align}
The displayed separation condition therefore certifies the ordering.
\end{proof}

\section{HVP Estimator and Concentration}
\label{app:hvp}

Under a common optimum, the structural statistic is
\begin{equation}
P_H:=\eta^4\operatorname{tr}(V_H^2M_0)
=\eta^4\E_{z\sim(0,M_0)}\|V_Hz\|_2^2.
\label{eq:predictor}
\end{equation}
For general \((p,K)\), its small-step leading coefficient is
\begin{equation}
P_H^{(p,K)}:=\left[K\binom p2\eta^2\right]^2
\operatorname{tr}(V_H^2M_0).
\label{eq:general-predictor}
\end{equation}
The positive-branch factor in Theorem \ref{thm:defect} calibrates a lower
bound and is not part of this leading coefficient. Two evaluations should be
kept apart. The experiments evaluate \(r_{\rm comp}(\theta_0)=V_H\theta_0-w\)
directly for each initialization draw with one HVP per source from the
source gradients (Appendix \ref{app:e1-protocol}), which includes the affine
term and needs no probes. This appendix concerns the curvature-only trace
\(P_H\), estimated with random probes at \(3m\) HVPs per probe; the
concentration result of Theorem \ref{thm:concentration} applies to that
estimator and not to the sample mean of \(\|r_{\rm comp}(\theta_0)\|_2^2\)
over \(\theta_0\sim P_0\).

For probes \(z_r\), define
\begin{equation}
\widehat P(L)=\frac{\eta^4}{R}\sum_{r=1}^R\|V_Hz_r\|_2^2,
\qquad z_r=Lg_r.
\label{eq:hvp-estimator}
\end{equation}
With \(L=I\), this estimates
\(P_{\rm iso}=\eta^4\operatorname{tr}(V_H^2)\), and shaped probes with
\(LL^\top=M_0\) estimate \eqref{eq:predictor}.

\subsection{HVP-only action}

For a probe \(v\), compute
\begin{equation}
q_i=H_iv,\qquad s_i=H_iq_i,\qquad
\bar q=\sum_i\alpha_iq_i.
\end{equation}
Then
\begin{equation}
V_Hv=\sum_i\alpha_is_i-
\sum_i\alpha_iH_i\bar q.
\end{equation}
This requires \(3m\) HVP calls per probe vector \(v\) for \(m\) sources and no explicit Hessian.

\subsection{Gaussian concentration}

\begin{theorem}[Finite-probe concentration]
\label{thm:concentration}
For independent \(g_r\sim\mathcal N(0,I)\), \(z_r=Lg_r\), and
\(A=L^\top V_H^2L\), the estimator in \eqref{eq:hvp-estimator} is unbiased
and, with probability at least \(1-2e^{-t}\),
\begin{equation}
|\widehat P_H-P_H|\le
2\eta^4\|A\|_F\sqrt{t/R}+2\eta^4\|A\|_2t/R.
\label{eq:concentration}
\end{equation}
For \(M_0=I\), relative error at most \(\varepsilon\) with probability
\(1-\delta\) is ensured by
\begin{equation}
R\ge\max\left\{
\frac{16\log(2/\delta)}{\varepsilon^2r_4(V_H)},
\frac{4\log(2/\delta)}{\varepsilon r_s(V_H)}\right\},
\end{equation}
where \(r_4(V_H)=\operatorname{tr}(V_H^2)^2/\operatorname{tr}(V_H^4)\)
and \(r_s(V_H)=\operatorname{tr}(V_H^2)/\|V_H\|_2^2\).
\end{theorem}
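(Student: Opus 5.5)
The estimator is an average of Gaussian quadratic forms, so I will reduce it to a single quadratic form in a long Gaussian vector and apply the Hanson--Wright / Laurent--Massart bound. For \(z_r=Lg_r\) with \(g_r\sim\mathcal N(0,I)\), each summand is \(\|V_Hz_r\|_2^2=g_r^\top L^\top V_H^2Lg_r=g_r^\top Ag_r\), where \(A=L^\top V_H^2L\succeq0\). Unbiasedness is then immediate: \(\E[g_r^\top Ag_r]=\operatorname{tr}(A)=\operatorname{tr}(V_H^2LL^\top)=\operatorname{tr}(V_H^2M_0)\), so \(\E\widehat P=\eta^4\operatorname{tr}(V_H^2M_0)=P_H\) by \eqref{eq:predictor}.

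For concentration I stack the probes into \(G=(g_1,\dots,g_R)\sim\mathcal N(0,I_{dR})\) and write \(\sum_r g_r^\top Ag_r=G^\top(I_R\otimes A)G\). Diagonalizing \(A\) with eigenvalues \(\lambda_j\ge0\) gives a weighted sum \(\sum_{r,j}\lambda_j(\xi_{rj}^2-1)\) of centered \(\chi^2_1\) variables. The Laurent--Massart inequality (two-sided, with a union bound) then yields, with probability at least \(1-2e^{-t}\),
\begin{equation*}
\Big|\sum_r g_r^\top Ag_r-R\operatorname{tr}(A)\Big|\le2\|I_R\otimes A\|_F\sqrt t+2\|I_R\otimes A\|_2t.
\end{equation*}
Using \(\|I_R\otimes A\|_F=\sqrt R\|A\|_F\) and \(\|I_R\otimes A\|_2=\|A\|_2\), then dividing by \(R\) and multiplying by \(\eta^4\), gives \eqref{eq:concentration}.

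For the sample-size corollary I take \(M_0=I\) and \(L=I\), so \(A=V_H^2\). Then \(\|A\|_F=\sqrt{\operatorname{tr}(V_H^4)}\), \(\|A\|_2=\|V_H\|_2^2\) and \(P_H=\eta^4\operatorname{tr}(V_H^2)\). I set \(t=\log(2/\delta)\) and require each of the two terms to be at most \(\varepsilon P_H/2\). The first condition, \(2\sqrt{\operatorname{tr}(V_H^4)}\sqrt{t/R}\le\varepsilon\operatorname{tr}(V_H^2)/2\), rearranges to \(R\ge16t/(\varepsilon^2r_4)\). The second, \(2\|V_H\|_2^2t/R\le\varepsilon\operatorname{tr}(V_H^2)/2\), rearranges to \(R\ge4t/(\varepsilon r_s)\). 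Taking the maximum of the two gives the stated bound.

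The main care point is the exact constants in the Laurent--Massart form for general nonnegative weights. The lower tail has only the \(\sqrt t\) term, so the symmetric statement displayed above is slightly loose but valid. I would cite that lemma directly rather than re-derive it from moment generating functions.
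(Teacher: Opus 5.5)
Your proposal is correct and follows essentially the same route as the paper. Both write each summand as $g_r^\top A g_r$, diagonalize to a weighted sum of centered $\chi^2_1$ variables, and apply the weighted chi-square (Laurent--Massart) tails with a union bound. Both then set $t=\log(2/\delta)$ and split $\varepsilon$ in halves for the sample-size condition. The only cosmetic difference is that you stack the probes through $I_R\otimes A$ instead of using the weights $\lambda_j/R$ directly, which yields identical constants.
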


\begin{proof}[Proof of Theorem \ref{thm:concentration}]
Since \(\|V_HLg_r\|_2^2=g_r^\top Ag_r\), unbiasedness follows from
\(\E[g_rg_r^\top]=I\). Diagonalize
\(A=U\operatorname{diag}(\lambda_1,\ldots,\lambda_d)U^\top\). Gaussian
rotational invariance gives
\begin{equation}
\frac1R\sum_{r=1}^Rg_r^\top Ag_r-\operatorname{tr}(A)
=\sum_{j=1}^d\sum_{r=1}^R\frac{\lambda_j}{R}(Z_{rj}^2-1),
\end{equation}
where the \(Z_{rj}\) are independent standard Gaussians. Weighted chi-square
tail bounds give
\begin{align}
\Pr\left[X\ge2\|A\|_F\sqrt{t/R}+2\|A\|_2t/R\right]&\le e^{-t},\\
\Pr\left[X\le-2\|A\|_F\sqrt{t/R}\right]&\le e^{-t}.
\end{align}
A union bound and multiplication by \(\eta^4\) prove \eqref{eq:concentration}. For \(M_0=I\), \(A=V_H^2\), so
\begin{equation}
\|A\|_F^2=\operatorname{tr}(V_H^4),
\qquad \|A\|_2=\|V_H\|_2^2.
\end{equation}
Divide by \(P_H=\eta^4\operatorname{tr}(V_H^2)\) (for \(M_0=I\)), take
\(t=\log(2/\delta)\), and make both relative-error terms at most
\(\varepsilon/2\) to obtain the sample complexity.
\end{proof}

\subsection{Rademacher probes}

The estimator is a Hutchinson-type stochastic trace estimator
\citep{hutchinson1989stochastic}. For independent raw Rademacher probes \(\xi_r\in\{-1,+1\}^d\), let
\(B=A-\operatorname{diag}(A)\). The estimator remains unbiased and has exact
variance
\begin{equation}
\operatorname{Var}(\widehat P_H)=
\frac{2\eta^8}{R}\|B\|_F^2.
\end{equation}
Indeed,
\begin{equation}
\xi^\top A\xi-\operatorname{tr}(A)
=2\sum_{j<k}A_{jk}\xi_j\xi_k,
\end{equation}
and distinct Rademacher monomials are orthogonal in \(L_2\). Chebyshev gives
\begin{equation}
\Pr\left(|\widehat P_H-P_H|\ge\eta^4u\right)
\le\frac{2\|B\|_F^2}{Ru^2}.
\end{equation}
If \(B=0\), every raw Rademacher probe is exact.

\subsection{Implementation safeguards}

For the estimator to match the theory: all \(H_i\) are evaluated at the
same parameter; each \(H_i\) is the Hessian of the complete empirical-mean
source loss (summing batch losses with reduction \texttt{sum}/\(N_i\) is
valid, whereas averaging \(H_b(H_bv)\) over minibatches does not equal
\(H_i^2v\)); the first HVP \(H_iv\) is detached before the second, since
automatic differentiation would otherwise introduce third derivatives;
source masses \(\alpha_i\) reflect the complete real partition even when
an HVP subset is used; raw probes estimate \(\operatorname{tr}(V_H^2)\)
and probes divided by \(\sqrt d\) estimate the trace divided by \(d\);
and probes used to tune a calibration are kept separate from hold-out
probes used to report the predictor.

\section{Additional Experimental Results}
\label{app:experiments}

All experiments were conducted on a single GPU with PyTorch. E9, E10 and E11 were run on an NVIDIA RTX A6000 with PyTorch 1.12.1.

\subsection{Shared protocol}

The experiment codes E1--E15 used in the text and in the supplementary code and result files name the subsections below (E5, quadratic endpoint matching, is in Appendix \ref{app:e5}; E15, the CIFAR-100 replication, is in Appendix \ref{app:svhn}; the numbering has gaps at E4 and E6). We use the CIFAR-10 training split with deterministic tensor conversion
and channel normalization. Every two-source partition is a disjoint cover
of all 50,000 training examples. Heterogeneity is controlled through IID
assignment or class-wise Dirichlet allocation \citep{hsu2019measuring}
with concentration \(1.0\), \(0.5\), or \(0.1\); the \(0.5\) level appears only in the
official-schedule independent-versus-joint sweep. Source masses are
computed from complete source sizes before any HVP subset is drawn. We
store partition seeds, label histograms, source sizes, and hashes of
sorted index sets.

The local evaluation network is a three-block ConvNet with GroupNorm and
average pooling. The local-law validation uses GroupNorm(8); the
independent-versus-joint comparison follows the official implementation
with GroupNorm(4). Unless activation smoothness is itself the
intervention, the network uses ReLU. For each sampled initialization
\(\theta_0\), all source gradients and Hessians are evaluated at that same
\(\theta_0\), and the frozen local model uses \(\theta^\circ=\theta_0\).
``The same checkpoint'' therefore means consistency across sources within
one initialization, not one parameter vector shared across initialization
draws. We use exact loss Hessians rather than a generalized Gauss--Newton
approximation. The main endpoint measurements use two real steps against one
synthetic step, the smallest schedule in which time compression is
non-affine and the defect has an exact closed form with no remainder
(\(R_2=0\) in Theorem \ref{thm:defect}); for larger \(T/K\) or \(K\) the
leading term is the same \(V_H\) up to the coefficient \(K\binom p2\eta^2\)
(Appendix \ref{app:hvp}), so the two-step measurement isolates the
mechanism rather than restricting it. The primary response is the RMS
endpoint-map distance over a fixed initialization distribution, which the
theorems bound and which is measurable without training an evaluation
network. Test accuracy is evaluated separately as a downstream
consequence, because decision margins can hide parameter-map
differences.
\label{app:faq-two-step}

The independent uncertainty unit is a generated source partition.
Initialization seeds are repeated measurements, aggregated within
partition before any between-partition statistic, and confidence
intervals use a bootstrap over partitions. Within a partition,
\(\mathcal C_{\rm ind}\), \(\mathcal C_{\rm joint}\) and
\(\mathcal C_{\rm loc}\) are root-mean-square endpoint distances over
the initialization seeds, as Definition \ref{def:operator-error}
specifies, and \(\widehat\Gamma_{\rm RMS}\) is the difference of the two
RMS values. The no-split control, the budget sweep, the SVHN
replication and the short-schedule tables were aggregated as the mean
over seeds of the per-seed endpoint-distance
difference \(\|\widehat A_{\rm ind}-A_R\|-\|\widehat A_{\rm joint}-A_R\|\),
which we denote \(\widehat\Gamma_{\rm mean}\). On the CIFAR-10 main grid
the change from \(\widehat\Gamma_{\rm mean}\) to \(\widehat\Gamma_{\rm RMS}\)
is below \(3\%\) at the label-skewed levels. For IID partitions, the
absolute change is \(7.8\times10^{-6}\). The endpoint measurements show a
systematic seed effect (a variance decomposition attributes \(75\%\) of
the variation to the partition, \(19\%\) to the seed and \(7\%\) to their
interaction), which is why seeds are aggregated within partition; the
level means change by at most \(7\%\) between three and five seeds, and
the main text reports the three-seed values. The evaluation stage
applies differentiable augmentation without a fixed seed, so repeated
evaluation of the same synthetic set moves test accuracy by up to
\(\pm0.35\) points; accuracies are reported to one decimal.

\subsection{Verification of the local prediction on the network (E1)}
\label{app:e1-protocol}

\paragraph{Configuration.}

The original sweep contains 15 partitions and three initialization seeds at
\(\eta=0.4\times2^{-j}\) for \(j=0,\ldots,9\). An extension adds 15 independent
partitions at the shared points \(0.0125\) and \(0.00078125\). Scaling fits
therefore use the original 15-partition sweep, whereas predictor comparisons
at \(\eta=0.00078125\) use all 30 partitions. This split-input analysis avoids
both discarding independent partitions and imputing missing learning rates.

For each initialization we compute
\begin{equation}
P_{\rm comp}=\eta^4\left\|\sum_i\alpha_iH_i(g_i-\bar g)\right\|_2^2.
\end{equation}
The probe diagnostic \(P_{\rm iso}=\eta^4\operatorname{tr}(V_H^2)\) uses
isotropic HVP probes and is reported separately. We compare both quantities
with label divergence and gradient disagreement (Table
\ref{tab:e1-predictors}). The estimator suite contains
15 checks, including explicit-matrix identities, minibatch invariance,
source-order invariance, singular \(M_0\), Gaussian and Rademacher probes, and
spectral-norm estimation on \(H^2\) for nearly tied positive/negative extreme
eigenvalues. All checks pass.

\begin{table}[h]
\caption{E1: which statistic predicts the observed discrepancy.}
\label{tab:e1-predictors}
\centering
\footnotesize
\begin{tabular}{@{}lccc@{}}
\toprule
Predictor & Spearman \(\rho\) & partial \(\rho\) (adj.\ label div.) & partial \(\rho\) (adj.\ grad.\ dis.)\\
\midrule
\(P_{\rm comp}\) & 0.991 [0.960, 0.999] & 0.869 [0.568, 0.989] & 0.701 [0.210, 0.971]\\
\(P_{\rm iso}\)  & 0.955 [0.857, 0.984] & 0.316 [\(-\)0.316, 0.785] & \(-\)0.001 [\(-\)0.262, 0.496]\\
Label divergence & 0.964 [0.907, 0.985] & N/A & N/A\\
Gradient disagreement & 0.983 [0.937, 0.995] & N/A & N/A\\
\bottomrule
\end{tabular}
\par\smallskip\raggedright\footnotesize
\(n=30\) partitions at \(\eta_{\rm ref}\); rank correlation with
\(\mathcal C_{\rm loc}\); brackets are 95\% bootstrap intervals over
partitions. N/A: a baseline cannot be adjusted for itself.
\end{table}

The high marginal correlations partly reflect the three pre-specified
heterogeneity strata. The partial correlations measure residual rank
association after adjusting for one baseline; they are not a complete
within-level or causal decomposition. The weak adjusted result for
\(P_{\rm iso}\) shows that the isotropic curvature-only diagnostic loses
useful task- or initialization-dependent information, without showing
which affine component dominates.

The calibration cells (\(10^4\)-example sources, float32, three
partitions per cell, one initialization seed) use a single-initialization
form of the statistic. For the network initialization \(\theta_0\), the
endpoint defect of Theorem \ref{thm:structural} is
\(\eta^2 r_{\rm comp}(\theta_0)\), where
\(r_{\rm comp}(\theta_0)=\sum_i\alpha_iH_i(g_i-\bar g)=V_H\theta_0-w\)
(Eq.~\ref{eq:pcomp}) is computed exactly by one Hessian-vector product
per source, without random probes; the affine term is included, because
it enters through the source gradients. The calibration ratio is the
measured discrepancy \(\|\Delta\Phi(\theta_0)\|\) divided by
\(\eta^2\|r_{\rm comp}(\theta_0)\|\). It coincides with
\(\mathcal C_{\rm obs}/\sqrt{P_{\rm comp}}\) when the initialization
distribution is concentrated at \(\theta_0\), and it is averaged over the
three partitions of a cell rather than over \(P_0\). Levels are never
pooled, because the ratio can differ between levels.

\paragraph{Analysis.}
\label{app:e1-detail}

\begin{figure}[h]
\centering
\includegraphics[width=0.96\linewidth]{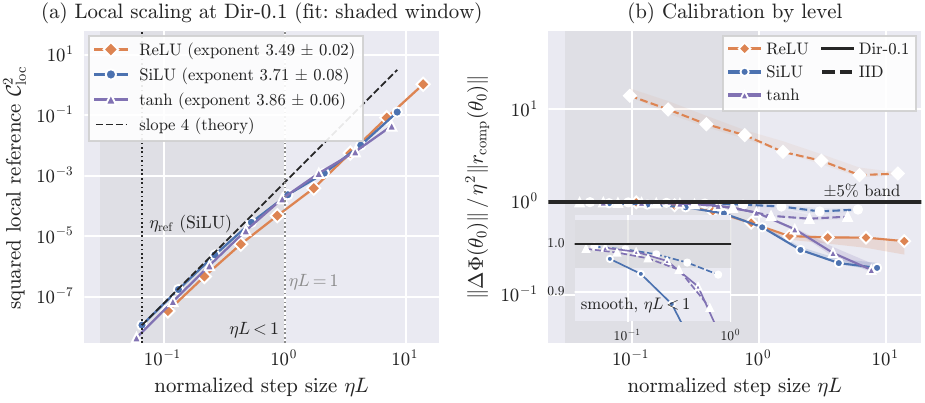}
\caption{Local-law validation with all activations and both levels. (a) Squared discrepancy \(\|\Delta\Phi(\theta_0)\|^2\) against \(\eta L\) at Dir-0.1, with exponents fitted inside the shaded window \(\eta L<1\) and dotted lines at \(\eta_{\rm ref}\) and \(\eta L=1\). (b) Calibration ratio \(\|\Delta\Phi(\theta_0)\|/(\eta^2\|r_{\rm comp}(\theta_0)\|)\) against \(\eta L\) (solid: Dir-0.1, dashed: IID, bands over partitions), with the small-step regime enlarged in the inset. At IID the ReLU structural term is at numerical zero, so its ratio is not meaningful.}
\label{fig:e1-full}
\end{figure}

The local-law validation tests whether the local reference
\(\mathcal C_{\rm loc}\) of Section \ref{sec:experiments}, the RMS
distance between the weighted average of the exact source-wise two-step
endpoints and the real-union two-step endpoint on the actual network
(no distillation involved), follows the quadratic law. Its square
should scale as \(\eta^4\), and its size should match
\(\sqrt{P_{\rm comp}}\) (Eq.~\ref{eq:pcomp}). On the \(10^4\)-example
calibration sources (three partitions per cell, one seed, eight rates from
\(7.8\times10^{-4}\) to \(0.1\), each activation at its own measured
\(L\)), the squared discrepancy inside \(\eta L<1\) scales with exponent
\(3.71\pm0.08\) for SiLU and \(3.86\pm0.06\) for tanh at Dir-0.1, and
\(3.96\pm0.02\) and \(3.93\pm0.04\) under IID (mean \(\pm\) SD of
per-partition fits). ReLU gives \(3.49\pm0.02\) at Dir-0.1 and \(3.01\)
under IID. Over the full sweep the exponents fall to \(3.25\)--\(3.50\) as
the largest rates leave the perturbative window. At \(\eta_{\rm ref}\) the
single-initialization calibration ratios (Appendix \ref{app:e1-protocol})
are \(0.996\)/\(0.969\) for SiLU and \(0.992\)/\(0.991\) for tanh at
IID/Dir-0.1 (SD over partitions at most \(0.004\)), within
approximately \(3\%\) of one, and they drift below one as \(\eta L\) grows
(Figure \ref{fig:e1-full}). The lower ReLU exponent is expected: a
frozen Hessian misses activation crossings, whose gradient discrepancy is
of lower order than the quadratic remainder, so reducing \(\eta\) does not
restore it over the tested step-size range. Its calibration ratio is \(1.00\) at Dir-0.1 but
\(14.9\pm1.7\) at IID, where the structural signal
(\(\mathcal C_{\rm loc}\approx10^{-5}\)) approaches zero faster than the
non-smooth remainder and the ratio is not meaningful; ReLU therefore
serves as a boundary case for the smoothness assumption. The curvature disagreement has
low effective rank (\(r_H=208.5\pm5.2\) in \(320{,}010\) dimensions;
Appendix \ref{app:assumption-support}).

\paragraph{A denser sweep inside the perturbative window.}
On the full sources, a second ReLU sweep uses nine rates from
\(3.9\times10^{-4}\) to \(6.25\times10^{-3}\) in steps of \(\sqrt2\), eight
of which lie in the strict window \(\eta L<1\) for every partition and
level, with five partitions per level and three initializations per
partition (Figure \ref{fig:e1-dense}). Fitting \(\log\mathcal C_{\rm loc}^2\)
against \(\log\eta\) on this fixed subset gives exponents of \(3.878\)
\([3.867,3.890]\) at Dir-1.0 and \(3.754\) \([3.689,3.837]\) at Dir-0.1.
The slope is fitted per initialization, averaged within partition, and
the 95\% interval comes from a bootstrap over the five partitions; on the
common grid of step sizes this equals a single fit pooled over all
initializations. The estimates are within 1\% of the
original three-point estimates, and both intervals exclude four.
The \(3\)--\(6\%\) shortfall is the direction expected when the two
expansion conditions, \(\eta L\ll1\) and a slowly varying Hessian, hold
only marginally (Appendix \ref{app:assumption-support}). The IID exponent
is \(3.00\) \([2.97,3.02]\); its discrepancy is dominated by terms other
than the structural one and is not a test of the \(\eta^4\) law. Dir-0.5
appears only in the distillation grid.

\begin{figure}[h]
\centering
\includegraphics[width=0.96\linewidth]{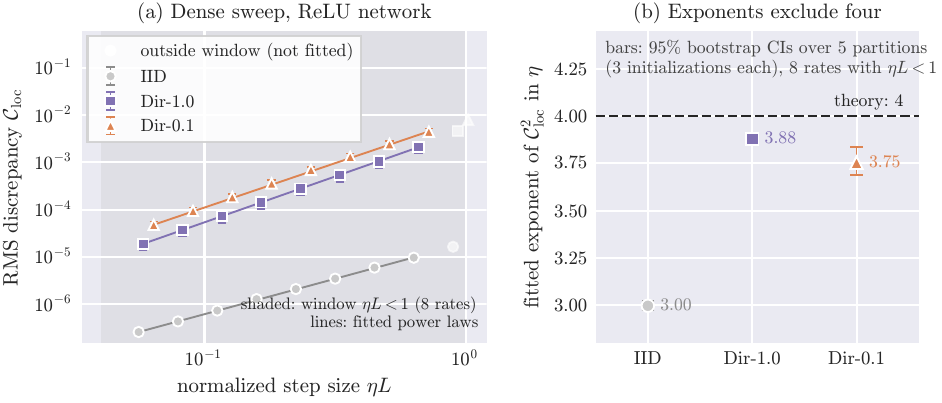}
\caption{Dense E1 sweep on the ReLU network. (a) Partition-level mean RMS
discrepancy against \(\eta L\) at nine step sizes (five partitions per
level with three initializations each, error bars are SDs over
partitions); filled markers are the eight rates inside the strict window
and lines are the fitted power laws. (b) Fitted exponents of
\(\mathcal C_{\rm loc}^2\) in \(\eta\) with 95\% intervals from a bootstrap
over the five partitions; both skewed levels lie 3--6\% below four, and
the IID discrepancy is not governed by the structural term.}
\label{fig:e1-dense}
\end{figure}

\subsection{Endpoint-matching distillates on the network (E10)}
\label{app:e10}

\paragraph{Configuration.}
E10 applies the endpoint-matching objective of E5 to the E1 network, so
that learned synthetic sets, a real network and the local law can be
compared in one setting. The network is the three-block ConvNet with SiLU
activation, the sources are the E1 sources (\(10^4\) examples each) on the
IID and Dir-0.1 partitions (three partitions per level), and the main
setting is \(T=2\), \(K=1\) at \(\eta_{\rm ref}=7.8\times10^{-4}\).
For each source, a synthetic set with one, five or ten images per
represented class (labels fixed, images initialized from random real
images of the class) is optimized to minimize
\(\E_{\theta_0}\|\Phi^{K,\eta}_{S_i}(\theta_0)-\Phi^{T,\eta}_{D_i}(\theta_0)\|^2\)
with Adam (learning rate \(0.03\), chosen once on the IID partition 0 at
five images per class from \(\{0.01,0.03,0.1\}\) and then frozen) for
\(1{,}100\) iterations, the point at which the ten-image pilot reached
within \(3\%\) of its loss after \(3{,}000\) iterations. Under Dir-0.1
each source allocates its budget to its represented classes as in E2. The
joint set is trained with the same objective against
\(\Phi^{T,\eta}_{D_\cup}\) at the same total number of images. Training
draws four initializations per iteration from a fixed bank of \(64\)
(\(32\) for the step-size and \((4,2)\) supplements); evaluation uses
\(32\) fresh initializations never used in training. The bank leaves a
generalization gap over \(\theta_0\) (IID, ten images per class:
matching loss \(0.034\) on the bank against \(0.063\) on the held-out
initializations). To attribute the source-error plateau reported below,
the Dir-0.1 cells at ten images per class were retrained on all three
partitions under the four combinations of bank size (\(64\) or
\(512\) initializations) and iteration count (\(1{,}100\) or
\(3{,}000\)), scored on the same \(32\) held-out initializations and
the same real targets. The supplements vary the step size
(\(4\eta_{\rm ref}\), \(16\eta_{\rm ref}\)), the ratio
(\(T=4\), \(K=2\)) and the activation (tanh in place of SiLU) at five
images per class. For \((4,2)\) two predictions are reported: the
composed-synthetic prediction with coefficient \(K\binom p2=2\)
(Theorem \ref{thm:defect}) and the real four-step local discrepancy,
whose second-order coefficient is \(\binom T2=6\).

\paragraph{Analysis.}
Four findings follow (Table \ref{tab:e10} lists the per-cell values,
Figure \ref{fig:e10} plots the Dir-0.1 rows and Figure \ref{fig:syn-e10}
shows learned sets). First, at \(\eta_{\rm ref}\) the structural
prediction \(\eta^2r_{\rm comp}(\theta_0)\) captures the local endpoint
discrepancy \(\Delta\Phi(\theta_0)\) in magnitude and direction on both
activations: cosine \(1.000\) (SiLU) and \(0.999\) (tanh) with relative
residual \(0.02\)--\(0.04\) on every partition, and for \((4,2)\) the
real four-step discrepancy is approximated by \(6\eta^2r_{\rm comp}\)
with cosine \(0.999\) and norm ratio \(0.985\) (IID) and \(0.905\)
(Dir-0.1). Second,
the learned sets retain a large source error: the relative source error
is \(0.50\) at one image per class and \(0.25\)--\(0.39\) at five and ten
(\(0.21\)--\(0.24\) for tanh), so at Dir-0.1 the aggregate residual
\(\|r_{\rm src}\|_{L^2(P_0)}\) is \(23.6\), \(13.3\) and \(12.7\) times
\(\mathcal C_{\rm struct}\) at the three budgets (the bound
\(\mathcal U_{\rm src}\) is \(39\)--\(50\) times), the composed discrepancy
is nearly orthogonal to the prediction (cosine at most \(0.05\)), and the
inequality of Corollary \ref{cor:detectability}, evaluated initialization
by initialization, is satisfied for none of \(96\) initializations at
\(\eta_{\rm ref}\), for \((4,2)\) and for tanh. The count is the numerical
inequality only, since the corollary's common-optimum assumption is not
established on the network. The joint set has a lower error than the
union in every cell, and the excess shrinks with the budget. Third, the
local prediction degrades as the step size grows: at \(16\eta_{\rm ref}\)
it overshoots \(\Delta\Phi\) by a factor of \(1.9\) and
\(\mathcal C_{\rm ind}\) falls below \(\mathcal C_{\rm struct}\), so the
\(3\) of \(96\) initializations that satisfy the inequality there reflect
the failing prediction rather than a smaller source error. Fourth, a
larger initialization bank and longer training do not remove the source
error (Table \ref{tab:e10-attrib}): \(3{,}000\) instead of \(1{,}100\)
iterations changes the held-out relative source error by at most
\(0.002\), a bank of \(512\) instead of \(64\) initializations lowers it
from \(0.380\) to \(0.370\) while narrowing the gap between bank and
held-out matching loss (\(0.116\) against \(0.144\) to \(0.130\) against
\(0.137\)), \(\mathcal U_{\rm pw}/\mathcal C_{\rm struct}\) moves from
\(38.5\) to \(37.5\), and the inequality holds for none of \(96\)
initializations under every condition. Other optimization choices and the
objective itself were not varied.

\begin{table}[h]
\caption{E10: attribution of the source-error plateau. Dir-0.1, ten images per class per source, \((2,1)\) at \(\eta_{\rm ref}\), SiLU, three partitions, all conditions scored on the same 32 held-out initializations. Bank loss is the relative matching loss of the source sets on their training bank, held-out is the relative source error on the evaluation initializations (mean and SD over partitions), errors are in units of \(10^{-3}\), and \(\widehat\Gamma_{\rm RMS}=\mathcal C_{\rm ind}-\mathcal C_{\rm joint}\) is computed from the unrounded RMS values. These runs were archived with \(\mathcal U_{\rm pw}\), the RMS over initializations of \(\sum_i\alpha_i\|e_i(\theta_0)\|\), in place of \(\mathcal U_{\rm src}\) of Table \ref{tab:e10}; on the cell common to both tables the two differ by \(0.2\%\) (\(3.808\) against \(3.814\)). Ineq.\ counts initializations as in Table \ref{tab:e10}.}
\label{tab:e10-attrib}
\centering
\footnotesize
\begin{tabular}{@{}rrrrrrrrrr@{}}
\toprule
Bank & Iter. & Bank loss & Held-out & \(\mathcal U_{\rm pw}\) & \(\mathcal C_{\rm ind}\) & \(\mathcal C_{\rm joint}\) & \(\widehat\Gamma_{\rm RMS}\) & \(\mathcal U_{\rm pw}/\mathcal C_{\rm struct}\) & ineq.\\
\midrule
64 & 1,100 & 0.116 & \(0.380\pm0.008\) & 3.81 & 1.25 & 0.99 & 0.26 & 38.5 & 0/96\\
64 & 3,000 & 0.114 & \(0.379\pm0.008\) & 3.79 & 1.25 & 0.99 & 0.26 & 38.4 & 0/96\\
512 & 1,100 & 0.132 & \(0.372\pm0.008\) & 3.73 & 1.20 & 0.95 & 0.24 & 37.7 & 0/96\\
512 & 3,000 & 0.130 & \(0.370\pm0.008\) & 3.71 & 1.20 & 0.95 & 0.25 & 37.5 & 0/96\\
\bottomrule
\end{tabular}
\end{table}

\begin{figure}[h]
\centering
\includegraphics[width=\linewidth]{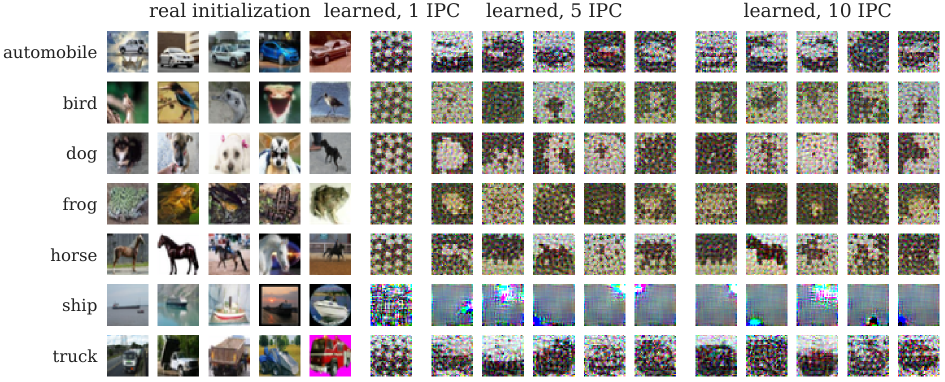}
\caption{Endpoint-matching distillates on the SiLU network (E10; Dir-0.1 partition 0 of the E1 subset, source 0, which holds the seven classes shown). Images are initialized from real images of the class and optimized for \(1{,}100\) Adam iterations on the one-step-versus-two-step endpoint discrepancy; the columns show the initialization and the learned sets at one, five and ten images per class (up to five per class displayed). The learned images move far from their initialization (median relative displacement \(1.1\)), since the objective constrains the induced training step and not the appearance.}
\label{fig:syn-e10}
\end{figure}

\begin{figure}[h]
\centering
\includegraphics[width=\linewidth]{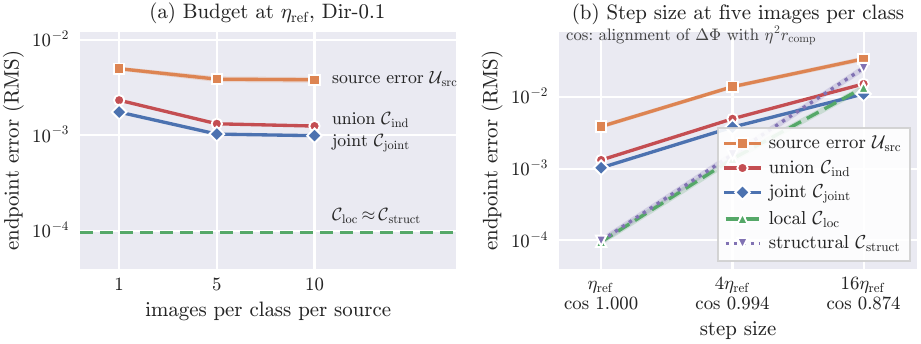}
\caption{At small step sizes the local prediction is accurate, but the endpoint error of learned synthetic sets lies far above the local discrepancy. Endpoint-matching distillates on the SiLU network at Dir-0.1 (three partitions, RMS over \(32\) held-out initializations, mean and SD over partitions). (a) Weighted source error, error of the union of the learned sets and error of the jointly learned set against the image budget at \(\eta_{\rm ref}\); the local reference \(\mathcal C_{\rm loc}\) and the structural composition error \(\mathcal C_{\rm struct}\) closely agree and lie far below. (b) The same quantities against the step size at five images per class, with the cosine between \(\Delta\Phi\) and \(\eta^2r_{\rm comp}\) at each step size.}
\label{fig:e10}
\end{figure}

\begin{table}[h]
\caption{E10: endpoint-matching distillates on the network (SiLU unless stated). Per cell, RMS over 32 held-out initializations; mean over three partitions. Errors in units of \(10^{-3}\). \(\mathcal U_{\rm src}=\sum_i\alpha_i\|\Phi^{K,\eta}_{S_i}-\Phi^{T,\eta}_{D_i}\|_{L^2(P_0)}\) is the weighted source error of Section \ref{sec:e10} at the row's \((T,K)\), rel.\ is the weighted mean over sources of each set's endpoint error divided by its own real \(T\)-step displacement, \(\hat s=K\binom p2\eta^2r_{\rm comp}\) is the structural prediction (coefficient \(1\) for \((2,1)\) and \(2\) for \((4,2)\), Theorem \ref{thm:defect}), and \(d_{\rm ind}\) is the per-initialization error of the union of the learned sets against the real-union endpoint. The last column (ineq.) is evaluated before aggregation: it counts initializations \(\theta_0\) at which \(\|\hat s(\theta_0)\|>\sum_i\alpha_i\|e_i(\theta_0)\|\), with \(e_i(\theta_0)=\Phi^{K,\eta}_{S_i}(\theta_0)-\Phi^{T,\eta}_{D_i}(\theta_0)\).}
\label{tab:e10}
\centering
\footnotesize
\setlength{\tabcolsep}{2.6pt}
\begin{tabular}{@{}lllrrrrrrrrr@{}}
\toprule
Level & Setting & IPC & \(\mathcal U_{\rm src}\) & rel. & \(\mathcal C_{\rm ind}\) & \(\mathcal C_{\rm joint}\) & \(\mathcal C_{\rm loc}\) & \(\mathcal C_{\rm struct}\) & \(\cos(\Delta\Phi,\hat s)\) & \(\cos(d_{\rm ind},\hat s)\) & ineq.\\
\midrule
IID & \((2,1)\), \(\eta_{\rm ref}\) & 1 & 2.03 & 0.51 & 1.84 & 1.51 & 0.00009 & 0.00009 & 1.000 & 0.05 & 0/96\\
IID & \((2,1)\), \(\eta_{\rm ref}\) & 5 & 1.10 & 0.27 & 1.04 & 1.01 & 0.00009 & 0.00009 & 1.000 & 0.04 & 0/96\\
IID & \((2,1)\), \(\eta_{\rm ref}\) & 10 & 1.02 & 0.25 & 0.99 & 0.99 & 0.00009 & 0.00009 & 1.000 & 0.04 & 0/96\\
IID & \((2,1)\), \(4\eta_{\rm ref}\) & 5 & 4.11 & 0.26 & 3.86 & 3.71 & 0.0014 & 0.0014 & 0.997 & 0.03 & 0/96\\
IID & \((2,1)\), \(16\eta_{\rm ref}\) & 5 & 12.9 & 0.23 & 11.6 & 10.6 & 0.021 & 0.022 & 0.955 & \(-0.04\) & 0/96\\
IID & \((4,2)\), \(\eta_{\rm ref}\) & 5 & 2.16 & 0.27 & 2.04 & 1.97 & 0.0005 & 0.0002 & 0.999 & 0.04 & 0/96\\
Dir-0.1 & \((2,1)\), \(\eta_{\rm ref}\) & 1 & 4.98 & 0.50 & 2.33 & 1.75 & 0.096 & 0.099 & 1.000 & 0.01 & 0/96\\
Dir-0.1 & \((2,1)\), \(\eta_{\rm ref}\) & 5 & 3.87 & 0.39 & 1.32 & 1.03 & 0.096 & 0.099 & 1.000 & 0.04 & 0/96\\
Dir-0.1 & \((2,1)\), \(\eta_{\rm ref}\) & 10 & 3.81 & 0.38 & 1.25 & 0.99 & 0.096 & 0.099 & 1.000 & 0.04 & 0/96\\
Dir-0.1 & \((2,1)\), \(4\eta_{\rm ref}\) & 5 & 13.9 & 0.36 & 4.96 & 3.79 & 1.37 & 1.58 & 0.994 & 0.04 & 0/96\\
Dir-0.1 & \((2,1)\), \(16\eta_{\rm ref}\) & 5 & 34.0 & 0.28 & 15.2 & 11.0 & 13.5 & 25.3 & 0.874 & 0.02 & 3/96\\
Dir-0.1 & \((4,2)\), \(\eta_{\rm ref}\) & 5 & 7.50 & 0.39 & 2.61 & 2.01 & 0.54 & 0.20 & 0.998 & 0.04 & 0/96\\
\midrule
IID, tanh & \((2,1)\), \(\eta_{\rm ref}\) & 5 & 1.06 & 0.21 & 0.93 & 0.85 & 0.00013 & 0.00013 & 0.999 & 0.01 & 0/96\\
Dir-0.1, tanh & \((2,1)\), \(\eta_{\rm ref}\) & 5 & 1.98 & 0.24 & 1.24 & 0.90 & 0.054 & 0.054 & 0.999 & 0.01 & 0/96\\
\bottomrule
\end{tabular}
\end{table}

\subsection{Independent versus joint distillation (E2)}
\label{app:e2}

\begin{table}[h]
\caption{Separate-then-union versus joint distillation on CIFAR-10 (distribution matching, official schedule, 10 images per class per source). Operator quantities are RMS endpoint distances to the real-union two-step map, in units of \(10^{-3}\). Accuracies in percent.}
\label{tab:e2-gap}
\centering
\small
\setlength{\tabcolsep}{2.8pt}
\begin{tabular}{@{}l S[table-format=1.3(3)] S[table-format=1.2(2)] S[table-format=1.2(2)] S[table-format=1.2(2)] S[table-format=2.1(1)] S[table-format=2.1(1)] S[table-format=+1.1(1)]@{}}
\toprule
& \multicolumn{4}{c}{Endpoint errors} & \multicolumn{3}{c}{Test accuracy}\\
\cmidrule(r){2-5}\cmidrule(l){6-8}
Source split & {reference \(\mathcal C_{\rm loc}\)} & {separate \(\mathcal C_{\rm ind}\)} & {joint \(\mathcal C_{\rm joint}\)} & {excess \(\widehat\Gamma_{\rm RMS}\)} & {separate} & {joint} & {gain}\\
\midrule
\multicolumn{8}{@{}l}{\emph{Heterogeneity sweep (five partitions per level)}}\\
IID & 0.016(0.001) & 7.41(0.04) & 7.34(0.08) & 0.06(0.10) & 46.3(0.9) & 51.6(0.6) & +5.3(0.9)\\
Dir-1.0 & 4.85(0.99) & 8.72(0.29) & 7.34(0.08) & 1.38(0.25) & 46.2(0.6) & 51.9(0.5) & +5.7(0.6)\\
Dir-0.5 & 5.58(0.76) & 8.71(0.31) & 7.33(0.06) & 1.38(0.27) & 45.8(1.1) & 51.7(0.5) & +5.9(1.2)\\
Dir-0.1 & 8.09(1.17) & 9.85(1.33) & 7.50(0.26) & 2.35(1.44) & 45.8(0.7) & 51.1(1.1) & +5.3(0.7)\\
\addlinespace
\multicolumn{8}{@{}l}{\emph{No-split control (three replicates)}}\\
No split & {0 (exact)} & {--} & {--} & {--} & 46.0(0.4) & 52.6(0.4) & +6.6(0.4)\\
\bottomrule
\end{tabular}
\par\smallskip\raggedright\scriptsize
Mean (SD) over partitions. Within a partition, endpoint errors are RMS over three initialization seeds and accuracies average three evaluation seeds. Gain: joint minus separate accuracy in percentage points. 0 (exact): \(V_H\equiv0\) by construction. The no-split endpoint errors were aggregated as \(\widehat\Gamma_{\rm mean}\) and are reported in Appendix \ref{app:nosplit} and Table \ref{tab:svhn}, hence the dashes.
\end{table}

The independent-versus-joint results of Section \ref{sec:experiments} (Table \ref{tab:e2-gap}; Figure \ref{fig:levels})
use distribution matching (DM) at its official schedule: 20{,}000
iterations, image learning rate 1.0, real batch 256, differentiable
Siamese augmentation enabled, and IPC 10 per independently distilled
source, with snapshots at iterations
\(\{0,1000,2500,5000,10000,15000,20000\}\). The joint set receives the
sum of the independent image budgets, which is IPC \(15\)--\(20\) over the
union depending on how many classes each source represents, and stores
the same semantic mass (Figure \ref{fig:syn-dm} shows the sets on one
Dir-0.1 partition). Its accuracy is therefore comparable to published
IPC-10 numbers only up to that budget difference. Evaluation follows DM's
official protocol (1000 epochs, augmentation enabled), with three
evaluation seeds averaged within each partition.

Gradient-based condensation (DC) is used only as a short-schedule
optimization control (Appendix \ref{app:legacy}). At the official
1000-iteration schedule on the full CIFAR-10 training set our DC runs
reach 41.4\% at IPC 10, below the published 44.9\%, so they are not part
of the official-schedule evidence. The short-schedule DC results are
retained in the tables marked ``short schedule''; their
endpoint-level behaviour matches DM's, with a positive gap at Dir-0.1, and
the canonical and matched-step choices of DC's IPC-dependent loop counts
for the joint set agree at the endpoint level to within 0.3\%. DM
performs one image update per iteration and has no IPC-dependent loop
structure.

Independent and joint endpoints use the same \(\theta_0\) and
\(\eta=0.00625\). The endpoint protocol compares \(\Phi_{S_i}^{1,\eta}\)
with \(\Phi_{D_i}^{2,\eta}\) for every source and
\(\Phi_{S_1\oplus S_2}^{1,\eta}\) with \(\Phi_{D_1\oplus D_2}^{2,\eta}\)
for the union. Hence \(T=2\) governs both real targets, while \(K=1\)
governs only the synthetic maps. Each synthetic set enters with its stored
class masses as loss weights and with a scalar step multiplier fitted at
each evaluation initialization: for source \(i\) the measured map is
\(\Phi^{1,c_i\eta}_{S_i}\) with
\(c_i=\arg\min_c\|\Phi^{1,c\eta}_{S_i}(\theta_0)-\Phi^{2,\eta}_{D_i}(\theta_0)\|\),
which has the closed form \(c_i=\langle d_i,\delta_i\rangle/\|d_i\|^2\) for the
raw synthetic displacement \(d_i\) and the real two-step displacement
\(\delta_i\), and the joint set is calibrated in the same way against the
union target. This is the per-set form of the scalar retiming of Appendix
\ref{app:repair}: it uses the real two-step endpoint of the set's own
target, so \(\mathcal C_{\rm ind}\), \(\mathcal C_{\rm joint}\) and every
quantity derived from them in E2, E7, E9, E11 and E12 are endpoint errors
after per-set scalar calibration, whereas the endpoint-matching sets of
E10 are trained and evaluated at the fixed step \(\eta\). The composed
union is the mass-weighted sum of the calibrated source displacements.
Appendix \ref{app:e9} reports the same decomposition without the
calibration (class-mass weights only, \(c_i=1\)).
The reported constituent error, including
\(\epsilon_{\rm rel}\), uses this one-step versus two-step comparison. The
learning rate supports ranking but is not used for the asymptotic
calibration claim.

\begin{figure}[h]
\centering
\includegraphics[width=0.6\linewidth]{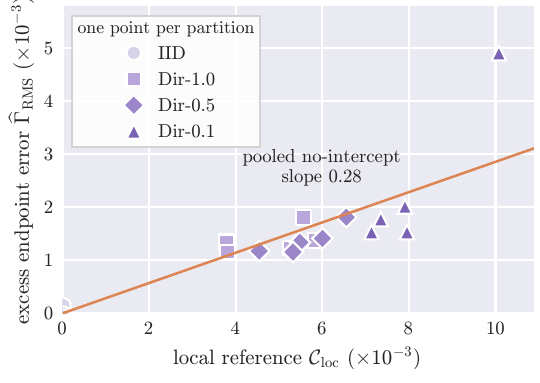}
\caption{Independent versus joint distillation: excess endpoint error
versus the local reference \(\mathcal C_{\rm loc}\) on CIFAR-10.
Each point represents one partition (five per level, three seeds each);
the line is a descriptive pooled no-intercept fit across the skewed
levels (slope 0.28).
Level means appear in Table \ref{tab:e2-gap}.}
\label{fig:levels}
\end{figure}

\begin{figure}[h]
\centering
\includegraphics[width=\linewidth]{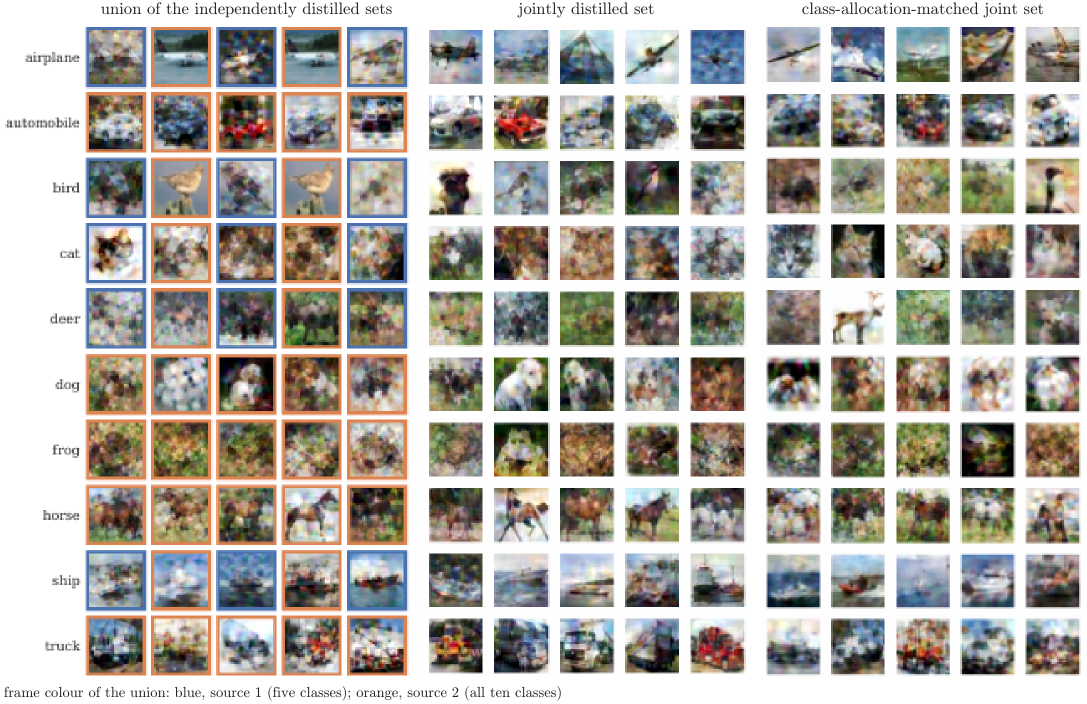}
\caption{Synthetic images on one Dir-0.1 partition of CIFAR-10 (partition 0, DM at the official schedule; up to five images per class per set are shown). Left: the union of the two independently distilled sets (E2), with the frame colour marking the source of each image (blue, source 1, which holds five classes; orange, source 2, which holds all ten); each source stores ten images per class it holds, so the union has twenty images in the shared classes and ten in the others. Middle: the jointly distilled set at the same total budget of \(150\) images, \(15\) per class. Right: the class-allocation-matched joint set of E11 (Appendix \ref{app:balanced}), which places \(20\) images in the five shared classes and \(10\) in the five classes held by one source, the per-class counts of the union.}
\label{fig:syn-dm}
\end{figure}

\paragraph{Paired analysis.}
The official-schedule analysis uses five partitions at each of four levels
(IID, Dir-1.0, Dir-0.5, Dir-0.1), with the same five partition seeds at
every level so that level comparisons can be paired within partition.
Initialization repeats are aggregated within partition before any
between-partition statistic is computed. Results from the earlier short
schedule (500 DM iterations, 50-epoch evaluation) and the feature-skew
family are collected in Appendix \ref{app:legacy}.

\subsection{Exact decomposition of the observed discrepancy (E9)}
\label{app:e9}

\paragraph{Configuration.}
For the E2 sets, E9 records the full endpoint vectors for each
initialization seed (100--102) and decomposes the discrepancy of the
separately distilled union exactly. With the union trained under a single
mass-weighted loss, the one-step update on \(S_1\cup S_2\) is the
weighted sum of the one-step updates on the parts, so
\[
d_{\rm ind}(\theta_0)=\underbrace{\sum_i\alpha_i\Phi^{2,\eta}_{D_i}(\theta_0)-\Phi^{2,\eta}_{D_\cup}(\theta_0)}_{\Delta\Phi(\theta_0)}
+\underbrace{\sum_i\alpha_i\bigl(\Phi^{1,\eta}_{S_i}(\theta_0)-\Phi^{2,\eta}_{D_i}(\theta_0)\bigr)}_{r_{\rm src}(\theta_0)}
\]
holds without any quadratic or common-optimum assumption, with
\(\Phi^{1,\eta}_{S_i}\) the calibrated one-step maps of the E2 protocol
(Appendix \ref{app:e2}). The identity
was verified numerically on every cell (residual at most \(6\times10^{-8}\),
relative \(9\times10^{-6}\), the float32 accumulation error), and the
norms \(\|d_{\rm ind}\|\), \(\|d_{\rm joint}\|\) and
\(\|\Delta\Phi\|\) reproduce \(\mathcal C_{\rm ind}\),
\(\mathcal C_{\rm joint}\) and \(\mathcal C_{\rm loc}\) of Table
\ref{tab:e2-gap} to \(3\times10^{-6}\). The excess of Section \ref{sec:experiments} subtracts the joint error,
\(\widehat\Gamma_{\rm RMS}=\|\Delta\Phi+r_{\rm src}\|_{L^2(P_0)}-\|d_{\rm joint}\|_{L^2(P_0)}\),
so the decomposition bears on its first term. The local approximation
residual \(\Delta\Phi-\eta^2r_{\rm comp}\) separates the network
nonlinearity from the source distillation error. All 108 cells were
processed (CIFAR-10 at four levels and the no-split control, SVHN at two
levels and its no-split control), and the 18 CIFAR-100 cells of E15
(Appendix \ref{app:svhn}) were added with the same script.

\paragraph{Analysis.}
At the three skewed CIFAR-10 levels \(\Delta\Phi\) and \(r_{\rm src}\)
are anti-aligned: the cosine is \(-0.35\), \(-0.48\) and \(-0.46\) at
Dir-1.0, Dir-0.5 and Dir-0.1 (\(-0.78\) on SVHN at Dir-0.1), and none of
the 45 skewed CIFAR-10 cells or the 15 skewed SVHN cells has a positive
cosine, and the measured \(\|d_{\rm ind}\|\) is smaller than
\(\|r_{\rm src}\|\) although the two terms are of the same order. At the E2 step size
(\(\eta L\approx1\)) the prediction \(\eta^2r_{\rm comp}\) has a
magnitude comparable to \(\Delta\Phi\) (ratio \(1.0\)--\(1.5\)) but only
partly its direction (cosine \(0.41\)--\(0.60\), relative residual above one),
and it is nearly orthogonal to \(d_{\rm ind}\) (cosine \(0.01\)--\(0.06\)),
so the fraction of \(\|d_{\rm ind}\|^2\) it explains is negative
(\(-0.25\), \(-0.69\) and \(-1.55\)). Under IID partitioning \(\|\Delta\Phi\|=1.6\times10^{-5}\), so
\(d_{\rm ind}\approx r_{\rm src}\) (cosine \(1.000\)); in the
no-split control \(\Delta\Phi\) is exactly zero and
\(d_{\rm ind}=r_{\rm src}\). Table \ref{tab:e9} lists the per-level
values.

\begin{table}[h]
\caption{E9: exact decomposition of the observed discrepancy on the E2 sets. Norms in units of \(10^{-3}\), RMS over seeds within partition and mean over partitions; cosines are means over seeds and partitions. \(\hat s=\eta^2r_{\rm comp}\).}
\label{tab:e9}
\centering
\footnotesize
\setlength{\tabcolsep}{3pt}
\begin{tabular}{@{}llrrrrrrrr@{}}
\toprule
Dataset & Level & \(\|d_{\rm ind}\|\) & \(\|d_{\rm joint}\|\) & \(\|\Delta\Phi\|\) & \(\|r_{\rm src}\|\) & \(\|\hat s\|\) & \(\cos(\Delta\Phi,r_{\rm src})\) & \(\cos(\Delta\Phi,\hat s)\) & \(\cos(d_{\rm ind},\hat s)\)\\
\midrule
CIFAR-10 & IID & 7.41 & 7.34 & 0.016 & 7.41 & 0.002 & \(-0.01\) & 0.10 & \(-0.04\)\\
CIFAR-10 & Dir-1.0 & 8.72 & 7.34 & 4.85 & 9.15 & 4.84 & \(-0.35\) & 0.41 & 0.06\\
CIFAR-10 & Dir-0.5 & 8.71 & 7.33 & 5.58 & 9.84 & 7.26 & \(-0.48\) & 0.51 & 0.01\\
CIFAR-10 & Dir-0.1 & 9.85 & 7.50 & 8.09 & 10.4 & 12.3 & \(-0.46\) & 0.60 & 0.04\\
CIFAR-10 & no split & 7.43 & 7.34 & 0 & 7.43 & 0 & -- & -- & --\\
SVHN & IID & 6.08 & 6.13 & 0.013 & 6.08 & 0.002 & \(-0.02\) & 0.08 & 0.09\\
SVHN & Dir-0.1 & 7.65 & 6.18 & 10.5 & 12.1 & 12.5 & \(-0.78\) & 0.32 & \(-0.06\)\\
SVHN & no split & 6.21 & 6.12 & 0 & 6.21 & 0 & -- & -- & --\\
CIFAR-100 & IID & 1.95 & 1.86 & 0.018 & 1.95 & 0.002 & \(-0.01\) & 0.11 & 0.00\\
CIFAR-100 & Dir-0.1 & 2.51 & 1.96 & 1.35 & 2.87 & 1.32 & \(-0.48\) & 0.86 & \(-0.02\)\\
\bottomrule
\end{tabular}
\end{table}

\paragraph{Without the scalar calibration.}
Table \ref{tab:e9-noclock} repeats the decomposition with the class-mass
weights only, without the per-set step multiplier of the E2 protocol;
\(\Delta\Phi\) and \(\eta^2r_{\rm comp}\) do not depend on the synthetic
sets and are unchanged. The fitted multipliers are \(1.39\)--\(1.42\) at the
skewed CIFAR-10 levels, \(1.56\) at IID, \(1.25\)--\(1.29\) on SVHN and
\(1.9\) on CIFAR-100, so
without them the source errors are dominated by a scale mismatch common
to all levels: \(\|d_{\rm ind}\|\) rises to \(13\)--\(14\times10^{-3}\) on
CIFAR-10 and varies little with heterogeneity, and the excess shrinks to
between one eighth and one eighteenth of its calibrated value on CIFAR-10,
to a quarter on SVHN and to about one fiftieth on CIFAR-100. The
heterogeneity-dependent part of the decomposition is present in both
readings: \(\Delta\Phi\) and \(r_{\rm src}\) remain anti-aligned at every
skewed level (cosine \(-0.11\) to \(-0.56\)) and the excess remains
positive at every skewed level and indistinguishable from zero at IID.

\begin{table}[h]
\caption{E9 without the per-set scalar calibration. Each entry gives the calibrated value of Table \ref{tab:e9} followed by the uncalibrated value (class-mass weights, \(c_i=1\)). Norms in units of \(10^{-3}\); the excess is \(\widehat\Gamma_{\rm mean}\), the signed per-initialization difference \(\|d_{\rm ind}\|-\|d_{\rm joint}\|\) averaged within partition, mean \(\pm\) SD over partitions, which differs from \(\widehat\Gamma_{\rm RMS}=\mathcal C_{\rm ind}-\mathcal C_{\rm joint}\) of Table \ref{tab:e2-gap} by a few percent.}
\label{tab:e9-noclock}
\centering
\small
\setlength{\tabcolsep}{3.5pt}
\begin{tabular}{@{}llccccc@{}}
\toprule
Dataset & Level & \(\|d_{\rm ind}\|\) & \(\|d_{\rm joint}\|\) & \(\|r_{\rm src}\|\) & \(\cos(\Delta\Phi,r_{\rm src})\) & excess\\
\midrule
CIFAR-10 & IID & 7.41 / 13.4 & 7.34 / 13.4 & 7.41 / 13.4 & \(-0.01\) / \(+0.02\) & \(+0.07\pm0.09\) / \(-0.00\pm0.09\)\\
CIFAR-10 & Dir-1.0 & 8.72 / 13.4 & 7.34 / 13.4 & 9.15 / 13.6 & \(-0.35\) / \(-0.20\) & \(+1.42\pm0.25\) / \(+0.08\pm0.08\)\\
CIFAR-10 & Dir-0.5 & 8.71 / 13.5 & 7.33 / 13.3 & 9.84 / 13.9 & \(-0.48\) / \(-0.27\) & \(+1.42\pm0.27\) / \(+0.17\pm0.11\)\\
CIFAR-10 & Dir-0.1 & 9.85 / 13.7 & 7.50 / 13.4 & 10.4 / 14.4 & \(-0.46\) / \(-0.36\) & \(+2.38\pm1.45\) / \(+0.28\pm0.08\)\\
SVHN & Dir-0.1 & 7.65 / 11.0 & 6.18 / 10.7 & 12.1 / 12.8 & \(-0.78\) / \(-0.56\) & \(+1.47\pm0.95\) / \(+0.37\pm0.16\)\\
CIFAR-100 & Dir-0.1 & 2.51 / 7.83 & 1.96 / 7.82 & 2.87 / 7.85 & \(-0.48\) / \(-0.11\) & \(+0.56\pm0.03\) / \(+0.01\pm0.03\)\\
\bottomrule
\end{tabular}
\end{table}

\subsection{Post-hoc class reweighting (E12)}
\label{app:reweight}

\paragraph{Configuration.}
E12 asks how much of the composed error of the saved E2 sets can be removed
by changing only the class weights of the synthetic loss, with the images
fixed. The one-step synthetic update is linear in the class weights,
\(d_{\rm syn}(w)=-\eta\sum_{(i,c)}w_{i,c}\,g_{i,c}(\theta_0)\), where
\(g_{i,c}(\theta_0)\) is the cross-entropy gradient of the class-\(c\)
images of set \(S_i\) at \(\theta_0\), so the weights that best match the
real-union two-step displacement \(r_{\rm real}(\theta_0)=\Phi^{2,\eta}_{D_\cup}(\theta_0)-\theta_0\)
solve a non-negative least-squares problem in at most twenty variables.
Three weightings are compared for the independent union and, with the same
procedure, for the joint set: the paper's weighting (class-mass weights with
one scalar step multiplier per set fitted at each initialization, as in E2); weights
fitted on a bank of 16 initializations (seeds 300--315, disjoint from the
evaluation seeds) by accumulating the normal equations over the bank; and an
oracle fitted on the evaluation initialization itself, which bounds what any
class reweighting can achieve. All three are evaluated on the paper's
initialization seeds 100--102, the endpoint error is \(\|d_{\rm syn}(w)-r_{\rm real}\|\)
as in E2, and the fraction of squared error removed is
\(1-(\mathcal C_w/\mathcal C_{\rm paper})^2\). Under the bank weights the
accuracy of the Dir-0.1 sets was re-evaluated with the official protocol and a
class-weighted sampler (weights normalized to mean one), against the same
sampler with uniform weights.

\paragraph{Analysis.}
Table \ref{tab:reweight} lists the results. Two quantities move
differently. The squared error of the independent union falls by
\(15\)--\(25\%\) at the skewed levels under weights fitted on held-out
initializations and by \(45\)--\(53\%\) under the oracle on CIFAR-10
(\(41\%\) on SVHN), so at least \(47\%\) of \(\|d_{\rm ind}\|^2\) at
these levels is not removed by any class reweighting, in line with the components that scalar retiming cannot
repair (Appendix \ref{app:repair}); at IID the held-out weights do not
transfer. The mean excess over the joint set falls much more, from
\(1.38\), \(1.38\) and \(2.35\) to \(0.10\), \(0.21\) and \(0.67\)
(\(\times10^{-3}\)) at Dir-1.0, Dir-0.5 and Dir-0.1 under the held-out
weights, partly because the same fitting procedure increases the squared
endpoint error of the joint set on CIFAR-10 by \(12\)--\(14\%\); it remains positive at every skewed
level under all three weightings. At IID it is \(+0.06\pm0.10\) under
the paper's weighting and \(-0.01\pm0.12\) under the bank weights, neither
distinguishable from zero, and \(+0.07\pm0.04\) under the oracle
(\(t=4.16\), \(df=4\), \(p=0.014\), uncorrected two-sided). On accuracy, the held-out weights give no
detectable change on the five Dir-0.1 partitions: \(45.66\pm0.73\) against
\(45.50\pm0.70\) for the union (paired difference \(+0.16\pm0.49\)) and
\(50.36\pm1.01\) against \(50.28\pm0.91\) for the joint set
(\(+0.08\pm0.41\)).

\begin{table}[h]
\caption{E12: endpoint error of the saved DM sets under three class weightings. RMS over the three evaluation initializations within partition, mean \(\pm\) SD over five partitions, in units of \(10^{-3}\). Removed is the fraction of squared error removed relative to the paper's weighting; excess is \(\mathcal C_{\rm ind}-\mathcal C_{\rm joint}\) under the same weighting.}
\label{tab:reweight}
\centering
\small
\setlength{\tabcolsep}{4pt}
\begin{tabular}{@{}lllcccc@{}}
\toprule
Dataset & Level & Weighting & \(\mathcal C_{\rm ind}\) & \(\mathcal C_{\rm joint}\) & removed (ind / joint) & excess\\
\midrule
CIFAR-10 & IID & paper & 7.41 \(\pm\) 0.04 & 7.34 \(\pm\) 0.08 & -- & \(+\)0.06 \(\pm\) 0.10\\
 & & bank & 7.82 \(\pm\) 0.05 & 7.83 \(\pm\) 0.08 & \(-\)0.11 / \(-\)0.14 & \(-\)0.01 \(\pm\) 0.12\\
 & & oracle & 5.83 \(\pm\) 0.05 & 5.76 \(\pm\) 0.04 & 0.38 / 0.38 & \(+\)0.07 \(\pm\) 0.04\\
CIFAR-10 & Dir-1.0 & paper & 8.72 \(\pm\) 0.29 & 7.34 \(\pm\) 0.08 & -- & \(+\)1.38 \(\pm\) 0.25\\
 & & bank & 7.93 \(\pm\) 0.08 & 7.83 \(\pm\) 0.08 & 0.17 / \(-\)0.14 & \(+\)0.10 \(\pm\) 0.08\\
 & & oracle & 6.00 \(\pm\) 0.10 & 5.76 \(\pm\) 0.04 & 0.53 / 0.38 & \(+\)0.24 \(\pm\) 0.10\\
CIFAR-10 & Dir-0.5 & paper & 8.71 \(\pm\) 0.31 & 7.33 \(\pm\) 0.06 & -- & \(+\)1.38 \(\pm\) 0.27\\
 & & bank & 8.03 \(\pm\) 0.28 & 7.81 \(\pm\) 0.07 & 0.15 / \(-\)0.14 & \(+\)0.21 \(\pm\) 0.29\\
 & & oracle & 6.26 \(\pm\) 0.32 & 5.78 \(\pm\) 0.05 & 0.48 / 0.38 & \(+\)0.47 \(\pm\) 0.32\\
CIFAR-10 & Dir-0.1 & paper & 9.85 \(\pm\) 1.33 & 7.50 \(\pm\) 0.26 & -- & \(+\)2.35 \(\pm\) 1.44\\
 & & bank & 8.62 \(\pm\) 0.27 & 7.95 \(\pm\) 0.24 & 0.20 / \(-\)0.12 & \(+\)0.67 \(\pm\) 0.13\\
 & & oracle & 7.15 \(\pm\) 0.36 & 6.10 \(\pm\) 0.36 & 0.45 / 0.34 & \(+\)1.04 \(\pm\) 0.09\\
SVHN & IID & paper & 6.08 \(\pm\) 0.05 & 6.13 \(\pm\) 0.06 & -- & \(-\)0.05 \(\pm\) 0.03\\
 & & bank & 5.83 \(\pm\) 0.07 & 5.91 \(\pm\) 0.05 & 0.08 / 0.07 & \(-\)0.08 \(\pm\) 0.06\\
 & & oracle & 4.98 \(\pm\) 0.07 & 5.09 \(\pm\) 0.09 & 0.33 / 0.31 & \(-\)0.11 \(\pm\) 0.11\\
SVHN & Dir-0.1 & paper & 7.65 \(\pm\) 0.91 & 6.18 \(\pm\) 0.09 & -- & \(+\)1.46 \(\pm\) 0.98\\
 & & bank & 6.54 \(\pm\) 0.24 & 5.96 \(\pm\) 0.11 & 0.25 / 0.07 & \(+\)0.58 \(\pm\) 0.31\\
 & & oracle & 5.80 \(\pm\) 0.24 & 5.15 \(\pm\) 0.09 & 0.41 / 0.31 & \(+\)0.64 \(\pm\) 0.30\\
\bottomrule
\end{tabular}
\par\smallskip\raggedright\footnotesize
The removed fractions are means of the per-partition values; the Dir-0.1 union value under bank weights varies across partitions (SD \(0.18\) on CIFAR-10, \(0.17\) on SVHN).
\end{table}

\subsection{Downstream accuracy and the no-split control (E7)}
\label{app:nosplit}

Each final synthetic set is evaluated with the distilling method's
official protocol. For DM, a ConvNet is trained for 1000 epochs with
differentiable augmentation, and three evaluation seeds are averaged within
each partition. The earlier short schedule used 50 epochs without
augmentation, which under-trained both sets and compressed the gain to
about two points. Image budgets are matched within every partition.
Because some sources omit classes under Dir-0.1, the joint budget varies
between 150 and 200 images across partitions, so absolute accuracy should
be compared only within partition, not across levels. The accuracies are
those of Table \ref{tab:e2-gap}.

The gain from joint distillation is \(+5.3\), \(+5.7\), \(+5.9\), and
\(+5.3\) points at IID, Dir-1.0, Dir-0.5, and Dir-0.1. The
within-partition paired differences from IID are \(+0.44\pm0.94\),
\(+0.58\pm1.67\), and \(+0.06\pm1.23\) (\(t=1.03\), \(0.77\), and \(0.12\)
on four degrees of freedom). The same partition seed gives the largest
gain at three of the four levels, so partition identity carries part of
the variance and paired comparisons are the appropriate ones. Even when
paired, no level effect is detectable. Repeated evaluation of a fixed
synthetic set moves accuracy by up to \(\pm0.35\) points, because the
augmentation stage is not seeded (Appendix \ref{app:experiments}). In the
no-split control (two distillations of the full training set with
different seeds, unioned, against one distillation at the combined
budget), the gain is \(+6.55\pm0.42\) over three replicates, with
per-class effective rank \(8.9\) vs.\ \(10.3\) and cross/within ratio
\(0.941\), the IID values. The mean endpoint-distance difference is
\(\widehat\Gamma_{\rm mean}=(0.08\pm0.21)\times10^{-3}\), negative in one
of the three replicates, and the measured \(\mathcal C_{\rm loc}\) is exactly zero,
as the identity requires.

\paragraph{Warm-started joint distillation (E13).}
\label{app:warm}
To test whether the no-split gain is recovered by continuing distillation
from the union, DM was run on the full training set at the official
settings (image learning rate \(1.0\), differentiable augmentation, real
batch 256) starting from the union of the two independent sets of each
replicate, for 500 and \(2{,}000\) iterations, and, as a control for the
short schedule, from random real images for the same \(2{,}000\) iterations
with the same seeds (Table \ref{tab:warm}). The union and the official joint set were re-evaluated in the same run (\(46.44\) and \(52.59\) against \(46.04\) and \(52.59\) in E7, within the evaluation noise noted above), so the differences in Table \ref{tab:warm} are paired within the run. Warm-starting recovers
\(+0.44\pm0.47\) of the \(6.15\)-point gap after \(2{,}000\) iterations,
whereas the cold start reaches \(+1.83\pm0.65\) in the same number of
iterations with a higher per-class effective rank. Under the same \(2{,}000\)-iteration budget, initialization from real
images therefore yields the larger gain, and the warm-started images change
little over this interval (median relative displacement \(0.035\)). Whether
a longer continuation or other optimization settings would close the gap
from the union was not tested.

\begin{table}[h]
\caption{E13: joint distillation warm-started from the union of the no-split sets (CIFAR-10, DM, 200 images). Accuracy in percent, mean \(\pm\) SD over three replicates; paired differences are within replicate; effective rank is the per-class participation ratio of the frozen embedding.}
\label{tab:warm}
\centering
\small
\begin{tabular}{@{}lcccc@{}}
\toprule
Set & Iterations & Accuracy & vs.\ union & effective rank\\
\midrule
union of the independent sets & 0 & 46.44 \(\pm\) 0.50 & -- & 8.91\\
warm start from the union & 500 & 46.16 \(\pm\) 0.51 & \(-\)0.28 \(\pm\) 0.43 & 9.00\\
warm start from the union & 2,000 & 46.88 \(\pm\) 0.67 & \(+\)0.44 \(\pm\) 0.47 & 9.15\\
cold start from real images & 2,000 & 48.27 \(\pm\) 0.54 & \(+\)1.83 \(\pm\) 0.65 & 9.66\\
joint set (official schedule) & 20,000 & 52.59 \(\pm\) 0.36 & \(+\)6.15 \(\pm\) 0.66 & 10.29\\
\bottomrule
\end{tabular}
\end{table}

\subsection{Class-allocation-matched joint set (E11)}
\label{app:balanced}

\paragraph{Configuration.}
Under Dir-0.1 the canonical joint set of E2 places images in every class,
whereas the composed set stores only the classes each source holds, so
the two sets differ in their per-class allocation as well as in how they
were distilled. E11 removes the first difference. On the three Dir-0.1
partitions of CIFAR-10 the joint set was re-distilled with DM at the
official schedule (\(20{,}000\) iterations, differentiable augmentation,
real initialization) with its per-class counts fixed to those of the
composed set (Figure \ref{fig:syn-dm}, right): on partition 0, \(20\) images in the five classes both
sources hold and \(10\) in the five held by one source (\(150\) in all),
on partition 1, \(20\) in seven classes and \(10\) in three (\(170\)),
and on partition 2, where both sources hold all ten classes, \(20\) in
every class (\(200\)), which is the canonical design and serves as a
replicate with a different distillation draw. The composed set, the
matched joint set and the canonical joint set were evaluated with five
evaluation seeds under the \(1{,}000\)-epoch protocol, and the endpoint
quantities with the three initialization seeds of E2.

\paragraph{Analysis.}
Matching the class allocation leaves the excess endpoint error positive in
all nine (partition, seed) cells, unchanged on partition 0 (\(1.55\)
against \(1.53\times10^{-3}\)) and reduced by about a third on
partition 1 (\(1.06\) against \(1.53\times10^{-3}\); Table
\ref{tab:balanced}), while the accuracy gain on these two partitions
falls from \(4.5\) to \(1.0\) and from \(5.4\) to \(3.4\) points. On
the replicate partition both quantities agree with the canonical set to
within \(0.3\) points and \(10^{-5}\). The class allocation therefore
accounts for part of the accuracy gain at Dir-0.1 and can change the size
of the excess endpoint error without removing it, so the two quantities
respond differently to the same intervention (Appendix
\ref{app:nosplit}). The
canonical values reproduce the rows of Table \ref{tab:e2-gap} to
\(10^{-4}\) relative in the endpoint quantities and to within \(0.5\)
points in the five-seed re-evaluation of accuracy.

\begin{table}[h]
\caption{E11: class-allocation-matched joint set at Dir-0.1 on CIFAR-10. The image count is the budget of both joint sets on that partition. Endpoint quantities are RMS over three initialization seeds in units of \(10^{-3}\), accuracy is the mean and SD over five evaluation seeds in percent, and the gain is joint minus independent.}
\label{tab:balanced}
\centering
\footnotesize
\setlength{\tabcolsep}{2.8pt}
\begin{tabular}{@{}lrrrrrrrrrr@{}}
\toprule
& & \multicolumn{2}{c}{Matched joint} & \multicolumn{2}{c}{Canonical joint} & & \multicolumn{2}{c}{Matched} & \multicolumn{2}{c}{Canonical}\\
\cmidrule(lr){3-4}\cmidrule(lr){5-6}\cmidrule(lr){8-9}\cmidrule(lr){10-11}
Partition (images) & \(\mathcal C_{\rm ind}\) & \(\mathcal C_{\rm joint}\) & \(\widehat\Gamma_{\rm RMS}\) & \(\mathcal C_{\rm joint}\) & \(\widehat\Gamma_{\rm RMS}\) & Acc.\ indep. & Acc.\ joint & Gain & Acc.\ joint & Gain\\
\midrule
0 (150) & 9.48 & 7.92 & 1.55 & 7.95 & 1.53 & \(45.4\pm0.7\) & \(46.4\pm0.6\) & \(+1.0\) & \(49.9\pm0.5\) & \(+4.5\)\\
1 (170) & 8.88 & 7.82 & 1.06 & 7.35 & 1.53 & \(46.1\pm1.0\) & \(49.5\pm0.5\) & \(+3.4\) & \(51.4\pm0.4\) & \(+5.4\)\\
2 (200) & 12.18 & 7.29 & 4.90 & 7.29 & 4.90 & \(46.5\pm0.8\) & \(52.5\pm0.8\) & \(+6.0\) & \(52.8\pm0.7\) & \(+6.3\)\\
\bottomrule
\end{tabular}
\end{table}

\subsection{Image budget and the local reference}
\label{app:regime}

\paragraph{The constituent-error regime.}
Corollary \ref{cor:detectability} guarantees the structural gap only when
it exceeds the weighted constituent error, and the independent-versus-joint
comparison lies on the other side of that threshold. Along the official DM
schedule (Figure \ref{fig:trajectory}) the constituent error falls from
\(0.33\) to \(0.15\) within 5{,}000 iterations and then plateaus, the
composed error tracks it with elasticities of \(0.93\)--\(1.08\) over the
four training intervals, and both settle with
\(\mathcal C_{\rm ind}/\mathcal C_{\rm loc}=1.22\pm0.03\) at the end of the
schedule. Additional iterations therefore do not bring ten images per
class closer to the two-step map of 25{,}000 images; the budget sweep
below varies the budget directly.

\begin{figure}[h]
\centering
\includegraphics[width=\linewidth]{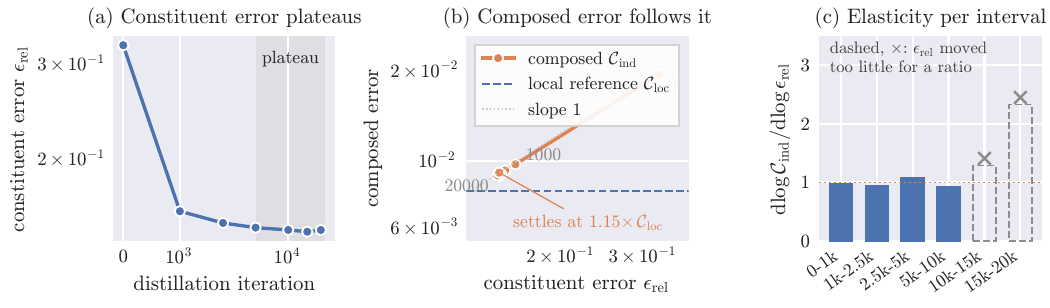}
\caption{Composed error follows constituent error (DM,
Dirichlet-0.1, partition p0). (a) Relative constituent error at the
snapshot checkpoints \{0,1000,2500,5000,10000,15000,20000\} of the
official schedule; it plateaus after 5{,}000 iterations. (b) The composed
error \(\mathcal C_{\rm ind}\) tracks it with unit elasticity and settles
above the local reference \(\mathcal C_{\rm loc}\) (dashed), at
\(1.15\times\) for this partition and about \(1.2\times\) over the five
partitions of the official sweep.}
\label{fig:trajectory}
\end{figure}

\begin{figure}[h]
\centering
\includegraphics[width=0.8\linewidth]{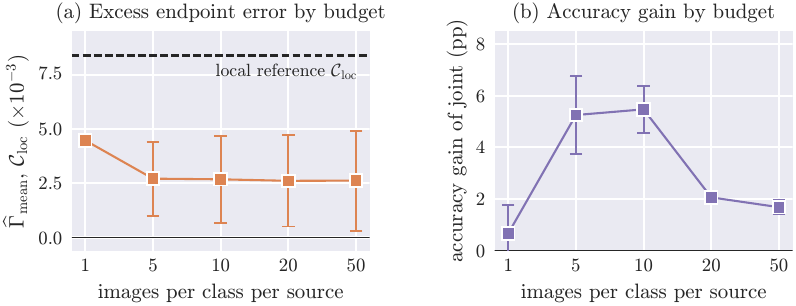}
\caption{Image budget at Dir-0.1 (the same three partitions at every budget). (a) Excess endpoint error \(\widehat\Gamma_{\rm mean}\) of the separately distilled union, aggregated as the mean over seeds of the per-seed endpoint-distance difference as in Table \ref{tab:ipc}, with the local reference \(\mathcal C_{\rm loc}\), which the partitions fix. (b) Accuracy gain of joint over separate-then-union distillation.}
\label{fig:budget}
\end{figure}

\paragraph{Varying the image budget.}
Table \ref{tab:ipc} varies the images per class (IPC) of each source at
Dir-0.1 on the same three partitions (p0--p2) and three initialization
seeds at every budget (Figure \ref{fig:budget}). Three findings follow.
First, the budget moves the constituent error where iterations could not:
\(\epsilon_{\rm rel}\) falls from \(0.42\) at IPC 1 to \(0.15\) at IPC 50,
the composed error follows it with elasticities between \(0.83\) and
\(1.09\), and \(\mathcal C_{\rm ind}/\mathcal C_{\rm loc}\) descends from
\(2.60\) to \(1.02\pm0.08\), while \(\mathcal C_{\rm loc}\), which depends
only on the real partition and the initialization, is identical at every
budget. Second, whether \(\mathcal C_{\rm ind}\) settles at a persistent
plateau near \(\mathcal C_{\rm loc}\) is not resolved. The criteria were
fixed before the two largest budgets were run: a plateau is a ratio
stabilizing at or above one with elasticity falling clearly below one, a
crossing is a partition mean below \(0.95\) with most measurements below
one, and anything between is undetermined. At IPC 50 the partition mean
is \(1.02\pm0.08\), the three partitions give \(0.95\), \(1.02\) and
\(1.11\), four of the nine partition--seed measurements lie below one and
the last elasticity is \(0.95\), so the data fall in the undetermined
region; no general bound requires a plateau, because approximate
constituents can cancel part of the structural term (Appendix
\ref{app:e5}). Third, the excess endpoint error and the accuracy gain
respond differently to the budget: the excess is about \(1.7\times\) its
IPC-10 value at IPC 1 and essentially constant for IPC \(\ge5\), whereas
the gain is near zero at IPC 1, about five points at IPC 5--10 and about
two points at IPC 20--50. Together with the heterogeneity axis, which
changes the excess but not the gain on CIFAR-10, the budget is a second
manipulation on which the two quantities separate; Appendix
\ref{app:coordination} discusses one reading of the inverted-U profile.

\begin{table}[h]
\caption{Image budget at Dir-0.1 (DM, official schedule; partitions
p0--p2 \(\times\) three initialization seeds at every budget).}
\label{tab:ipc}
\centering
\small
\setlength{\tabcolsep}{4pt}
\begin{tabular}{@{}rcccccc@{}}
\toprule
IPC & \(\epsilon_{\rm rel}\) & \(\mathcal C_{\rm ind}\) & \(\mathcal C_{\rm ind}/\mathcal C_{\rm loc}\) & \(\widehat\Gamma_{\rm mean}\) & acc.\ diff.\ (pp) & elasticity to next\\
\midrule
1 & 0.416 \(\pm\) 0.023 & 21.3 & 2.60 \(\pm\) 0.45 & 4.47 \(\pm\) 0.07 & \(+\)0.66 \(\pm\) 1.12 & 0.83\\
5 & 0.206 \(\pm\) 0.021 & 11.9 & 1.44 \(\pm\) 0.10 & 2.71 \(\pm\) 1.70 & \(+\)5.25 \(\pm\) 1.50 & 0.88\\
10 & 0.172 \(\pm\) 0.019 & 10.2 & 1.22 \(\pm\) 0.03 & 2.68 \(\pm\) 2.00 & \(+\)5.47 \(\pm\) 0.90 & 1.09\\
20 & 0.157 \(\pm\) 0.019 & 9.20 & 1.09 \(\pm\) 0.06 & 2.61 \(\pm\) 2.10 & \(+\)2.06 \(\pm\) 0.14 & 0.95\\
50 & 0.147 \(\pm\) 0.017 & 8.63 & 1.02 \(\pm\) 0.08 & 2.62 \(\pm\) 2.30 & \(+\)1.68 \(\pm\) 0.27 & N/A\\
\bottomrule
\end{tabular}
\par\smallskip\raggedright\footnotesize
Each entry is the mean \(\pm\) SD over the three partitions of the
partition-level value (itself the mean over three seeds);
\(\mathcal C_{\rm loc}=8.38\times10^{-3}\) at every IPC.
\(\mathcal C_{\rm ind}\) and \(\widehat\Gamma_{\rm mean}\) in units of
\(10^{-3}\). Elasticity is
\(\mathrm{d}\log\mathcal C_{\rm ind}/\mathrm{d}\log\epsilon_{\rm rel}\)
between consecutive rows, computed from the unrounded partition means
(recomputing from the rounded entries gives 0.83, 0.85, 1.13, 0.97). The joint set's budget is matched to the sum of
the two sources' budgets (IPC 2 to 100 over the union). The IPC-10 row uses
the three sweep partitions and therefore differs slightly from the
five-partition values of Table \ref{tab:e2-gap}.
\end{table}

\subsection{Further datasets (SVHN and CIFAR-100)}
\label{app:svhn}
SVHN \citep{netzer2011svhn} uses the standard training split (73{,}257
images, no \emph{extra} split) with the same tensor conversion and
per-channel normalization as CIFAR-10, five partitions and three
initialization seeds per level at IID and Dir-0.1, and the same DM
schedule, budget and evaluation protocol. CIFAR-100 (E15) uses three
partitions per level, the Dirichlet draw being per class over the 100
classes, and ten images per represented class per source, so the union
stores \(1{,}660\)--\(2{,}000\) images and the joint set the same budget;
the E9 decomposition was applied to every CIFAR-100 cell (Tables
\ref{tab:e9} and \ref{tab:e9-noclock}). Table \ref{tab:svhn} collects the
results. The endpoint layer replicates on both datasets: the IID excess is
small on both (\(-0.04\pm0.04\) on SVHN, \(0.09\pm0.02\) on CIFAR-100,
\(\times10^{-3}\)) and the paired increase from IID to Dir-0.1 is positive
on every partition (five of five on SVHN, three of three on CIFAR-100), and the CIFAR-100 decomposition shows the CIFAR-10 pattern,
with the two terms anti-aligned (cosine \(-0.48\), negative in nine of
nine cells) and the prediction aligned with \(\Delta\Phi\) (cosine
\(0.86\)) but orthogonal to \(d_{\rm ind}\). The accuracy layer depends on
the dataset. On SVHN the gain rises with skew (paired difference
\(+2.24\pm1.21\), five of five, \(t=4.14\)), because the composed set
loses more accuracy under skew (\(70.8\to66.4\), against
\(46.3\to45.8\) on CIFAR-10); on CIFAR-100 it is smaller at Dir-0.1 than
at IID (paired difference \(-1.10\pm1.00\), \(p=0.20\)), a difference
three partitions do not resolve. The SVHN no-split control gives
\(+2.16\pm1.27\) (\(t=2.95\), \(p\approx0.10\)) with
\(\mathcal C_{\rm loc}\) exactly zero, smaller than on CIFAR-10, and
SVHN's frozen-feature effective ranks are lower throughout (\(4.5\)
against \(4.9\) per class, compared with \(8.9\) against \(10.3\) on
CIFAR-10), consistent with the smaller within-class variation of digit
images.

\begin{table}[h]
\caption{Cross-dataset comparison (DM, official schedule, IPC 10
per source; five partitions per split level on CIFAR-10 and SVHN, three on
CIFAR-100, three replicates for the no-split controls).}
\label{tab:svhn}
\centering
\small
\begin{tabular}{@{}llcccc@{}}
\toprule
Dataset & Level & \(n\) & \(\mathcal C_{\rm loc}\) & \(\widehat\Gamma_{\rm mean}\) & accuracy gain (pp)\\
\midrule
CIFAR-10 & IID & 5 & 0.016 & 0.08 \(\pm\) 0.06 & \(+\)5.28 \(\pm\) 0.92\\
CIFAR-10 & Dir-0.1 & 5 & 7.72 & 2.33 \(\pm\) 1.40 & \(+\)5.35 \(\pm\) 0.75\\
SVHN & IID & 5 & 0.013 & \(-\)0.04 \(\pm\) 0.04 & \(+\)3.24 \(\pm\) 1.10\\
SVHN & Dir-0.1 & 5 & 10.5 & 1.47 \(\pm\) 0.95 & \(+\)5.48 \(\pm\) 1.45\\
CIFAR-10 & no split & 3 & 0 (exact) & 0.08 \(\pm\) 0.21 & \(+\)6.55 \(\pm\) 0.42\\
SVHN & no split & 3 & 0 (exact) & 0.09 \(\pm\) 0.16 & \(+\)2.16 \(\pm\) 1.27\\
CIFAR-100 & IID & 3 & 0.018 & 0.09 \(\pm\) 0.02 & \(+\)4.98 \(\pm\) 0.14\\
CIFAR-100 & Dir-0.1 & 3 & 1.35 & 0.56 \(\pm\) 0.03 & \(+\)3.88 \(\pm\) 1.14\\
\midrule
\multicolumn{6}{@{}l}{Paired difference Dir-0.1 \(-\) IID within partition (\(t\) on 4 degrees of freedom, 2 for CIFAR-100)}\\
CIFAR-10 & & 5 & & \(+\)2.25 \(\pm\) 1.39 (\(t=3.62\)) & \(+\)0.06 \(\pm\) 1.23 (\(t=0.12\))\\
SVHN & & 5 & & \(+\)1.51 \(\pm\) 0.94 (\(t=3.58\)) & \(+\)2.24 \(\pm\) 1.21 (\(t=4.14\))\\
CIFAR-100 & & 3 & & \(+\)0.47 \(\pm\) 0.02 (\(t=37\)) & \(-\)1.10 \(\pm\) 1.00 (\(t=-1.91\))\\
\bottomrule
\end{tabular}
\par\smallskip\raggedright\footnotesize
\(\mathcal C_{\rm loc}\) and \(\widehat\Gamma_{\rm mean}\) in units of
\(10^{-3}\); mean \(\pm\) SD over partitions, each partition averaging its
initialization seeds (five seeds for the CIFAR-10 excess error here, three
for SVHN and CIFAR-100). Two-sided \(p\): \(0.022\), \(0.91\) (CIFAR-10),
\(0.023\), \(0.014\) (SVHN) and \(0.001\), \(0.20\) (CIFAR-100).
\end{table}

\subsection{Cross-architecture evaluation (E14)}
\label{app:crossarch}

\paragraph{Configuration.}
The saved E2 sets (four levels, five partitions), the E7 no-split sets and
the E11 class-allocation-matched sets were re-evaluated on ResNet-18 (the
instance-normalized variant of the dataset-condensation code) with the
DM evaluation protocol otherwise unchanged (1000 epochs, learning rate
\(0.01\), batch 256, differentiable augmentation, three evaluation seeds).
No set was re-distilled.

\paragraph{Analysis.}
Table \ref{tab:crossarch} compares the two evaluation networks. On
ResNet-18 the joint set remains more accurate than the union on every
CIFAR-10 partition, but the gain is smaller than on the ConvNet
(\(+1.8\) to \(+3.4\) against \(+5.3\) to \(+5.9\) points) and its
dependence on skew differs. With the paired design of Appendix
\ref{app:nosplit} (the same partition seeds at every level), the
within-partition difference of the ResNet-18 gain between Dir-0.1 and IID
is \(+1.64\pm1.39\) points, positive on five of five partitions
(\(t=2.63\) on four degrees of freedom, \(p=0.06\)); between Dir-0.5 and
IID it is \(+1.50\pm1.50\) (five of five, \(p=0.09\)) and between
Dir-1.0 and IID \(+0.24\pm0.74\) (three of five). The corresponding
ConvNet differences are \(+0.06\pm1.23\), \(+0.58\pm1.67\) and
\(+0.44\pm0.94\). The no-split gain on ResNet-18 is \(+1.13\pm0.79\)
(positive in all three replicates, \(p=0.13\) on two degrees of freedom),
and the class-allocation-matched joint set gains \(+1.37\) against
\(+2.81\) for the canonical joint set on the same three Dir-0.1
partitions. On SVHN the ResNet-18 gain is \(+0.39\pm1.89\) at IID and
\(+4.12\pm2.01\) at Dir-0.1, a paired difference of \(+3.72\pm3.41\)
(four of five partitions, \(p=0.07\)). Source heterogeneity is therefore not
necessary for the gain on either network; its size depends on the
evaluation network, and on ResNet-18 the gain at the highest skew exceeds
the IID gain on every partition, a pattern that the ConvNet shows on SVHN
but not on CIFAR-10.

\begin{table}[h]
\caption{E14: accuracy of the same synthetic sets on the ConvNet and on ResNet-18 (DM, official evaluation protocol, three evaluation seeds). Percent, mean \(\pm\) SD over partitions; gain is joint minus union within partition.}
\label{tab:crossarch}
\centering
\small
\setlength{\tabcolsep}{4pt}
\begin{tabular}{@{}llcccccc@{}}
\toprule
 & & & \multicolumn{2}{c}{ConvNet} & \multicolumn{3}{c}{ResNet-18}\\
\cmidrule(lr){4-5}\cmidrule(lr){6-8}
Dataset & Level & \(n\) & joint & gain & union & joint & gain\\
\midrule
CIFAR-10 & IID & 5 & 51.6 \(\pm\) 0.6 & \(+\)5.28 \(\pm\) 0.92 & 40.3 \(\pm\) 0.6 & 42.1 \(\pm\) 0.5 & \(+\)1.81 \(\pm\) 0.85\\
CIFAR-10 & Dir-1.0 & 5 & 51.9 \(\pm\) 0.5 & \(+\)5.72 \(\pm\) 0.64 & 40.2 \(\pm\) 0.6 & 42.3 \(\pm\) 0.6 & \(+\)2.05 \(\pm\) 0.77\\
CIFAR-10 & Dir-0.5 & 5 & 51.7 \(\pm\) 0.5 & \(+\)5.86 \(\pm\) 1.24 & 39.2 \(\pm\) 1.1 & 42.6 \(\pm\) 0.4 & \(+\)3.31 \(\pm\) 0.84\\
CIFAR-10 & Dir-0.1 & 5 & 51.1 \(\pm\) 1.1 & \(+\)5.35 \(\pm\) 0.75 & 37.8 \(\pm\) 1.1 & 41.2 \(\pm\) 0.8 & \(+\)3.44 \(\pm\) 0.95\\
CIFAR-10 & no split & 3 & 52.6 \(\pm\) 0.4 & \(+\)6.55 \(\pm\) 0.42 & 41.1 \(\pm\) 0.8 & 42.2 \(\pm\) 0.1 & \(+\)1.13 \(\pm\) 0.79\\
SVHN & IID & 5 & 73.7 \(\pm\) 0.9 & \(+\)3.24 \(\pm\) 1.10 & 67.0 \(\pm\) 1.2 & 67.4 \(\pm\) 1.1 & \(+\)0.39 \(\pm\) 1.89\\
SVHN & Dir-0.1 & 5 & 72.8 \(\pm\) 0.7 & \(+\)5.48 \(\pm\) 1.45 & 61.9 \(\pm\) 2.4 & 66.0 \(\pm\) 0.7 & \(+\)4.12 \(\pm\) 2.01\\
\bottomrule
\end{tabular}
\par\smallskip\raggedright\footnotesize
ConvNet accuracies are those of Tables \ref{tab:e2-gap} and \ref{tab:svhn} (three evaluation seeds); ResNet-18 accuracies are from this experiment (three evaluation seeds). The one-sample \(t\) of the ResNet-18 gain against zero is \(4.8\), \(6.0\), \(8.9\), \(8.1\) at the four CIFAR-10 levels and \(0.5\), \(4.6\) on SVHN.
\end{table}

\subsection{Feature-space diagnostics of independently and jointly distilled sets (E8)}
\label{app:coordination}

For each synthetic set (Table \ref{tab:coordination}) we embed every
image with a frozen, randomly initialized ConvNet and compute, per class,
the participation ratio of the Gram spectrum of the embedded images, that
is, the number of directions the class's synthetic images effectively
occupy, averaged over classes. For the two independently distilled sets we
also compute the ratio of the mean cross-set distance to the mean
within-set distance. A ratio below one is consistent with overlap between
the two sets, and a ratio above one with the sets occupying different
regions. These are descriptive statistics of the images and are unrelated
to \(r_H\), which is a property of the real sources' Hessians.

\begin{table}[h]
\caption{Effective rank per class and cross/within distance ratio in a frozen feature space. Mean \(\pm\) SD over partitions (\(n\) as listed; two IID partitions, five Dir-0.1 partitions, three no-split replicates); image budgets matched within partition. Dir-1.0 and Dir-0.5 were not measured.}
\label{tab:coordination}
\centering
\small
\begin{tabular}{@{}llccc@{}}
\toprule
Level & \(n\) & eff.\ rank, separate (unioned) & eff.\ rank, joint & cross/within\\
\midrule
IID & 2 & 8.91 \(\pm\) 0.11 & 10.27 \(\pm\) 0.02 & 0.942 \(\pm\) 0.000\\
Dirichlet-0.1 & 5 & 7.39 \(\pm\) 0.42 & 9.79 \(\pm\) 0.40 & 1.288 \(\pm\) 0.097\\
No split (E7) & 3 & 8.91 \(\pm\) 0.05 & 10.29 \(\pm\) 0.11 & 0.941 \(\pm\) 0.001\\
\bottomrule
\end{tabular}
\end{table}

Under IID the two independent sets have a cross/within ratio below one,
consistent with overlapping coverage in the frozen feature space; under
label skew the ratio is above one, consistent with each set covering its
own skewed subset. In both cases the joint set occupies more directions
per class at the same budget, a statement about the chosen embedding
rather than about coverage of the real data. The pattern accompanies the
accuracy gain and is present when the curvature-variance term is
identically zero, as in the no-split control. Class omission under
Dir-0.1 cannot account for the gain under IID and with no split, where
every class is present in both sources at identical budgets; its
contribution at Dir-0.1 is isolated by the class-allocation-matched
control of Appendix \ref{app:balanced}. One reading of the inverted-U
budget profile of Appendix \ref{app:regime} is that redundancy between
the two sets matters most at intermediate budgets, where each set is
large enough to overlap the other and small enough for the overlap to
cost accuracy; the statistics here are consistent with that reading but
do not measure class coverage directly.

\subsection{Assumption diagnostics (E3)}
\label{app:assumption-support}

\begin{table}[!htb]
\caption{Assumption diagnostics.}
\label{tab:assumption-support}
\centering
\small
\begin{tabular}{@{}llrrr@{}}
\toprule
Network & Level & \(\eta L\) & \(r_H\) & \(\delta_H\)\\
\midrule
\multicolumn{5}{@{}l}{\emph{Local-law network at \(\eta_{\rm ref}=7.8\times10^{-4}\)}}\\
ReLU, 2{,}000-sample subset & IID / Dir-1.0 / Dir-0.1 & 0.112 / 0.116 / 0.126 & --- & 0.29 \(\pm\) 0.08\\
ReLU, \(10^4\) subset & IID / Dir-0.1 & 0.096 / 0.108 & --- & ---\\
SiLU, \(10^4\) subset & IID / Dir-0.1 & 0.047 / 0.066 & --- & 0.14 \(\pm\) 0.02\\
tanh, \(10^4\) subset & IID / Dir-0.1 & 0.039 / 0.060 & --- & 0.20 \(\pm\) 0.02\\
\addlinespace
\multicolumn{5}{@{}l}{\emph{Distillation network (ReLU) at \(\eta=6.25\times10^{-3}\)}}\\
& IID & 0.80 \(\pm\) 0.00 & N/A & ---\\
& Dir-1.0 & 0.85 \(\pm\) 0.02 & 204.0 \(\pm\) 8.4 & ---\\
& Dir-0.5 & 0.86 \(\pm\) 0.02 & 201.0 \(\pm\) 4.0 & ---\\
& Dir-0.1 & 1.03 \(\pm\) 0.06 & 208.5 \(\pm\) 5.2 & 0.28 \(\pm\) 0.08\\
\bottomrule
\end{tabular}
\par\smallskip\raggedright\footnotesize
\(L=\max_i\|H_i\|_2\) at the shared initialization; \(r_H\) is the
participation ratio \(\operatorname{tr}(V_H)^2/\operatorname{tr}(V_H^2)\)
from Hutchinson probes \citep{hutchinson1989stochastic}, out of
\(320{,}010\) parameters; \(\delta_H=\|H(\theta_2)-H(\theta_0)\|_2/\|H(\theta_0)\|_2\)
along the two-step real-union trajectory at Dir-0.1, mean \(\pm\) SD over
five partitions. Local-law rows are partition means; distillation rows
are mean \(\pm\) SD over five partitions (three seeds at Dir-0.1, one
otherwise). Dashes: not measured. N/A: under IID \(V_H\) is at numerical
zero and its participation ratio is not meaningful. \(L\) differs by
activation (ReLU \(123\)--\(139\), SiLU \(60\)--\(85\), tanh \(50\)--\(76\)),
so the activations sit at different \(\eta L\) at a given \(\eta\).
\end{table}

The main-text assumptions are stated in Section \ref{sec:commutation}.
Table \ref{tab:assumption-support} reports the diagnostics available for
them: the normalized step size \(\eta L\) (which enters the positive-branch
condition and the Theorem~\ref{thm:defect} bracket) for the local-law
network at its calibration rate and for the distillation network at its
operating rate, and the effective rank of \(V_H\) (the \(\sqrt{r_H}\)
factor in the same bracket) for the distillation network. It also reports
the relative Hessian change \(\delta_H\) along the two-step trajectory,
the empirical proxy for the slowly-varying-Hessian condition of
\eqref{eq:nonlinear-remainder}. Two of the three diagnostics sit at their
boundaries. \(\eta L\approx1\) at the distillation rate, and \(\delta_H\)
lies between \(0.14\) (SiLU) and \(0.29\) (ReLU), meaning that the Hessian
moves by 14--29\% within two steps. ReLU has the largest and most variable
\(\delta_H\) although its activation has zero second derivative almost
everywhere. The loss Hessian still varies through the products of layer
parameters, the normalization layers, and the cross-entropy loss, and it
also changes non-smoothly when the trajectory crosses activation
boundaries, so the measured \(\delta_H\) should not be attributed to the
crossings alone. The two ReLU constructions (local-law and official
networks) agree (\(0.29\) vs.\ \(0.28\)), so the activation comparison
transfers to the network used for distillation. Only the low effective
rank holds with a wide margin, consistent with the concentrated Hessian
spectra reported for trained and initialized networks
\citep{sagun2017empirical,ghorbani2019investigation}. The exponent measured in the local-law
validation, \(3\)--\(6\%\) below four (Appendix \ref{app:e1-detail}), is
consistent with both boundary conditions. \(r_H\) on the local-law network
was not measured.

\subsection{Earlier short-schedule and preliminary diagnostics}
\label{app:legacy}

The results in this subsection were obtained with the earlier DM schedule
(500 iterations, 50-epoch evaluation) or are single-condition
preliminaries. They support the main results but do not carry them; the
structural quantities \(P_{\rm comp}\) and \(\mathcal C_{\rm loc}\) in
them depend only on the real partitions and are unchanged by the schedule.

\paragraph{Type of heterogeneity: the feature-skew family.}
A label-balanced feature-skew family complements the label-skewed
partitions. A frozen, untrained ConvNet embeds every image, and within
each class the examples are sorted by a projection whose direction is
derived from the partition seed and split at the median, so each source
receives half of every class (\(\alpha_1=\alpha_2=1/2\), label total
variation zero) and only the within-class inputs differ. Random
directions give eight partitions; a single PC1 split serves as a
higher-contrast diagnostic and is not treated as an independent sample.
The structural term of this family is \(60\times\) below Dirichlet-1.0 in
\(P_{\rm comp}\) (about \(8\times\) in RMS), and under the short schedule
its excess endpoint error is significantly negative (Table
\ref{tab:e2-feature}), so it is a second level at which the structural
signal is small relative to the observed residuals.

\begin{table}[h]
\caption{Type of heterogeneity changes the empirical outcome.}
\label{tab:e2-feature}
\centering
\small
\setlength{\tabcolsep}{4pt}
\begin{tabular}{@{}lrrrrrrr@{}}
\toprule
Shift & label TV & \(n\) & \(P_{\rm comp}\) & \(\sqrt{P_{\rm comp}}\) & \(\mathcal C_{\rm loc}\) & \(\mathcal C_{\rm loc}/\sqrt{P_{\rm comp}}\) & \(\widehat\Gamma_{\rm mean}\)\(^{\dagger}\)\\
\midrule
IID & 0.000 & 5 & \(4.13\times10^{-12}\) & 0.002 & 0.016 & (8.1) & \(-\)0.377\\
Feature skew\(^{\rm f}\) & 0.000 & 8 & \(3.95\times10^{-7}\) & 0.63 & 0.652 & (1.04) & \(-\)0.478\\
Dirichlet-1.0 & 0.227 & 5 & \(2.30\times10^{-5}\) & 4.61 & 4.54 & 1.01 & \(+\)0.928\\
Dirichlet-0.5 & 0.300 & 5 & \(5.25\times10^{-5}\) & 7.12 & 5.48 & 0.78 & not run\\
Dirichlet-0.1 & 0.392 & 5 & \(1.42\times10^{-4}\) & 11.7 & 7.72 & 0.67 & \(+\)1.96\\
\bottomrule
\end{tabular}
\par\smallskip\raggedright\footnotesize
\(P_{\rm comp}\) and \(\mathcal C_{\rm loc}\) from one protocol (E2
network, \(\eta=6.25\times10^{-3}\), full sources, five partitions
\(\times\) five seeds); \(\sqrt{P_{\rm comp}}\), \(\mathcal C_{\rm loc}\),
and \(\widehat\Gamma_{\rm mean}\) in units of \(10^{-3}\). Each column is
the mean of the per-measurement values over partitions and seeds, so the
ratio column is the mean of the per-measurement ratios and is not the
quotient of the two column means, and \(\sqrt{P_{\rm comp}}\) is not the
square root of the \(P_{\rm comp}\) column. In the quadratic model
\(\mathcal C_{\rm loc}=\mathcal C_{\rm struct}=\sqrt{P_{\rm comp}}\). On
the network, the ratio \(\mathcal C_{\rm loc}/\sqrt{P_{\rm comp}}\) is the
calibration of the quadratic prediction against the measured local
reference at the E2 rate, where \(\eta L\approx1\). It is \(1.01\) at
Dir-1.0 and drifts to \(0.67\) at Dir-0.1, the same downward drift with
step size that the local-law validation shows above \(\eta_{\rm ref}\)
(Figure \ref{fig:e1-full}b). At IID both quantities are at numerical
zero, and the ratio (in parentheses) is not meaningful. The same \(P_{\rm comp}\) values measured at \(\eta_{\rm ref}\)
and rescaled by \((\eta/\eta_{\rm ref})^4\) agree with these to within
2.4\%. \(^{\rm f}\)Feature skew is from the earlier short-schedule run
(\(P_{\rm comp}\) at the E2 rate, eight partitions), so its ratio, in
parentheses, is not directly comparable with the others. \(^{\dagger}\)\(\widehat\Gamma_{\rm
mean}\) is from the earlier 500-iteration schedule (Table \ref{tab:e2-gap}
reports \(\widehat\Gamma_{\rm RMS}\) at the official schedule); a negative value means the composed set's map
was closer to the real-union map than the joint set's.
\end{table}

For random-projection feature skew, \(P_{\rm comp}\) is positive in all
eight partitions, with mean \(3.95\times10^{-7}\) at the E2 rate and 95\%
CI \([2.78,5.16]\times10^{-7}\). Nevertheless,
\(\widehat\Gamma_{\rm mean}=-4.78\times10^{-4}\) with CI
\([-5.92,-3.65]\times10^{-4}\), and all eight gaps are negative. The PC1
diagnostic increases \(P_{\rm comp}\) to \(4.70\times10^{-6}\) and
\(\mathcal C_{\rm loc}\) to \(2.14\times10^{-3}\), but its single measured
gap remains negative at \(-6.59\times10^{-4}\). This single-partition
probe argues against an explanation by lack of signal. In this family
the observed composed error is \(13.4\) times the local
reference and its cosine with the quadratic prediction is \(0.011\), so
the measured endpoint is dominated by a residual nearly orthogonal to the
structural term.

\paragraph{Semantic-mass intervention.}

Figure \ref{fig:mass-intervention} compares stored semantic mass with
uniform synthetic class weights on the original three partitions per
heterogeneous level, with clock calibration in all cells (IID is exactly
one by construction). Improved constituent fidelity does not necessarily
transfer to the union.

\begin{figure}[h]
\centering
\includegraphics[width=0.85\linewidth]{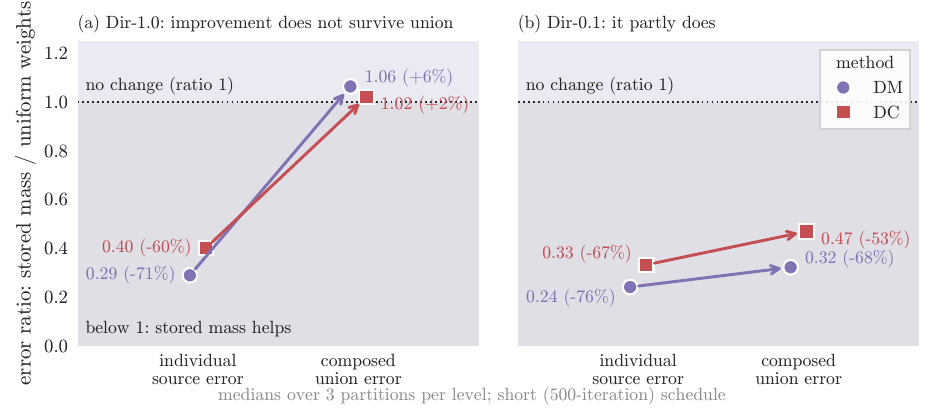}
\caption{Better constituents do not always yield a better union.
Ratio of the error with stored semantic mass to the error with uniform
weights, for the individual sources and for their composed union
(percentages: change relative to uniform weights). (a) At Dir-1.0 the
individual error falls by 60--71\% while the composed error is unchanged.
(b) At Dir-0.1 the improvement partly survives union.}
\label{fig:mass-intervention}
\end{figure}

\paragraph{Five sources.}
\label{app:m5}
One condition with \(m=5\) sources was run at Dir-0.1 with the same DM
schedule and IPC 10 per source, on five partitions with three
initialization seeds each; the joint set received the sum of the five
budgets (Table \ref{tab:m5}). The local reference, the quadratic
prediction, the excess endpoint error and the accuracy gain all grow from
two to five sources, so the mechanism is not restricted to two
constituents, although \(V_H\) is not monotone in the number of sources in
general and this is an observation rather than a prediction. With IPC
fixed per source the composed set's image count roughly doubles, so
\(\mathcal C_{\rm ind}\) and its ratio to \(\mathcal C_{\rm loc}\)
confound source count with budget and are not interpreted.

\begin{table}[h]
\caption{Two versus five sources at Dir-0.1 (DM, official schedule, IPC 10 per source). Mean \(\pm\) SD over \(n=5\) partitions; within a partition the endpoint quantities are RMS over the three initialization seeds 100--102 and the accuracy gain is their mean, the convention of Table \ref{tab:e2-gap}, whose Dir-0.1 row the \(m=2\) column reproduces. The first block depends only on the real partition and the network; the second describes the distilled sets, whose total budget is not matched between \(m=2\) and \(m=5\), so \(\mathcal C_{\rm ind}\) and its ratio to \(\mathcal C_{\rm loc}\) confound source count with budget and are not interpreted.}
\label{tab:m5}
\centering
\small
\begin{tabular}{@{}lccc@{}}
\toprule
 & \(m=2\) & \(m=5\) & ratio\\
\midrule
\multicolumn{4}{@{}l}{\emph{Real partition and network only}}\\
\(\mathcal C_{\rm loc}\) (\(\times10^{-3}\)) & 8.09 \(\pm\) 1.17 & 14.1 \(\pm\) 2.3 & 1.75\\
\(\sqrt{P_{\rm comp}}\) (\(\times10^{-3}\)) & 12.3 \(\pm\) 2.1 & 25.8 \(\pm\) 2.2 & 2.09\\
\addlinespace
\multicolumn{4}{@{}l}{\emph{Distilled sets (IPC 10 per source; total budget grows with \(m\))}}\\
\(\widehat\Gamma_{\rm RMS}\) (\(\times10^{-3}\)) & 2.35 \(\pm\) 1.44 & 5.65 \(\pm\) 1.56 & 2.41\\
accuracy gain (pp) & \(+\)5.35 \(\pm\) 0.75 & \(+\)7.34 \(\pm\) 1.34 & 1.37\\
composed-set images & 176 \(\pm\) 19 & 376 \(\pm\) 32 & 2.14\\
\(\mathcal C_{\rm ind}\) (\(\times10^{-3}\)) & 9.85 \(\pm\) 1.33 & 12.2 \(\pm\) 1.5 & 1.24\\
\bottomrule
\end{tabular}
\end{table}

\FloatBarrier
\section{Limitations and Future Work}
\label{app:limitations}

\paragraph{Scope of the theory.}
The exact results concern full-batch, fixed-step gradient descent on
quadratic objectives (A1). They apply to a neural network only locally,
through a frozen quadratic model that requires a slowly varying Hessian
along the training trajectory (A4); Appendix \ref{app:assumption-support}
reports the measured Hessian variation. The remainder constant \(\nu\)
cannot be estimated reliably for a deep network, so the calibration of
\(P_{\rm comp}\) is checked empirically rather than certified. The
arbitrary-ratio expansion (Theorem \ref{thm:defect}) additionally assumes
a common optimum and \(\eta L<1\), whereas the distillation experiments
run at \(\eta L\approx1\) and rely only on the exact two-to-one identity.
For two-to-one compression, Lemma \ref{lem:optimizers} shows that
heavy-ball momentum and independent minibatches (in expectation) leave the
discrepancy \(\eta^2(V_H\theta_0-w)\) unchanged; general ratios under
momentum, adaptive preconditioners such as Adam, whose effective step
depends on the gradient history, and long horizons, over which the
curvature is path dependent, lie outside the model.

\paragraph{Scope of the evidence.}
The main evidence is the official-schedule comparison on two-source
CIFAR-10 partitions with its controls and the SVHN and CIFAR-100
replications (Appendices \ref{app:e2}--\ref{app:svhn}); the short-schedule
runs, the feature-skew family and the five-source condition are
preliminary (Appendix \ref{app:legacy}). One distillation method was used
at the official schedule, and the accuracy gain was evaluated on two
networks (Appendix \ref{app:crossarch}). Several controls rest on few
units: three SVHN no-split replicates, three CIFAR-100 partitions per
level, two partitions in the class-allocation-matched control and in the
IID feature-space diagnostics.

\paragraph{Open questions.}
Three questions remain open. First, how the theory extends to longer
horizons and other optimizers, where the linear superposition of source
statistics and the frozen-quadratic approximation both fail. Second,
whether a distillation method can bring the source error on a network
into the regime of the law: the inequality
\(\mathcal C_{\rm struct}>\mathcal U_{\rm src}\) of Corollary
\ref{cor:detectability} was not satisfied at the reference settings with
either method (Appendices \ref{app:e10} and \ref{app:e9}), the few
initializations that satisfy it at \(16\eta_{\rm ref}\) do so where the
local prediction itself degrades, and the budget sweep does not settle
whether the composed error stabilizes near \(\mathcal C_{\rm loc}\)
(Appendix \ref{app:regime}). Third, what produces the accuracy advantage
of joint distillation: its dependence on heterogeneity differs between
datasets and networks (Appendices \ref{app:svhn} and
\ref{app:crossarch}), warm-starting joint distillation from the union recovers only a small
part of it and class reweighting gives no detectable accuracy change
(Appendices \ref{app:warm} and \ref{app:reweight}), and the
feature-space diagnostics of Appendix \ref{app:coordination} are
consistent with a coordination deficit between sets optimized without
knowledge of each other without identifying the cause.

\paragraph{Future work.}
Four directions follow: the endpoint-level dissociation at ImageNet
scale; a composition-aware objective that adds the penalty
\(\mathcal R_{\rm comp}=\E_{\theta_0}\|\TrainMap^{K,\eta}_{\oplus_i\alpha_iS_i}(\theta_0)
-\TrainMap^{T,\eta}_{\oplus_i\alpha_iD_i}(\theta_0)\|_2^2\) to source-wise
fidelity through a surrogate in the synthetic samples; a budget-matched
study of the number of sources (Appendix \ref{app:m5}); and a coordination
mechanism that does not pool data.

\end{document}